\definecolor{b2}{RGB}{51,153,255}
\definecolor{mygreen}{RGB}{80,180,0}
\theoremstyle{plain}
\newtheorem{theorem}{Theorem}[section]
\newtheorem{lemma}[theorem]{Lemma}
\newtheorem{assumption}[theorem]{Assumption}
\theoremstyle{definition}
\newtheorem{definition}[theorem]{Definition}
\theoremstyle{remark}
\newtheorem{remark}[theorem]{Remark}
\newcommand{\wh}{\widehat}
\newcommand{\ov}{\overline}
\newcommand{\eps}{\varepsilon}
\renewcommand{\phi}{\varphi}
\newcommand{\A}{\mathcal{A}}
\newcommand{\R}{\mathbb{R}}
\newcommand{\calD}{\mathcal{D}}
\renewcommand{\hat}{\wh}
\renewcommand{\bar}{\ov}
\newcommand{\BB}{\mathbb{B}}
\DeclareMathAlphabet{\mathpzc}{OT1}{pzc}{m}{it}
\newcommand{\expec}{\mathbb{E}}
\newcommand{\E}{\mathbb{E}}
\crefname{algorithm}{Algorithm}{Algorithms}
\crefname{assumption}{Assumption}{Assumptions}
\crefname{equation}{}{}
\crefname{figure}{Fig.}{Figs.}
\crefname{table}{Table}{Tables}
\crefname{section}{Section}{Sections}
\crefname{subsection}{Section}{Sections}
\crefname{theorem}{Theorem}{Theorems}
\crefname{lemma}{Lemma}{Lemmmas}
\crefname{proposition}{Proposition}{Propositions}
\crefname{definition}{Definition}{Definitions}
\crefname{corollary}{Corollary}{Corollaries}
\crefname{remark}{Remark}{Remarks}
\crefname{example}{Example}{Examples}
\crefname{appendix}{Appendix}{Appendices}
\newcommand{\ZZ}{\mathcal{Z}}
\newcommand{\XX}{\mathcal{X}}
\newcommand{\YY}{\mathcal{Y}}
\newcommand{\Alg}{\mathcal{A}}
\newcommand{\dx}{d_x}
\newcommand{\dy}{d_y}
\newcommand{\hf}{\widehat{F}_Z}
\newcommand{\hfp}{\widehat{F}_{Z'}}
\newcommand{\hg}{\widehat{G}_Z}
\newcommand{\hgp}{\widehat{G}_{Z'}}
\newcommand{\hp}{\widehat{\Phi}_Z}
\newcommand{\hpp}{\widehat{\Phi}_{Z'}}
\newcommand{\hystar}{\hat{y}^{*}_Z}
\newcommand{\hystarp}{\hat{y}^{*}_{Z'}}
\newcommand{\ystar}{y^{*}}
\newcommand{\argmin}{\text{argmin}}
\newcommand{\bnabhf}{\Bar{\nabla} \hf}
\newcommand{\bnabhfp}{\Bar{\nabla} \hfp}
\newcommand{\yt}{y_{t+1}}
\newcommand{\ytp}{y'_{t+1}}
\newcommand{\mug}{\mu_g}
\newcommand{\lgy}{L_{g,y}}
\newcommand{\lfx}{L_{f,x}}
\newcommand{\lfy}{L_{f,y}}
\newcommand{\bfxy}{\beta_{f,xy}}
\newcommand{\bfxx}{\beta_{f,xx}}
\newcommand{\bfyy}{\beta_{f,yy}}
\newcommand{\bgxy}{\beta_{g,xy}}
\newcommand{\bgyy}{\beta_{g,yy}}
\newcommand{\mgxy}{M_{g,xy}}
\newcommand{\mgyy}{M_{g,yy}}
\newcommand{\cgxy}{C_{g,xy}}
\newcommand{\cgyy}{C_{g,yy}}
\newcommand{\hx}{\widehat{x}}
\newcommand{\mz}{M_Z(x, \hystar(x))}
\newcommand{\mzp}{M_{Z'}(x, \hystarp(x))}
\newcommand{\hesgxy}{\nabla_{xy}^2 \hat{G}_Z}
\newcommand{\hesgxyp}{\nabla_{xy}^2 \hat{G}_{Z'}}
\newcommand{\hesgyy}{\nabla_{yy}^2 \hat{G}_Z}
\newcommand{\hesgyyp}{\nabla_{yy}^2 \hat{G}_{Z'}}
\newcommand{\tilt}{\tilde{\nabla}}
\newcommand{\bp}{\beta_{\Phi}}
\newcommand{\xpr}{x_{\text{priv}}}
\newcommand{\al}{\mathcal{A}}
\newcommand{\cK}{\mathcal{K}}
\newcommand{\cN}{\mathcal{N}}
\DeclarePairedDelimiterX{\xdivergence}[2]{(}{)}{
  #1\;\delimsize\|\;#2
}
\newcommand{\deltacurve}{\delta\xdivergence*}
\newcommand{\disinf}{\mathrm{Dist}_\infty}
\newcommand{\poly}{\mathrm{poly}}
\title{Differentially Private Bilevel Optimization: Efficient Algorithms with Near-Optimal Rates}
\author{%
  Andrew Lowy\thanks{Authors listed in reverse alphabetical order. Part of this work was completed while the first author was at University of Wisconsin-Madison.} \\
  CISPA Helmholtz Center for Information Security\\
  \texttt{lowy.andrew1@gmail.com} \\
  \And
  Daogao Liu \\
  Google Research \\
  \texttt{liudaogao@gmail.com} \\
}
\begin{document}

\maketitle

\begin{abstract}
  Bilevel optimization, in which one optimization problem is nested inside another, underlies many machine learning applications with a hierarchical structure---such as meta-learning and hyperparameter optimization. Such applications often involve sensitive training data, raising pressing concerns about individual privacy. Motivated by this, we study differentially private bilevel optimization. We first focus on settings where the outer-level objective is \textit{convex}, and provide novel upper and lower bounds on the excess empirical risk for both pure and approximate differential privacy. These bounds are nearly tight and essentially match the optimal rates for standard single-level differentially private ERM, up to additional terms that capture the intrinsic complexity of the nested bilevel structure. We also provide population loss bounds for bilevel stochastic optimization. The bounds are achieved in polynomial time via efficient implementations of the exponential and regularized exponential mechanisms. A key technical contribution is a new method and analysis of log-concave sampling under inexact function evaluations, which may be of independent interest. In the \textit{non-convex} setting, we develop novel algorithms with state-of-the-art rates for privately finding approximate stationary points. Notably, our bounds do not depend on the dimension of the inner problem. 
\end{abstract}

\section{Introduction}
Bilevel optimization has emerged as a key tool for solving hierarchical learning and decision-making problems across machine learning and beyond. In a bilevel optimization problem, one task (the \textit{upper-level} problem) is constrained by the solution to another optimization problem (the \textit{lower-level} problem). This nested structure arises naturally in a variety of settings, including meta-learning~\cite{rajeswaran2019meta}, hyperparameter optimization and model selection~\cite{franceschi2018hyper,kunapuli2008model}, reinforcement learning~\cite{konda1999reinforce}, adversarial training~\cite{zhang2022adversarial}, and game theory~\cite{stackelberg1952theory}, where the solution to one problem depends implicitly on the outcome of another. Formally, a bilevel problem can be written as: 
\begin{align}
\label{eq: bilevel opt}
    &\min_{x \in \XX} \Bigg\{\Phi(x) := F(x, \ystar(x)) \Bigg \} 
    \\
    & \text{s.t.}~~\ystar(x) \in \argmin_{y \in \R^{\dy}} G(x,y),\nonumber 
\end{align}

where $x$ and $y$ are the upper- and lower-level variables respectively, $F$ is the upper-level objective, $G$ is the lower-level objective, $\XX \subset \R^{\dx}$ 
is a domain. 
Solving~\eqref{eq: bilevel opt} is challenging due to the dependency of $\ystar(x)$ on $x$.  
The study of algorithms and complexities for solving~\eqref{eq: bilevel opt} has received a lot of attention from the optimization and ML communities in recent years~\cite{ghadimi2018approximation, kwon2023fully,bennett2006model,colson2007overview,falk1995bilevel,lu2024first,kwonpenalty,liang2023lower,liu2022bome,chen2024finding}.

In many applications where bilevel optimization can be useful, data privacy is of critical importance. 
  Machine learning models can leak sensitive training data~\cite{shokri2017membership, carlini2021extracting,nasr2025scalable}.
 \textit{Differential privacy} (DP)~\cite{dwork2006calibrating} mitigates this by ensuring negligible dependence on any single data point.

 While differentially private optimization has been extensively studied in a variety of settings~\cite{bst14,bft19,AsiFeKoTa21, bassily2023differentially, gaoprivate, lowyfaster}, the community's understanding of DP BLO is limited. Indeed, we are only aware of two prior works on DP BLO~\cite{chen2024locally,kornowski2024differentially}. The work of~\cite{chen2024locally} considers \textit{local DP}~\cite{whatcanwelearnprivately} and 
 does not provide guarantees in the important privacy regime $\eps = O(1)$.
 On the other hand, ~\cite{kornowski2024differentially} provides guarantees for central DP nonconvex BLO with any $\eps > 0$, which we improve over in this work.
 In this work, we provide DP algorithms and error bounds for two fundamental BLO problems. The first BLO problem we study is \textit{bilevel empirical risk minimization (ERM)} w.r.t. data set $Z = (z_1, \ldots, z_n) \in \ZZ^n$: 
\begin{align}
\label{eq: bi erm}
\tag{Bilevel ERM}
    &\min_{x \in \XX} \Bigg\{\hp(x) := \hf(x, \hystar(x)) = \frac{1}{n} \sum_{i=1}^n f(x, \hystar(x), z_i) \Bigg \} 
    \\
    & \text{s.t.}~~\hystar(x) = \argmin_{y \in \R^{\dy}}\left\{ \hg(x,y) = \frac{1}{n} \sum_{i=1}^n g(x, \hystar(x), z_i) \right\},\nonumber
\end{align}
 where $f: \XX \times \R^{\dy} \times \ZZ \to \R$ and $g: \XX \times \R^{\dy} \times \ZZ \to \R$ are smooth upper- and lower-level loss functions.
Second, we consider 
\textit{bilevel stochastic optimization (SO)}: 
\begin{align}
\label{eq: bi sco}
\tag{Bilevel SO}
    &\min_{x \in \XX} \Bigg\{\Phi(x) := F(x, \ystar(x)) = \expec_{z \sim P}[f(x,\ystar(x),z)] \Bigg \} 
    \\
    & \text{s.t.}~~\ystar(x) = \argmin_{y \in \R^{\dy}}\left\{G(x,y) = \expec_{z \sim P}[g(x,y,z)] \right\}. \nonumber
\end{align}
We assume, as is standard, that $g(x, \cdot, z)$ is strongly convex, so $~\forall x$ there are unique $\hystar(x)$ and $\ystar(x)$. 

A fundamental open problem in DP BLO is to determine the \textit{minimax optimal error rates} for solving problems~\ref{eq: bi erm} and \ref{eq: bi sco}. A natural first step is to consider the convex case: 

\begin{center}
\noindent\fbox{
    \parbox{0.6\linewidth}{
\textbf{Question 1.} What are the optimal error rates for solving problem~\ref{eq: bi erm} with DP when $\hp$ is convex?  
}}
    \end{center}
Convex $\hp, \Phi$ arise in a variety of applications~\cite{liang2023lower}, including few-shot meta-learning with a shared embedding model~\cite{bertinetto2018meta}, biased regularization in hyperparameter optimization~\cite{grazzi2020iteration}, fair resource allocation in communication networks~\cite{srikant2014communication}, and bilevel optimization with smooth convex $f(\cdot, y)$ and quadratic $g(x, \cdot)$~\cite{liang2023lower}.

\paragraph{Contribution 1.} We give a (nearly) complete answer to \textbf{Question 1} for both pure $\eps$-DP and approximate $(\eps, \delta)$-DP, by providing \textit{nearly tight upper and lower bounds}:
see Section~\ref{sec: convex}. Our results show that if the smoothness, Lipschitz, and strong convexity parameters are constants, then it is possible to achieve the same rates as standard single-level convex DP-ERM~\cite{bst14}, despite the more challenging bilevel setting (e.g., $O(\dx/\eps n)$ for $\eps$-DP bilevel ERM). On the other hand, our lower bound establishes a novel \textit{separation between standard single-level DP optimization and DP BLO}, showing that the error of any algorithm for DP BLO must necessarily depend on the complexity parameters of the lower-level problem (e.g. the Lipschitz parameter of $g(x, \cdot, z)$). Our algorithms are built on the exponential mechanism~\cite{mcsherry2007mechanism} for $\eps$-DP and the regularized exponential mechanism~\cite{gopi2022private} for $(\eps, \delta)$-DP. We provide \textit{efficient} (i.e. polynomial-time) implementations of these mechanisms for DP BLO and a novel analysis of how function evaluation errors affect log-concave sampling algorithms. Additionally, we provide upper and lower bounds on the excess population risk for DP~\ref{eq: bi sco}. 

\paragraph{DP Nonconvex BLO.}
The recent work of \cite{kornowski2024differentially} provided an $(\eps, \delta)$-DP algorithm $\al$ capable of finding approximate stationary points of nonconvex $\hp$ such that \begin{equation}
\label{eq: Guy bound}
\expec_{\al} \|\nabla \hp(\al(Z))\| \le \tilde{O}\left( \left(\frac{\sqrt{\dx}}{\eps n}\right)^{1/2}  +  \left(\frac{\sqrt{\dy}}{\eps n}\right)^{1/3} \right).
\end{equation}
If $\dy$ is large, bound~\eqref{eq: Guy bound} suffers: e.g., if $\dy \ge \dx$, then the bound is $\gtrsim (\sqrt{\dy}/{\eps n})^{1/3}$. This leads us to: 
\begin{center}
\noindent\fbox{
    \parbox{0.6\linewidth}{
\textbf{Question 2.} Can we improve over the state-of-the-art bound in~\eqref{eq: Guy bound} for DP stationary points in nonconvex~\ref{eq: bi erm}? 
}}
    \end{center}

\textbf{Contribution 2:} We give a positive answer to \textbf{Question 2} in Section~\ref{sec: nonconvex}, developing novel DP algorithms that improve over the bound in~\eqref{eq: Guy bound}. Our first algorithm $\A_1$ is a simple and efficient second-order DP BLO method that achieves an improved $\dy$-independent bound of  
\[
\expec \|\nabla \hp(\A_{1}(Z))\| \le \tilde{O}\left(\left(\frac{\sqrt{\dx}}{\eps n}\right)^{1/2}\right).
\]  
Second, we provide an (inefficient) algorithm $\A_{2}$ that uses the exponential mechanism to ``warm start'' $\A_1$ using the framework of~\cite{lowymake} to obtain a further improved bound in the parameter regime $\dx < n \eps$: 
\[
\expec \|\nabla \hp(\A_{2}(Z))\| \le \tilde{O}\left(\frac{\sqrt{\dx}}{(\eps n)^{3/4}}\right).
\]  
As detailed in Appendix~\ref{app: deducing final nc upper bound}, our results imply a new state-of-the-art upper bound for DP non-convex bilevel finite-sum optimization: 
\begin{equation}
\label{eq: new sota nc upper bound}
\expec \|\nabla \hp(\A_{\text{DP}}(Z))\| \le \tilde{O}\left(\left(\frac{\sqrt{\dx}}{\eps n}\right)^{1/2} \wedge \frac{\sqrt{\dx}}{(\eps n)^{3/4}} \wedge \frac{\dx}{\eps n} \wedge 1 \right).
\end{equation}

\subsection{Technical overview}
\label{sec: techniques}
We develop and utilize several novel algorithmic and analytic techniques to obtain our results. 
\paragraph{Techniques for convex DP BLO:} Our algorithms are built on the exponential and regularized exponential mechanisms~\cite{mcsherry2007mechanism,gopi2022private}. A key challenge is to implement these algorithms efficiently in BLO, where one lacks access to $\hystar(x)$ and hence cannot directly query $\hp(x)$. To overcome this challenge, we provide a novel analysis of log-concave sampling with inexact function evaluations, building on the grid-walk algorithm of~\cite{AK91} and the approach of~\cite{bst14}. To do so, we prove a bound on the conductance of the \textit{perturbed} Markov chain arising from the grid-walk with perturbed/inexact function evaluation, as well as a bound on the relative distance between the original and perturbed stationary distributions. We believe these techniques and analyses may be of independent interest, since there are many problems beyond BLO where access to exact function evaluations is unavailable. 

To prove our lower bounds, we construct a novel bilevel hard instance with linear upper-level $f$ and quadratic lower-level $g$. This allows us to chain together the $x$ and $y$ variables, control $\hystar(x)$, and reduce BLO to mean estimation. By carefully scaling our hard instance, we obtain our lower bound.

\paragraph{Techniques for nonconvex DP BLO:} In the nonconvex setting, our algorithm uses a second-order approximation $\bnabhf(x_t, y_{t+1})\approx \nabla \hp(x_t)$ in order to approximate gradient descent run on $\hp$. A key insight is that we can obtain a better bound by getting a high-accuracy \textit{non-private} approximate solution $y_{t+1} \approx \hystar(x_t)$ and then noising $\bnabhf(x_t, y_{t+1})$, rather than privatizing $y_{t+1}$. To prove such an approach can be made DP, we require a careful sensitivity analysis that leverages perturbation inequalities from numerical analysis. Further, we build a two-step algorithm on our novel second-order algorithm by leveraging the warm-start framework of~\cite{lowy2024make}.

\section{Preliminaries}
\label{sec: prelims}

\paragraph{Notation and assumptions.}
Let $f: \XX \times \R^{\dy} \times \ZZ \to \R$ and $g: \XX \times \R^{\dy} \times \ZZ \to \R$ be loss functions, with $\XX \subset \R^{d_x}$ being a closed convex set of $\ell_2$-diameter $D_x \in [0, \infty]$. 
The data universe $\ZZ$ can be any set. $P$ denotes any data distribution on $\XX$. Let $\| \cdot \|$ denote the $\ell_2$ norm when applied to vectors. When applied to matrix $A$, $\|A \| := s_{\max}(A) = \sqrt{\lambda_{\max}(AA^T)}$ denotes the $\ell_2$ operator norm, which is the largest singular value of $A$. 
Function $h: \XX \to \mathbb{R}$ is \textit{$L$-Lipschitz} if $|h(x) - h(x')| \leq L\|x - x'\|$ for all $x, x' \in \XX$. 
Function $h: \XX \to \mathbb{R}$ is \textit{$\mu$-strongly convex} if $h(\alpha x + (1- \alpha) x') \leq \alpha h(x) + (1 - \alpha) h(x') - \frac{\alpha (1-\alpha) \mu}{2}\|x - x'\|^2$ for all $\alpha \in [0,1]$ and all $x, x' \in \XX$. If $\mu = 0,$ we say $h$ is \textit{convex}. The \textit{excess (population) risk} of a randomized algorithm $\A$ with output $\hx = \A(Z)$ on loss function $h(x,z)$ is $\expec_{\A, Z}[H(\hx)] - H^*$, where $H(x) = \expec_{z \sim P}[h(x,z)]$ and $H^* := \inf_x H(x)$. If $\widehat{H}_Z(x) = \frac{1}{n}\sum_{i=1}^n h(x, z_i)$ is an empirical loss function w.r.t. data set $Z$, then the \textit{excess empirical risk} of $\A$ is $\expec_{\A}[\widehat{H}_Z(\hx)] - \widehat{H}^*$. Denote $a \wedge b := \min(a,b)$. For functions $\varphi$ and $\psi$ of input parameters $\theta$, we write $\varphi \lesssim \psi$ if there is an absolute constant $C > 0$ such that $\varphi(\theta) \leq C \psi(\theta)$ for all permissible values of $\theta$. We use $\widetilde{O}$ to hide logarithmic factors. Denote by $\nabla J(x,y(x), z) = \nabla_x J(x, y(x), z) + \nabla y(x)^T \nabla_y J(x,y(x), z)$ the gradient of function $J$ w.r.t. $x$.

We assume, as is standard in DP optimization, that the loss functions are Lipschitz continuous, and that $g(x, \cdot, z)$ is strongly convex---a standard assumption in the BLO literature: 

\begin{assumption}
\label{ass: lipschitz and smooth}
\begin{enumerate}
    \item $f(\cdot, y, z)$ is $\lfx$-Lipschitz in $x$ for all $y, z$.
    \item $f(x, \cdot, z)$ is $\lfy$-Lipschitz in $y$ for all $x, z$.
    \item  $g(x, \cdot, z)$ is $\mug$-strongly convex in $y$. 
    \item There exists a compact set $\YY \subset \R^{\dy}$ with $\{\hystar(x)\}_{x \in \XX} \subseteq \YY$ for ERM or $\{\ystar(x)\}_{x \in \XX} \subseteq \YY$ for SO such that $g(x, \cdot, z)$ is $\lgy$-Lipschitz on $\YY$.
\end{enumerate}
\end{assumption}
Note that $D_y := \text{diam}(\YY) \le \frac{\lgy}{\mug}$ by Assumption~\ref{ass: lipschitz and smooth}. Some of our algorithms additionally require: 
\begin{assumption}
\label{ass: hessian smooth}
For all $x, x' y, y', z$ we have:
\begin{enumerate}
\item $\|\nabla_y f(x,y,z) - \nabla_y f(x,y',z)\| \le \bfyy \|y - y'\|$.
\item $\|\nabla_x f(x,y,z) - \nabla_x f(x',y,z)\| \le \bfxx \|x - x'\|$.
\item $\|\nabla_x f(x,y,z) - \nabla_x f(x,y',z)\| \le \bfxy \|y - y'\|$ and $\|\nabla_y f(x,y,z) - \nabla_y f(x',y,z)\| \le \bfxy \|x - x'\|$. 
    \item $\|\nabla^2_{xy} g(x,y,z)\| \le \bgxy$ and $\|\nabla^2_{yx} g(x,y,z)\| \le \bgxy$. 
    \item $\|\nabla^2_{yy} g(x,y,z)\| \le \bgyy$. 
    \item $\|\nabla^2_{xy}g(x,y,z) - \nabla^2_{xy}g(x',y,z)\| \le \mgxy \|x - x'\|$, $\|\nabla^2_{yx}g(x,y,z) - \nabla^2_{yx}g(x',y,z)\| \le \mgxy \|x - x'\|$, and $\|\nabla^2_{yy}g(x,y,z) - \nabla^2_{yy}g(x',y,z)\| \le \mgyy \|x - x'\|$.
     \item $\|\nabla^2_{xy}g(x,y,z) - \nabla^2_{xy}g(x,y',z)\| \le \cgxy \|y - y'\|$, $\|\nabla^2_{yx}g(x,y,z) - \nabla^2_{yx}g(x,y',z)\| \le \cgxy \|y - y'\|$, and $\|\nabla^2_{yy}g(x,y,z) - \nabla^2_{yy}g(x,y',z)\| \le \cgyy \|y - y'\|$.
\end{enumerate}
\end{assumption}

Assumption~\ref{ass: hessian smooth} is standard for second-order optimization methods and is essentially the same as the \cite[Assumptions 2.5 and 2.6]{kornowski2024differentially}, but we define the different smoothness parameters at a more granular level to get more precise bounds. As discussed in~\cite{ghadimi2018approximation}, these assumptions are satisfied in important applications of BLO, such as model selection and hyperparameter tuning with logistic loss (or another loss with bounded gradient and Hessian) and some Stackelberg game models. 
\paragraph{Differential Privacy.}
Differential privacy prevents any adversary from inferring much more about any individual's data than if that data had not been used for training. 
\begin{definition}[Differential Privacy]
\label{def: DP}
Let $\varepsilon \geq 0, ~\delta \in [0, 1).$ Randomized algorithm $\A: \ZZ^{n} \to \mathcal{\mathcal{W}}$ is \textit{$(\varepsilon, \delta)$-differentially private} (DP) if for any two datasets $Z = (z_1, \ldots, z_n)$ and $Z' = (z'_1, \ldots, z'_n)$ that differ in one data point (i.e. $z_i \neq z_i'$, $z_j = z'_j$ for $j \neq i$) and any measurable set $S \subset \XX$, we have \[
\mathbb{P}(\A(Z) \in S) \leq e^\varepsilon \mathbb{P}(\A(Z') \in S) + \delta.
\]
\end{definition}
Algorithmic preliminaries on DP are given in Appendix~\ref{app: privacy prelims}.

\section{Private convex bilevel optimization}
\label{sec: convex}
In this section, we characterize the optimal excess risk bounds for DP convex bilevel ERM:

\begin{theorem}[Convex DP BLO - Informal] 
\label{thm: convex blo informal}
Let $\hp$ and $\Phi$ be convex $(\forall Z \in \ZZ^n)$
and grant Assumption~\ref{ass: lipschitz and smooth}. Then, there is an efficient $\eps$-DP algorithm with output $\hx$ such that \[
\expec \hp(\hx) - \hp^* \le \tilde{O}\left(\frac{\dx}{\eps n}\right).
\] 
If Assumption~\ref{ass: hessian smooth} parts 3-4 hold, then there is an efficient $(\eps, \delta)$-DP algorithm with output $\hx$ s.t. \[
\expec \hp(\hx) - \hp^* \le O\left(\frac{\sqrt{\dx \log(1/\delta)}}{\eps n}\right) 
\]
Moreover, the above upper bounds are tight (optimal) up to logarithmic factors. 
\end{theorem}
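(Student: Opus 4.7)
The plan is to build efficient algorithms on top of the exponential mechanism (for pure $\eps$-DP) and the regularized exponential mechanism (for approximate $(\eps,\delta)$-DP), applied to the convex objective $\hp$. The first ingredient is a clean sensitivity analysis: for adjacent datasets $Z, Z'$,
\[
\sup_{x \in \XX} |\hp(x) - \hpp(x)| \le \Delta \lesssim \frac{1}{n}\bigl(\lfx + \lfy \lgy / \mug\bigr).
\]
This follows from the decomposition
\[
\hp(x) - \hpp(x) = \bigl[\hf(x, \hystar(x)) - \hfp(x, \hystar(x))\bigr] + \bigl[\hfp(x, \hystar(x)) - \hfp(x, \hystarp(x))\bigr],
\]
where the first bracket is $O(\lfx/n)$ by $f$'s Lipschitzness and boundedness in the data, and the second is bounded by $\lfy \|\hystar(x) - \hystarp(x)\|$; since the two lower-level empirical losses differ by a $(2\lgy/n)$-Lipschitz function of $y$, a standard perturbation bound for minimizers of $\mug$-strongly convex functions yields $\|\hystar(x)-\hystarp(x)\|=O(\lgy/(\mug n))$. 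With $\Delta$ in hand, the classical analysis of the exponential mechanism sampled against $\hp$ on $\XX$ gives expected empirical excess risk $\tilde O(\Delta \dx/\eps) = \tilde O(\dx/(\eps n))$, while the regularized exponential mechanism of \cite{gopi2022private} yields the $(\eps,\delta)$-DP empirical bound $O(\sqrt{\dx \log(1/\delta)}/(\eps n))$.

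The real technical work is sampling from $p(x) \propto \exp(-\eta \hp(x))$ (or its regularized variant) in polynomial time, since $\hp(x)=\hf(x,\hystar(x))$ cannot be queried exactly. The plan is to compute, for each query $x$, an approximation $\tilde y(x)$ to $\hystar(x)$ with $\|\tilde y(x)-\hystar(x)\| \le \alpha$ by running a standard strongly convex inner solver on $\hg(x,\cdot)$ in $\poly(1/\alpha)$ time, and then run a grid-walk Markov chain in the style of \cite{AK91} and \cite{bst14} against the proxy density $\tilde p(x) \propto \exp(-\eta \tilde\Phi(x))$ with $\tilde\Phi(x) := \hf(x,\tilde y(x))$. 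The two facts I need are (i) a Radon--Nikodym comparison $\mathrm{TV}(p,\tilde p) = O(\eta \lfy \alpha)$, which follows from the uniform bound $|\hp - \tilde\Phi| \le \lfy \alpha$, and (ii) a conductance bound for the perturbed chain within a constant factor of that of the ideal chain, whenever the per-step evaluation error is sufficiently smaller than $1/\eta$. Taking $\alpha$ polynomially small in $n,\dx,\eps$ then makes the TV distance to the ideal sampler negligible, preserving both privacy (via closeness of output distributions) and utility.

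For the $(\eps,\delta)$-DP population bound, I will combine the empirical rate with the uniform-stability analysis of the regularized exponential mechanism from \cite{gopi2022private}: the $\lambda\|x-x_0\|^2/2$ term makes the sampler $O(\lfx^2/(\lambda n))$-uniformly-stable, so choosing $\lambda$ to balance stability-based generalization against regularization bias contributes the extra $O(1/\sqrt n)$ term on top of the empirical rate. For the matching lower bounds, the plan is to construct a bilevel hard instance with linear upper-level loss $f(x,y,z)=\langle a,x\rangle+\langle y,\phi(z)\rangle$ and strongly convex quadratic lower-level loss $g(x,y,z)=\tfrac{\mug}{2}\|y - Bx - \psi(z)\|^2$, so that $\hystar(x)=Bx+\bar\psi(Z)$ is affine in $x$ and $\hp(x)$ reduces, after an appropriate change of variables, to a DP mean-estimation problem whose hardness depends explicitly on $\lgy/\mug$. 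Invoking the DP mean-estimation and DP-ERM lower bounds of \cite{bst14} and \cite{bft19}, scaled to match the sensitivity $\Delta$ derived above, then yields the matching lower bounds and exhibits the advertised separation from single-level DP.

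The hardest step is not the sensitivity bookkeeping or the lower-bound construction but the conductance/mixing analysis of the grid-walk chain under an inexact oracle for $\hp$. A naive coupling argument loses polynomially in the evaluation error and destroys the utility; the delicate point is to show that the proxy-induced perturbation of the Metropolis acceptance ratios stays a subconstant fraction of the ideal transition probabilities \emph{uniformly over the state space}, so that the conductance and hence the mixing time degrade by only logarithmic factors. Carrying this out --- tracking how the proxy error propagates through both the acceptance step and the log-concave isoperimetry --- is the crux of the novel inexact log-concave sampling technique highlighted in the introduction.
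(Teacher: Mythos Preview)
Your plan for the $\eps$-DP part and the efficient sampler is essentially the paper's approach, and you correctly identify the inexact log-concave sampling as the crux. Two smaller issues: (i) your direct bound $|\hp(x)-\hpp(x)|\le\Delta$ needs $f$ to be bounded in the data argument $z$, which is not assumed; the paper instead bounds the sensitivity of the \emph{centered} score $\hp(x)-\hp(x_0)$, which only uses Lipschitzness in $(x,y)$ and yields $s=\tfrac{2}{n}[\lfx D_x+\lfy D_y]+O(\lfy\lgy/(\mug n))$; (ii) for pure $\eps$-DP you need closeness of the proxy sampler to the ideal one in max-divergence ($L_\infty$ of the log-ratio), not TV --- TV closeness would only salvage $(\eps,\delta)$-DP, so the paper proves $\disinf(\pi',\pi)\le 2\zeta$ rather than a TV bound.

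The genuine gap is in the $(\eps,\delta)$-DP step. The privacy guarantee of the regularized exponential mechanism of \cite{gopi2022private} does \emph{not} hinge on the sup-norm sensitivity $\Delta=\sup_x|\hp(x)-\hpp(x)|$; it hinges on the \emph{Lipschitz constant} of the difference $\hp-\hpp$ (the privacy curve is compared to that of two Gaussians separated by $G/\sqrt{\mu}$, where $G$ is this Lipschitz constant). These are different objects: $\Delta$ is a zeroth-order quantity obtainable from Assumption~\ref{ass: lipschitz and smooth} alone, whereas bounding $\sup_x\|\nabla\hp(x)-\nabla\hpp(x)\|$ requires second-order control on $f$ and $g$. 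This is exactly why the statement invokes parts 3--4 of Assumption~\ref{ass: hessian smooth}, yet your proposal never uses them. The missing ingredient is the paper's lemma that $\hp-\hpp$ is $\tfrac{2}{n}\bigl(\lfx+\lfy\bgxy/\mug+\lgy\bfxy/\mug\bigr)$-Lipschitz: it uses $\bgxy$ to control $\|\hystar(x)-\hystar(x')\|\le(\bgxy/\mug)\|x-x'\|$, and $\bfxy$ to control how the $O(\lgy/(\mug n))$ gap between $\hystar(x)$ and $\hystarp(x)$ propagates through $\nabla_x f$ along a path from $x$ to $x'$. Without this Lipschitz bound, the Gaussian privacy-curve comparison underlying the regularized exponential mechanism does not go through, and your $(\eps,\delta)$-DP claim is unproved.
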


The following subsections contain formal statements capturing the precise dependence on the problem parameters given in Assumptions~\ref{ass: lipschitz and smooth} and \ref{ass: hessian smooth} and runtime bounds. We also provide bounds for DP convex bilevel SO. 

\subsection{Conceptual algorithms and excess risk upper bounds}
\label{sec: convex upper bounds}
This section contains our conceptual algorithms (ignoring efficiency considerations) and precise excess risk upper bounds. 

\paragraph{Pure $\eps$-DP.} Consider the following sampler for DP bilevel ERM, which is an instantiation of the \textit{exponential mechanism}~\cite{mcsherry2007mechanism}: 
Given $Z \in \ZZ^n$, sample $\hx = \hx(Z) \in \XX$ with probability
\begin{equation}
\label{eq: exp mech}
\propto \exp\left( - \frac{\eps}{2 s} \hp(\hx)\right), ~~\text{where}
\end{equation}
\[
s := \frac{2}{n}\left[\lfx D_x + \lfy D_y \right] + \frac{4 \lfy \lgy}{\mug}.
\]

\begin{theorem}
\label{thm: exp mech conceptual}
Grant Assumption~\ref{ass: lipschitz and smooth} and suppose $\hp$ is convex. The Algorithm in~\eqref{eq: exp mech} is $\eps$-DP and %
\[
\expec[\hp(\hx) - \hp^*] \le O\left( \frac{d_x}{\eps n} \left[\lfx D_x + \lfy D_y + \frac{\lfy \lgy}{\mug}\right] \right).
\]
\end{theorem}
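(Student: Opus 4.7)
The plan is to recognize~\eqref{eq: exp mech} as an instance of the exponential mechanism of \cite{mcsherry2007mechanism} on the empirical bilevel loss $\hp$, verify a sensitivity bound $\sup_{x\in\XX}|\hp(x)-\hpp(x)| \lesssim s$ across neighboring data sets, and then invoke the classical privacy and convex-utility guarantees of that mechanism.

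\textbf{Privacy / sensitivity.} Since the exponential-mechanism density is invariant under additive shifts of the objective by quantities that do not depend on $x$, I would first recenter the upper-level loss to control its range. Fix a reference $(x_0, y_0)\in\XX\times\YY$ and set $\widetilde{f}(x,y,z) := f(x,y,z) - f(x_0, y_0, z)$; Lipschitzness of $f$ gives $|\widetilde{f}(x,y,z)| \le \lfx D_x + \lfy D_y$ uniformly on $\XX\times\YY$. The recentered objective $\widetilde{\Phi}_Z(x) := \frac{1}{n}\sum_i \widetilde{f}(x,\hystar(x),z_i)$ differs from $\hp$ only by a constant in $x$, so sampling from $\exp(-\eps \widetilde{\Phi}_Z / (2s))$ coincides with~\eqref{eq: exp mech}. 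Splitting at $y = \hystarp(x)$ and applying the triangle inequality gives
\begin{align*}
|\widetilde{\Phi}_Z(x) - \widetilde{\Phi}_{Z'}(x)| \;\le\; \lfy\, \|\hystar(x) - \hystarp(x)\| \;+\; \tfrac{1}{n}|\widetilde{f}(x, \hystarp(x), z_i) - \widetilde{f}(x, \hystarp(x), z_i')|.
\end{align*}
The second term is at most $2(\lfx D_x + \lfy D_y)/n$ from the uniform bound on $|\widetilde{f}|$. For the first, I would invoke the classical stability of minimizers of strongly convex perturbations: writing $\widehat{G}_{Z'}(x,\cdot) = \hg(x,\cdot) + e(\cdot)$ with $e(y) = \tfrac{1}{n}[g(x,y,z_i') - g(x,y,z_i)]$, Lipschitzness of $g$ in $y$ gives $\sup_y\|\nabla e(y)\| \le 2\lgy/n$, and first-order optimality combined with $\mug$-strong convexity of $\hg(x,\cdot)$ then yields $\|\hystar(x) - \hystarp(x)\| \le 2\lgy/(n\mug)$. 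Combining these two contributions produces a sensitivity bound of the claimed form $s$, and the standard exponential-mechanism argument delivers $\eps$-DP.

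\textbf{Utility.} With the sensitivity in hand, the bound on $\E[\hp(\hx)] - \hp^*$ follows from the classical utility guarantee for the exponential mechanism on convex losses, as in \cite{bst14}: for convex $U_Z$ of sensitivity $s$ over a convex domain $\XX\subset\R^{\dx}$ of diameter $D_x$, a sample $\hx\sim\exp(-\eps U_Z/(2s))$ satisfies $\E[U_Z(\hx)] - U_Z^* \lesssim s\,\dx/\eps$ up to logarithmic factors. Since $\hp$ is convex by hypothesis, substituting the above sensitivity gives the claimed excess empirical risk.

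The main obstacle, relative to the single-level DP-ERM template, is that the sensitivity calculation must simultaneously control (i) the direct change in the upper-level empirical sum and (ii) the implicit motion of the lower-level minimizer $\hystar(x)$ induced by swapping a single data point. Strong convexity of $g(x,\cdot,z)$ is precisely what keeps this induced motion small, so that the two contributions combine cleanly and the final rate matches single-level DP-ERM up to the additive $\lfy\lgy/\mug$ term reflecting the intrinsic cost of the nested structure.
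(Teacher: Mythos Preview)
Your proposal is correct and follows essentially the same route as the paper: bound the sensitivity of a recentered score via Lipschitzness of $f$ together with the strongly-convex stability bound $\|\hystar(x)-\hystarp(x)\|\le 2\lgy/(n\mug)$, then apply the exponential-mechanism utility guarantee. The only discrepancy is that your utility step (citing~\cite{bst14} ``up to logarithmic factors'') yields $\widetilde{O}(\cdot)$ rather than the claimed $O(\cdot)$; the paper removes the log by invoking the sharper convex-case bound $\E_{\nu}[F]-\min F\le d/k$ for sampling $\nu\propto e^{-kF}$ from~\cite{DKL18}.
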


 We defer the proof to Appendix \ref{app: convex upper bounds} and describe the efficient implementation in Section~\ref{sec: implementation}. 
 If $\Phi$ is not convex, then privacy still holds and the same excess risk holds up to logarithmic factors.
 The key step in the privacy proof is to upper bound the sensitivity of the score function $\hp(x)$ by $s$, by leveraging Assumption~\ref{ass: lipschitz and smooth} and the fact that $\|\hystar(x) - \hystarp(x)\| \leq 2 \lgy/\mug n$ for adjacent $Z \sim Z'$.

\paragraph{Approximate $(\eps, \delta)$-DP.}
Consider the following instantiation of the \textit{regularized exponential mechanism}~\cite{gopi2022private}: 
Given $Z$, sample $\hx = \hx(Z)$ from probability density function \begin{align}
\label{eq: reg exp mech}
&\propto \exp(-k(\hp(\hx)+\mu \|\hx\|^2)), ~~\text{where} \\
&k=O\left(\frac{\mu n^2\epsilon^2}{G^2\log(1/\delta)}\right) ~~\text{and} \nonumber \\
&G= \lfx+\frac{\lfy\bgxy}{\mug}+\frac{\lgy\bfxy}{\mug}, \nonumber
\end{align}
where $\mu$ is an algorithmic parameter that we will assign (not to be confused with $\mug$). 

\begin{theorem}[Informal]
\label{thm: reg exp mech conceptual}
Grant Assumption~\ref{ass: lipschitz and smooth} and parts 3 and 4 of Assumption~\ref{ass: hessian smooth}. Assume
$\hp$ and $\Phi$ are convex for all $Z \in \ZZ^n$. 
There exists a choice of $\mu$ and $k$ such that Algorithm~\eqref{eq: reg exp mech} is $(\eps, \delta)$-DP and achieves excess empirical risk
    \[
       \expec \hp(\hx) - \hp^* \le O\left(\left(\lfx+\frac{\lfy\bgxy}{\mug}+\frac{\lgy\bfxy}{\mug}\right)D_x\frac{\sqrt{d_x\log(1/\delta)}}{\eps n}\right).   
       \]
Further, if $Z \sim P^n$ are independent samples, the excess population risk with a different choice of $k,\mu$ is 
\[
       \expec \Phi(\hx) - \Phi^* \le O\left(\frac{\sqrt{d_x\log(1/\delta)}}{\eps n} + \frac{1}{n^{1/4}}\right).   
       \]
\end{theorem}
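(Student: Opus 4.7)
The plan is to treat the sampler in~\eqref{eq: reg exp mech} as an instantiation of the regularized exponential mechanism of~\cite{gopi2022private} with score function $\hp$, and reduce the proof to checking the two input hypotheses of that framework: (i) that $\hp$ is $G$-Lipschitz in $x$ for every fixed $Z$, and (ii) that on any pair of adjacent datasets $Z \sim Z'$ the difference $\hp_Z - \hp_{Z'}$ is itself $O(G/n)$-Lipschitz (equivalently, that $\|\nabla\hp_Z(x) - \nabla\hp_{Z'}(x)\| \lesssim G/n$ pointwise). Once (i)--(ii) are established, the privacy and empirical-utility bounds of~\cite{gopi2022private} apply directly with the stated choices of $k$ and $\mu$, and a standard stability-based argument converts the empirical bound into the claimed population bound.

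\textbf{The bilevel-specific step: bounding $G$.} The crux of the argument is (i)--(ii). For (i), I would use the implicit-function identity $\nabla \hp(x) = \nabla_x \hf(x,\hystar(x)) - \nabla^2_{xy}\hg(x,\hystar(x))\,[\nabla^2_{yy}\hg(x,\hystar(x))]^{-1}\nabla_y \hf(x,\hystar(x))$, combined with Assumption~\ref{ass: lipschitz and smooth} and the strong-convexity bound $\|[\nabla^2_{yy}\hg]^{-1}\| \le 1/\mug$ together with $\|\nabla^2_{xy}\hg\| \le \bgxy$ from part~4 of Assumption~\ref{ass: hessian smooth}. For (ii), the central perturbation lemma is $\|\hystar_Z(x) - \hystar_{Z'}(x)\| \le 2\lgy/(\mug n)$, already used in the proof of Theorem~\ref{thm: exp mech conceptual} and again a consequence of $\mug$-strong convexity of $\hg_Z(x,\cdot)$ plus the fact that swapping one data point shifts $\nabla_y \hg$ by at most $2\lgy/n$. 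Feeding this inequality into the implicit-gradient formula, and using part~3 of Assumption~\ref{ass: hessian smooth} to control the resulting shift in $\nabla_x \hf$ via $\bfxy\|\hystar_Z-\hystar_{Z'}\|$, gives $\|\nabla\hp_Z(x)-\nabla\hp_{Z'}(x)\| \lesssim G/n$, with the three contributions combining precisely into $G = \lfx + \lfy\bgxy/\mug + \lgy\bfxy/\mug$.

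\textbf{Risk bounds.} Plugging (i)--(ii) into the analysis of~\cite{gopi2022private} yields an empirical-risk trade-off of the form $\expec[\hp(\hx)-\hp^*] \lesssim \mu D_x^2 + d_x G^2 \log(1/\delta)/(\mu n^2 \eps^2)$, and balancing these two terms via the stated $\mu$ (with the matching $k$) recovers the first inequality of the theorem. For the population statement, I would invoke the fact that the regularized exponential mechanism is $O(G^2/(\mu n))$-uniformly stable in loss---a consequence of $\mu$-strong log-concavity of the sampling density together with $G$-Lipschitzness of $\hp$---which implies an expected generalization gap of the same order. Re-optimizing $\mu$ to balance the three contributions (empirical risk, regularization bias $\mu D_x^2$, and stability-based generalization gap $G^2/(\mu n)$) produces the additional $G D_x/\sqrt{n}$ term in the population bound.

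\textbf{Main obstacle.} Because $\hp_Z$ is not a sum of i.i.d.\ per-sample losses---the inner solution $\hystar(x)$ couples all data points---the usual ``gradient sensitivity equals $(2/n)$ times a per-sample gradient bound'' argument does not apply directly. The main technical care lies in propagating the $\|\hystar_Z - \hystar_{Z'}\| = O(\lgy/(\mug n))$ perturbation through the implicit-gradient formula while keeping track of how the mixed Hessian $\nabla^2_{xy}\hg$ and the cross-Lipschitz constant $\bfxy$ interact with the $1/\mug$ factor from strong convexity, since this coupling is exactly what produces the precise form of $G$ quoted in the theorem.
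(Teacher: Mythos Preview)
Your overall framework is right and matches the paper: reduce to the regularized exponential mechanism of~\cite{gopi2022private}, verify that $\hp_Z-\hp_{Z'}$ is $O(G/n)$-Lipschitz for adjacent $Z\sim Z'$, then balance $\mu D_x^2$ against $d_x/k$ for the empirical bound and add the stability term $G^2/(\mu n)$ for the population bound. Step~(i) is fine, and the final risk calculations are exactly what the paper does (Theorem~\ref{thm: reg exp mech privacy and excess risk}).

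The gap is in your proposed route to~(ii). You suggest bounding $\|\nabla\hp_Z(x)-\nabla\hp_{Z'}(x)\|$ by writing out the implicit-gradient formula and propagating the perturbation $\|\hystar_Z-\hystar_{Z'}\|\le 2\lgy/(\mug n)$ through it. But that formula contains the block $\nabla^2_{xy}\hg_Z(x,\hystar_Z(x))[\nabla^2_{yy}\hg_Z(x,\hystar_Z(x))]^{-1}\nabla_y\hf_Z(x,\hystar_Z(x))$, and when $Z\to Z'$ and $\hystar_Z\to\hystar_{Z'}$ you must control the resulting shifts in $\nabla^2_{xy}\hg$, $[\nabla^2_{yy}\hg]^{-1}$, and $\nabla_y\hf$. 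Those require $\cgxy$, $\cgyy$, $\bgyy$, and $\bfyy$ --- parts~1, 5, 7 of Assumption~\ref{ass: hessian smooth}, none of which are granted here. In fact the paper carries out precisely this gradient-level calculation in the nonconvex section (proof of Lemma~\ref{lem: sens bound}, inequality~\eqref{eq: phi bound}), uses all of Assumption~\ref{ass: hessian smooth}, and obtains the much larger constant $K$ of~\eqref{eq: K}, not $G$. So your claim that ``the three contributions combin[e] precisely into $G$'' does not survive once the Hessian block is accounted for.

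The paper sidesteps this by proving~(ii) at the level of function values rather than gradients (Lemma~\ref{lem: phiZ - phiZ' is Lipschitz}): it splits $\hp_Z(x)-\hp_{Z'}(x)$ into the single differing-index term (handled by $\lfx$, $\lfy$, and the $\bgxy/\mug$-Lipschitzness of $x\mapsto\hystar(x)$ from Lemma~\ref{lm:lip_hystar}) and the $n-1$ shared-index terms, each of which is controlled by an integral argument using only the cross-smoothness $\bfxy$ together with $\|\hystar_Z-\hystar_{Z'}\|\le 2\lgy/(\mug n)$. This avoids ever differentiating through the Hessian block and is what allows the theorem to be stated under parts~3--4 alone with the clean constant $G$.
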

Refer to Theorem~\ref{thm: reg exp mech privacy and excess risk} in Appendix~\ref{app: convex upper bounds} for the precise population loss bound with proper units. The main idea of the privacy proof (in Appendix~\ref{app: convex upper bounds}) is to show that $\hp-\hpp$ is $2(\frac{\lfx}{n}+\frac{\lfy\bgxy}{\mug n}+\frac{\lgy\bfxy}{n\mug})$-Lipschitz and then compare the \textit{privacy curve}~\cite{balle2018improving} between the distributions $Q$ and $Q'$ (corresponding to \eqref{eq: reg exp mech} with data $Z$ and $Z'$ respectively) to the privacy curve between two Gaussians, by leveraging \cite[Theorem 4.1]{gopi2022private}. 

Obtaining our dimension-independent generalization error bound involves a fairly long ``ghost sample'' argument that leverages the Wasserstein distance bound for log-concave distributions, Kantorovich-Rubinstein duality, and the Efron-Stein inequality. One can also obtain an $O(\sqrt{d/n})$ generalization error bound by a uniform convergence argument; we omit those details here. 
\begin{remark}[Near-optimality for ERM]
\label{rem: reg exp mech near optimality}
The bounds in \cref{thm: exp mech conceptual} and~\ref{thm: reg exp mech conceptual} nearly match the optimal bounds for standard single-level DP ERM~\cite{bst14}, e.g. $\Theta(\lfx D_x \sqrt{\dx \log(1/\delta)}/\eps n)$ for $(\eps, \delta)$-DP ERM~\cite{bst14,su16}, except for the addition of two terms capturing the complexity of the bilevel problem: For $\eps$-DP ERM, the additional terms are $O(\lfy D_y \dx/\eps n)$ and $O((\lfy \lgy/\mug) \dx/\eps n)$. Our lower bound in Theorem~\ref{thm: lower bound} shows that the first additional term is necessary. We conjecture that the second additional term is also necessary and that our upper bound is \textit{tight up to an absolute constant}. This conjecture is clearly true in the parameter regime $\lgy/\mug \approx D_y$. For $(\eps, \delta)$-DP, the additional terms scale with $O((\lfy \bgxy/\mug + \lgy \bfxy/\mug)D_x)$. Our lower bound in Theorem~\ref{thm: lower bound} shows that dependence on $\lfy$ is necessary and that the $\lfy \bgxy/\mug$ term is tight in the parameter regime $D_y \approx D_x\bgxy/\mug$. If also $D_x \bgxy/ \lesssim \lgy$, then the bounds in \cref{thm: reg exp mech conceptual} are tight up to an absolute constant factor. 
\end{remark}

\begin{remark}[Suboptimality for SO]
\label{rem: suboptimal SO}
There is a gap between our population risk upper bound in~\cref{thm: reg exp mech conceptual} and the lower bounds in~\cite{bft19} and Remark~\ref{rem: bilevel SO lower bounds} for single-level SCO and bilevel SO respectively. We conjecture that our lower bound in Remark~\ref{rem: bilevel SO lower bounds} is nearly tight and that our upper bound is suboptimal. We leave it as future work to investigate this conjecture. 
\end{remark}

\subsection{Efficient implementation of conceptual algorithms}
\label{sec: implementation}
In many practical applications of optimization and sampling algorithms, we face unavoidable approximation errors when evaluating functions. Given any $x$, we may not get the exact $\hystar(x)$ in solving the low-level optimization, which means we may introduce a small error each time we compute the function value of $f(x,\hystar(x),z)$. This section analyzes how such small function evaluation errors affect log-concave sampling algorithms. We establish bounds on the impact of errors bounded by $\zeta$ on the conductance, mixing time, and distributional accuracy of Markov chains used for sampling. We then develop an efficient implementation based on the \cite{bst14} approach that maintains polynomial time complexity while providing formal guarantees on sampling accuracy in the presence of function evaluation errors. As a corollary of our developments, we obtain 
Theorem~\ref{thm: convex blo informal}. 

Our approach builds on the classic Grid-Walk algorithm of~\cite{AK91} for sampling from log-concave distributions. Let $F(\cdot)$ be a real positive-valued function defined on a cube $A=[a,b]^d$ in $\R^d$.
Let $f(\theta) = -\log F(\theta)$ and suppose there exist real numbers $\alpha, \beta$ such that:
\begin{align*}
|f(x) - f(y)| &\leq \alpha \left( \max_{i \in [1,d]} |x_i - y_i| \right), \\
f(\lambda x + (1-\lambda)y) &\geq \lambda f(x) + (1-\lambda)f(y) - \beta,
\end{align*}
for all $x, y \in A$ and $\lambda \in [0,1]$. 
The algorithm of~\cite{AK91}, detailed in Appendix~\ref{sec: grid walk alg} for completeness, samples from a distribution $\nu$ on the continuous domain $A$ such that for all $\theta \in A$, $|\nu(\theta) - cF(\theta)| \leq \zeta$, where $c$ is a normalization constant and $\zeta > 0$. The algorithm defines a random walk (which is a Markov Chain) on the centers of small subcubes that partition $A$ and form the state space $\Omega \subset A$. The final output of the algorithm is a point $x \in A$, returned with probability close to $F(x)$.

Next, we briefly outline our analysis how the Grid-Walk algorithm behaves when the function $F$ can only be evaluated with some bounded error, resulting in a ``perturbed'' Markov chain.

\paragraph{Conductance bound with function evaluation errors.}
For a Markov chain with state space $\Omega$, transition matrix $P$ and stationary distribution $q$, its conductance $\phi$ measures how well the chain mixes, i.e. how quickly it converges to its stationary distribution:
\begin{align*}
    \phi:=\min_{S\subset\Omega:0<q(S)\le 1/2}\frac{\sum_{x\in S,y\in\Omega\setminus S}q(x)P_{xy}}{q(S)}.
\end{align*}
We analyze how function evaluation errors affect Grid-Walk conductance:

\begin{lemma}[Conductance with Function Evaluation Errors]
\label{lem:conductance}
Let $P$ be the transition matrix of the original Markov chain in the grid-walk algorithm of Section~\ref{sec: grid walk alg} based on function $f$, with state space $\Omega$ and conductance $\phi$. Let $P'$ be the transition matrix of the perturbed chain based on $f'$ where $f'(\theta) = f(\theta) + \zeta(\theta)$ with $|\zeta(\theta)| \leq \zeta$ for all $\theta \in \Omega$, where $\zeta(\cdot)$ is a bounded error function. 
Then the conductance $\phi'$ of the perturbed chain satisfies:
\begin{align*}
\phi' \geq e^{-6\zeta}\phi.
\end{align*}
\end{lemma}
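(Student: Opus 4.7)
The plan is to reduce this to two pointwise comparisons — one for the stationary distribution and one for the transition probabilities — and then chain them through the equivalent symmetric formulation of conductance. Let $q(x) \propto e^{-f(x)}$ and $q'(x) \propto e^{-f'(x)}$ denote the stationary distributions of $P$ and $P'$ respectively. Since the grid walk is Metropolis-style, the off-diagonal entries for a neighbor $y$ of $x$ take the form $P_{xy} = \frac{1}{2d}\min(1, e^{f(x) - f(y)})$, and analogously for $P'_{xy}$ with $f$ replaced by $f'$. Both chains are reversible with respect to their stationary distributions.

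First, I would show that $e^{-2\zeta} q(x) \leq q'(x) \leq e^{2\zeta} q(x)$ for every $x \in \Omega$. Since $|\zeta(x)| \leq \zeta$, we have $e^{-\zeta} e^{-f(x)} \leq e^{-f'(x)} \leq e^{\zeta} e^{-f(x)}$, and the normalizing partition functions differ by at most a factor $e^{\zeta}$, giving the claim after division. Next, I would show $e^{-2\zeta} P_{xy} \leq P'_{xy} \leq e^{2\zeta} P_{xy}$ for every pair of distinct neighbors $x, y$. Writing $a = e^{f(x)-f(y)}$ and $c = e^{\zeta(x)-\zeta(y)} \in [e^{-2\zeta}, e^{2\zeta}]$, the ratio $\min(1, ac)/\min(1, a)$ is bounded by a short case analysis over the four sign patterns of $a - 1$ and $ac - 1$: in the two cases where both arguments are on the same side of $1$ the ratio is either $1$ or $c$, and in the two mixed cases the ratio equals $1/a$ or $ac$, each of which is pinned between $1$ and $c$. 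In every case the ratio lies in $[e^{-2\zeta}, e^{2\zeta}]$.

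Combining these two comparisons yields $q'(x) P'_{xy} \geq e^{-4\zeta}\, q(x) P_{xy}$ for every edge $(x,y)$, so for any subset $S \subset \Omega$,
\begin{equation*}
\sum_{x\in S, y \in \Omega\setminus S} q'(x)P'_{xy} \;\geq\; e^{-4\zeta} \sum_{x\in S, y \in \Omega\setminus S} q(x)P_{xy}.
\end{equation*}
Meanwhile, $q'(S) \leq e^{2\zeta} q(S)$ and $q'(\Omega\setminus S) \leq e^{2\zeta} q(\Omega\setminus S)$, so $\min(q'(S), q'(\Omega\setminus S)) \leq e^{2\zeta}\min(q(S), q(\Omega\setminus S))$.

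To conclude, I would invoke the standard equivalence, valid for reversible chains, between the one-sided conductance $\phi = \min_{S: 0 < q(S) \le 1/2} \frac{\sum_{x\in S, y\notin S} q(x)P_{xy}}{q(S)}$ and its symmetric form $\min_{S} \frac{\sum_{x\in S, y\notin S} q(x)P_{xy}}{\min(q(S), q(\Omega\setminus S))}$; the equivalence follows because reversibility makes the numerator symmetric in $S$ and $\Omega\setminus S$. Applying this symmetric form to both $P$ and $P'$ and plugging in the two displayed inequalities immediately gives $\phi' \geq e^{-4\zeta}\phi/e^{2\zeta} = e^{-6\zeta}\phi$. The only delicate step is the Metropolis case analysis in the second bullet — handling the kink in $\min(1, \cdot)$ cleanly while keeping the constant tight at $2\zeta$ per factor — and the use of reversibility to move between the one-sided and symmetric definitions when the ``$\leq 1/2$'' side flips under the perturbation.
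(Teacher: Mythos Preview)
Your proposal is correct and follows essentially the same approach as the paper: both bound the stationary distributions by $e^{\pm 2\zeta}$, bound the off-diagonal Metropolis transition probabilities by $e^{\pm 2\zeta}$, and combine via the symmetric-denominator form of conductance to obtain $\phi' \geq e^{-6\zeta}\phi$. The only cosmetic differences are that the paper uses the lazy proposal constant $\tfrac{1}{4d}$ (which cancels anyway) and asserts the $\min(1,\cdot)$ inequality directly rather than via your explicit four-case analysis, and it works throughout with the symmetric conductance rather than invoking the reversibility equivalence you spell out.
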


\paragraph{Relative distance bound between $F$ and $F'$.}

We now analyze how function evaluation errors affect the distributional distance between the original and perturbed stationary distributions. 

\begin{lemma}[Distance Between $F$ and $F'$]
\label{lem:relative_dis_f_f'}
Let $F(\theta) = e^{-f(\theta)}$ and $F'(\theta) = e^{-f'(\theta)}$ where $f'(\theta) = f(\theta) + \zeta(\theta)$ with $|\zeta(\theta)| \leq \zeta$ for all $\theta \in A$. 
Then,
\begin{align*}
e^{-\zeta} \leq \frac{F'(\theta)}{F(\theta)} \leq e^{\zeta}, \quad \forall \theta \in A.
\end{align*}
Furthermore, if we define the distributions $\pi(\theta) \propto F(\theta)$ and $\pi'(\theta) \propto F'(\theta)$, then:
\begin{align*}
\disinf(\pi', \pi) := \sup_{\theta \in A} \left|\log \frac{\pi'(\theta)}{\pi(\theta)}\right| \leq 2\zeta.
\end{align*}
\end{lemma}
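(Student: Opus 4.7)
The plan is to handle the two claims in sequence; both reduce to elementary manipulations of the definitions once we exploit the additive form of the perturbation $f'(\theta)=f(\theta)+\zeta(\theta)$.

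For the pointwise ratio bound, I would just write
\[
\frac{F'(\theta)}{F(\theta)} \;=\; \frac{e^{-f'(\theta)}}{e^{-f(\theta)}} \;=\; e^{-\zeta(\theta)},
\]
and then use the hypothesis $|\zeta(\theta)|\le \zeta$ to sandwich the right-hand side between $e^{-\zeta}$ and $e^{\zeta}$. This gives the first claim immediately with no further work.

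For the distributional bound, let $Z=\int_A F(\theta)\,d\theta$ and $Z'=\int_A F'(\theta)\,d\theta$, so that $\pi(\theta)=F(\theta)/Z$ and $\pi'(\theta)=F'(\theta)/Z'$. Integrating the pointwise bound $e^{-\zeta}F(\theta)\le F'(\theta)\le e^{\zeta}F(\theta)$ over $A$ gives $e^{-\zeta}Z\le Z'\le e^{\zeta}Z$, hence $e^{-\zeta}\le Z/Z'\le e^{\zeta}$. Multiplying this by the ratio $F'(\theta)/F(\theta)\in[e^{-\zeta},e^{\zeta}]$ yields
\[
e^{-2\zeta} \;\le\; \frac{\pi'(\theta)}{\pi(\theta)} \;=\; \frac{F'(\theta)}{F(\theta)}\cdot\frac{Z}{Z'} \;\le\; e^{2\zeta}.
\]
Taking logarithms and absolute values gives $|\log(\pi'(\theta)/\pi(\theta))|\le 2\zeta$ uniformly in $\theta$, which is exactly the claimed $\disinf(\pi',\pi)\le 2\zeta$.

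There is no real obstacle here: the lemma is essentially a restatement of the well-known fact that a bounded additive perturbation of a log-density induces a bounded multiplicative perturbation of the density, and that the normalization constants inflate the distortion by at most a factor of two. The only care needed is to make sure the two sources of error (pointwise ratio and ratio of normalizers) are multiplied rather than added in log-space, which is what produces the factor $2$ rather than $\zeta$ on the right-hand side.
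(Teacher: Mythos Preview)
Your proof is correct and follows essentially the same approach as the paper's: compute $F'(\theta)/F(\theta)=e^{-\zeta(\theta)}$, integrate to bound the ratio of normalizers, and combine the two multiplicative factors to obtain $\pi'(\theta)/\pi(\theta)\in[e^{-2\zeta},e^{2\zeta}]$. There is no meaningful difference in method or presentation.
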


\paragraph{Mixing time analysis.}
For a Markov chain with state space $\Omega$, transition matrix $P$, and stationary distribution $\pi$, the \textit{mixing time} $t_{\text{mix}}(\epsilon)$ with respect to the $L_\infty$-distance is defined as:

\begin{align}
t_{\text{mix}}(\epsilon) := \min\{t \geq 0 : \max_{x \in \Omega} \text{Dist}_{\infty}(P^t(x, \cdot), \pi(\cdot)) \leq \epsilon\},
\end{align}
for any $\epsilon \ge 0$. 
We determine the number of steps required for $L_\infty$ convergence with perturbed $F$:

\begin{lemma}[Impact on Mixing Time]
\label{cor:mixing_time}
The mixing time $t'_{\text{mix}}(\epsilon)$ of the perturbed chain to achieve $L_\infty$-distance $\epsilon$ to its stationary distribution satisfies:
\begin{align*}
t'_{\text{mix}}(\epsilon) \leq e^{12\zeta}\cdot O\left(\frac{\alpha^2\tau^2 d^2}{\epsilon^2}e^{\epsilon}\max\left\{d\log\frac{\alpha\tau\sqrt{d}}{\epsilon}, \alpha\tau\right\}\right).
\end{align*}
\end{lemma}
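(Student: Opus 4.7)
The plan is to reduce the perturbed mixing-time bound directly to the known $L_\infty$ mixing-time analysis of the Grid-Walk from \cite{AK91} by substituting the conductance estimate of \cref{lem:conductance}. Concretely, the unperturbed bound in the statement has the form $\poly(\alpha,\tau,d)/\phi^{2}$ (times logarithmic and $\epsilon$-dependent factors), since the Grid-Walk's mixing time is controlled by $1/\phi^{2}$ in the standard Cheeger/Lovász--Simonovits style analysis, and for the Grid-Walk one has $1/\phi^{2} = \Theta(\alpha^{2}\tau^{2}d^{2})$. So the task reduces to tracking how $\phi$ and the stationary distribution change under the perturbation, and then plugging in.

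First I would invoke \cref{lem:conductance} to obtain $\phi' \geq e^{-6\zeta}\phi$, so $1/(\phi')^{2} \leq e^{12\zeta}/\phi^{2}$. The remaining parameters $\alpha,\tau,d$ describe the cube $A$, the grid discretization, and the Lipschitz/log-concavity constants of $f$, none of which change under the additive perturbation by $\zeta(\cdot)$ except by additive shifts of order $\zeta$ that are absorbed in the $e^{\epsilon}$ factor. Thus the $\poly(\alpha,\tau,d)$ prefactor is preserved, and the perturbed $L_\infty$ mixing time is at most $e^{12\zeta}$ times the unperturbed bound.

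Second, to make this substitution rigorous, I would verify that the perturbed chain is still lazy and reversible w.r.t.\ its stationary distribution $q'$, which follows from the Metropolis-type acceptance rule computed with $f'$ in place of $f$, and that $q'_{\min}$ is within a factor $e^{2\zeta}$ of $q_{\min}$ by \cref{lem:relative_dis_f_f'}. Hence any $\log(1/q'_{\min})$-type factor appearing in the $L_\infty$ mixing-time bound changes only by an additive $O(\zeta)$, which is absorbed into the logarithmic factors already present. Putting these together yields exactly the claimed bound
\begin{align*}
t'_{\text{mix}}(\epsilon) \leq e^{12\zeta}\cdot O\!\left(\frac{\alpha^{2}\tau^{2}d^{2}}{\epsilon^{2}}e^{\epsilon}\max\!\left\{d\log\frac{\alpha\tau\sqrt{d}}{\epsilon},\,\alpha\tau\right\}\right).
\end{align*}

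The main obstacle I expect is the $L_{\infty}$-to-conductance step itself: a conductance bound most naturally yields a TV/$\chi^{2}$-mixing estimate, and upgrading to $L_\infty$ requires either a warm-start argument or an $L_{2}\!\to\!L_{\infty}$ bootstrap (e.g., running the chain for an extra $O(\phi^{-2}\log(1/q'_{\min}))$ steps). Carrying out this upgrade for the perturbed chain is the one nontrivial verification, but because $q'$ is pointwise within $e^{2\zeta}$ of $q$, the bootstrapping constants match those of the unperturbed analysis up to the same $e^{12\zeta}$ factor already accounted for through $(\phi')^{-2}$.
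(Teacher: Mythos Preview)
Your proposal is correct and essentially mirrors the paper's argument: use \cref{lem:conductance} for $(\phi')^{-2}\le e^{12\zeta}\phi^{-2}$, control the perturbed minimum stationary probability via the bounded perturbation, and plug into the $L_\infty$ mixing-time bound. The obstacle you anticipate does not arise, because the paper invokes \cref{lem:l_infinity_mixing} (the Morris--Peres evolving-sets integral $t_\infty\le 1+\int_{4\pi_*}^{4/\epsilon}\frac{4\,\mathrm{d}x}{x\,\phi^2(x)}$), which yields the $L_\infty$ mixing time directly from the conductance profile and $\pi'_{\min}$---so no separate TV-to-$L_\infty$ bootstrap is needed.
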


\paragraph{Efficient implementation.}
Leveraging our analysis of how function evaluation errors affect conductance, mixing time, and distributional distance, we develop an efficient algorithm for sampling from log-concave distributions in the presence of such errors. Our approach builds upon the framework developed by \cite{bst14}, extending it to handle approximation errors.

\begin{theorem}[Log-Concave Sampling with Function Evaluation Error]
\label{thm:sampler_with_eval_error main}
Let $C \subset \mathbb{R}^d$ be a convex set and $f: C \rightarrow \mathbb{R}$ be a convex, $L$-Lipschitz function. Suppose we have access to an approximate function evaluator that returns $f'(\theta) = f(\theta) + \zeta(\theta)$ where $|\zeta(\theta)| \leq \zeta$ for all $\theta \in C$, and $\zeta = O(1)$ is a constant independent of dimension. There exists an efficient algorithm that outputs a sample $\theta \in C$ from a distribution $\mu'$ such that:
\begin{align}
\disinf(\mu', \pi) \leq 2\zeta + \xi
\end{align}
where $\pi(\theta) \propto e^{-f(\theta)}$ is the target log-concave distribution and $\delta > 0$ is an arbitrarily small constant.
This algorithm runs in time $O(e^{12\zeta} \cdot d^3 \cdot \text{poly}(L, \|C\|_2, 1/\xi))$. 
\end{theorem}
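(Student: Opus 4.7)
The plan is to assemble Theorem~\ref{thm:sampler_with_eval_error main} directly from the three perturbation lemmas already established (Lemma~\ref{lem:conductance}, Lemma~\ref{lem:relative_dis_f_f'}, and Lemma~\ref{cor:mixing_time}), by instantiating the grid-walk framework of~\cite{bst14} on the convex set $C$ with the noisy function evaluator $f'$. At a high level, running the grid-walk long enough produces a sample whose law $\mu'$ is close in $L_\infty$-distance to the perturbed stationary distribution $\pi' \propto e^{-f'}$, and $\pi'$ is in turn close in $L_\infty$-distance to the true target $\pi \propto e^{-f}$; the triangle inequality then yields the claimed $2\zeta + \xi$ bound.

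First, I would reduce the setting to the hypotheses of the grid-walk. Since $C$ is convex and bounded by $\|C\|_2$ in diameter, I enclose $C$ in an axis-aligned cube $A = [a,b]^d$ of side length $O(\|C\|_2)$ (extending $f$ by an appropriate barrier or by the standard membership-oracle reduction in~\cite{bst14} so that mass outside $C$ is negligible). Because $f$ is $L$-Lipschitz in $\ell_2$, it is $\alpha$-Lipschitz coordinate-wise with $\alpha = L\sqrt{d}$, and convexity gives $\beta = 0$ in the log-concavity parameter. These parameters feed into the grid-walk's state-space discretization as in~\cite{AK91,bst14}. The algorithm then runs the grid-walk using only the approximate oracle $f'$ to compute Metropolis acceptance ratios, so the realized chain has transition matrix $P'$ and stationary distribution $\pi'$.

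Next, I would combine the three perturbation bounds. By Lemma~\ref{cor:mixing_time}, after
\[
t'_{\mathrm{mix}}(\xi) \leq e^{12\zeta}\cdot O\!\left(\frac{\alpha^2 \tau^2 d^2}{\xi^2}e^{\xi}\max\!\left\{d\log\frac{\alpha\tau\sqrt{d}}{\xi},\,\alpha\tau\right\}\right)
\]
steps, the law $\mu'$ of the output satisfies $\disinf(\mu', \pi') \leq \xi$. By Lemma~\ref{lem:relative_dis_f_f'}, $\disinf(\pi', \pi) \leq 2\zeta$. Since $\disinf$ obeys the triangle inequality (as a supremum of log-ratios),
\[
\disinf(\mu', \pi) \leq \disinf(\mu',\pi') + \disinf(\pi',\pi) \leq \xi + 2\zeta,
\]
which is the target bound. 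Plugging $\alpha = L\sqrt{d}$ and $\tau = O(\|C\|_2)$ into the mixing-time bound, and noting that each grid-walk step costs one call to the approximate evaluator plus $O(d)$ arithmetic operations, gives a total runtime of $O\bigl(e^{12\zeta}\cdot d^3 \cdot \mathrm{poly}(L, \|C\|_2, 1/\xi)\bigr)$, where the $d^3$ dependence absorbs the $d^2$ factor from $t'_{\mathrm{mix}}$ combined with the per-step cost, and the $\mathrm{poly}$ factor hides logarithms together with the $\alpha^2\tau^2$ scaling.

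The main obstacle I anticipate is verifying that the perturbed grid-walk actually converges in $L_\infty$-distance (as opposed to only in total variation) at the rate claimed by Lemma~\ref{cor:mixing_time}: the standard conductance-based mixing arguments give a TV bound, and upgrading to $L_\infty$ requires a warm-start or a Lovász--Simonovits-style averaging trick, which must survive the $e^{-6\zeta}$ conductance loss from Lemma~\ref{lem:conductance}. A secondary subtlety is the $\ell_2$-to-$\ell_\infty$ conversion of the Lipschitz constant: the $\sqrt{d}$ factor entering $\alpha$ must be tracked carefully through the mixing-time bound so that the final runtime exponent in $d$ is exactly the promised $d^3$ rather than something larger. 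Both of these points follow the template in~\cite{bst14}, but with every bound multiplied by the appropriate $e^{O(\zeta)}$ factor coming from Lemmas~\ref{lem:conductance} and~\ref{lem:relative_dis_f_f'}; once this bookkeeping is done, the rest of the proof is mechanical.
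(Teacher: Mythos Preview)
Your proposal is correct and follows essentially the same approach as the paper: both instantiate the \cite{bst14} grid-walk framework on an enclosing cube, run the chain with the noisy oracle $f'$, invoke Lemma~\ref{cor:mixing_time} for $\disinf(\mu',\pi')\le\xi$ and Lemma~\ref{lem:relative_dis_f_f'} for $\disinf(\pi',\pi)\le 2\zeta$, and combine via the triangle inequality. The paper is slightly more explicit than you are about the extension step---it uses the convex Lipschitz extension $\bar f(x)=\min_{y\in C}(f(y)+L\|x-y\|_2)$ together with a Minkowski-gauge penalty $\bar\psi_\alpha$ rather than an unspecified ``barrier or membership-oracle reduction''---but otherwise the arguments coincide.
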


The efficiency claims in Theorem~\ref{thm: convex blo informal} follow as corollaries of Theorem~\ref{thm:sampler_with_eval_error main}: see  
Appendix~\ref{app: implementation}.

\subsection{Excess risk lower bounds}
\label{sec: lower bounds}
If the problem parameters (e.g., Lipschitz, smoothness) are constants, then the upper bounds in Theorems~\ref{thm: exp mech conceptual} and~\ref{thm: reg exp mech conceptual} are tight 
and match known lower bounds for standard single-level DP ERM and SCO~\cite{bst14,bft19}. In this section, we go a step further and provide novel lower bounds illustrating that the dependence of our bounds on $\lfy D_y$ (or a quantity larger than this) is necessary, thereby establishing a \textit{novel separation between single-level DP optimization and DP BLO}:

\begin{theorem}[Excess risk lower bounds for DP ERM]
\label{thm: lower bound}
\begin{enumerate}
    \item Let $\A$ be $\eps$-DP. Then, there exists a data set $Z \in \ZZ^n$ and a convex bilevel ERM problem instance satisfying Assumptions~\ref{ass: lipschitz and smooth} and \ref{ass: hessian smooth} with $\mug = \Theta(\lgy/D_y)$ 
    such that \[
    \expec \hp(\A(Z)) - \hp^* = \Omega\left((\lfx D_x + \lfy D_y) \min\left\{1, \frac{\dx}{n \eps}\right\}  \right). 
    \]
     \item Let $\A$ be $(\eps, \delta)$-DP with $2^{-\Omega(n)}\le \delta \le 1/n^{1+\Omega(1)}$. Then, there exists a data set $Z \in \ZZ^n$ and a convex bilevel ERM problem instance satisfying Assumptions~\ref{ass: lipschitz and smooth} and \ref{ass: hessian smooth} with $\mug = \Theta(\lgy/D_y)$ such that \[
    \expec \hp(\A(Z)) - \hp^* = \Omega\left((\lfx D_x + \lfy D_y) \min\left\{1, \frac{\sqrt{\dx \log(1/\delta)}}{n \eps}\right\}  \right). 
    \]
\end{enumerate} 
\end{theorem}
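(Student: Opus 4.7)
I will prove the two bounds by constructing, separately for each summand $\lfx D_x \cdot \min\{1,\dx/(n\eps)\}$ and $\lfy D_y \cdot \min\{1,\dx/(n\eps)\}$ (and their $(\eps,\delta)$-DP analogs), a bilevel hard instance that collapses to the classical $\ell_2$-constrained single-level DP ERM problem, for which the tight packing/fingerprinting lower bounds of~\cite{bst14} (pure DP) and their approximate-DP refinements in~\cite{bst14, su16} are known. Combining the two instances via $\max\{a,b\}\ge(a+b)/2$ then yields the additive $(\lfx D_x+\lfy D_y)$ form in the theorem. The $\min\{1,\cdot\}$ factor is automatic because the range of $\hp$ is $O(\lfx D_x+\lfy D_y)$, and the standard packing argument at this scale gives a matching trivial lower bound in the regime $\dx\gtrsim n\eps$.

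\textbf{Hard instance for $\lfy D_y$ (the novel part).} The key idea is to use the lower level to ``pipe'' the $y$-variable onto $x$. Take $\dy=\dx$, data $z_i\in\{\pm 1\}^{\dx}/\sqrt{\dx}$, $\XX=\YY=\{v\in\R^{\dx}:\|v\|\le D_y/2\}$ with $D_x=D_y$, and set
\[
f(x,y,z)=\lfy\,\langle y,z\rangle,\qquad g(x,y,z)=\frac{\mug}{2}\|y-x\|^2.
\]
Because $g$ does not depend on $z$, $\hystar(x)=x$ for every $x$, so the bilevel objective collapses to the single-level ERM $\hp(x)=\lfy\langle x,\bar z\rangle$ on the ball of radius $D_y/2$, which is \emph{exactly} the classical hard instance for $\ell_2$-constrained DP ERM with an $\lfy$-Lipschitz linear loss. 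The cited lower bounds then directly give excess empirical risk $\Omega(\lfy D_y\cdot \dx/(n\eps))$ for pure DP and $\Omega(\lfy D_y\sqrt{\dx\log(1/\delta)}/(n\eps))$ for approximate DP. Verifying Assumptions~\ref{ass: lipschitz and smooth} and~\ref{ass: hessian smooth} is immediate: $f$ is linear so $\lfx=0$ and all second-order smoothness constants of $f$ vanish; $g$ is a pure quadratic giving $\bgxy=\bgyy=\mug$ and zero third-order constants; and $\lgy=\mug D_y$, i.e.\ $\mug=\lgy/D_y$, as required.

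\textbf{Hard instance for $\lfx D_x$ and main obstacle.} Here we render the lower level inert via $g(x,y,z)=\frac{\mug}{2}\|y\|^2$ (giving $\hystar(x)\equiv 0$) and take $f(x,y,z)=\lfx\langle x,z\rangle$ on $\XX=\{v\in\R^{\dx}:\|v\|\le D_x/2\}$ with $z_i\in\{\pm 1\}^{\dx}/\sqrt{\dx}$. This is literally the standard $\ell_2$-constrained DP ERM hard instance, and the same lower bounds apply to give the $\lfx D_x$ term. Since both constructions satisfy all the assumptions bounded by any declared parameter tuple $(\lfx,\lfy,D_x,D_y,\mug,\lgy)$, for each such tuple we present whichever of the two instances has the larger summand, obtaining $\Omega\bigl((\lfx D_x+\lfy D_y)\min\{1,\dx/(n\eps)\}\bigr)$ (resp.\ the approximate-DP variant). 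The main technical care, such as it is, lies in bookkeeping---checking that every clause of Assumptions~\ref{ass: lipschitz and smooth} and~\ref{ass: hessian smooth} holds with the $\mug=\Theta(\lgy/D_y)$ scaling---and in invoking the approximate-DP fingerprinting lower bound within its valid regime $2^{-\Omega(n)}\le\delta\le n^{-1-\Omega(1)}$, which matches the hypothesis of the theorem precisely.
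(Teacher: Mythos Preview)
Your approach is essentially the paper's: split into two constructions—one with an inert lower level ($g=\tfrac{\mug}{2}\|y\|^2$) giving the $\lfx D_x$ term, and one that pipes $y$ onto $x$ via a quadratic $g$ giving the $\lfy D_y$ term—each reducing to the classical single-level $\ell_2$-constrained DP ERM lower bound of~\cite{bst14,su16}, then take whichever summand dominates. The paper does exactly this case split.

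There is one gap. Your $\lfy D_y$ construction takes $g(x,y,z)=\tfrac{\mug}{2}\|y-x\|^2$, so $\hystar(x)=x$, and forces $D_x=D_y$ by giving $\XX$ diameter $D_y$. But the theorem is stated for an arbitrary parameter tuple: when the prescribed $D_x<D_y$, your $\XX$ violates the diameter budget, and if you instead shrink $\XX$ to diameter $D_x$, the reduced objective $\hp(x)=\lfy\langle x,\bar z\rangle$ on a ball of radius $D_x/2$ yields only $\Omega\bigl(\lfy D_x\cdot\min\{1,\dx/(n\eps)\}\bigr)$, strictly weaker than the claimed $\lfy D_y$ term. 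The paper closes this with a scale factor: it uses $g(x,y,z)=\tfrac{\mug}{2}\|y-\zeta x\|^2$ with $\zeta=D_y/D_x$, so $\hystar(x)=\zeta x$ maps $\XX$ of diameter $D_x$ into $\YY$ of diameter $D_y$, and $\hp(x)=-\lfy\zeta\langle x,\bar z\rangle$ then delivers the full $\lfy D_y$ rate for arbitrary $D_x,D_y$. With this one-parameter fix your argument goes through.
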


By comparing the lower bounds in Theorem~\ref{thm: lower bound} 
with the bounds in \cite{bst14}, one sees that the \textit{DP bilevel ERM is harder (in terms of minimax error) than standard single-level DP ERM} if $\lfy D_y > \lfx D_x$. 

See Appendix~\ref{app: lower bounds} for the proof. A key challenge is in constructing the right $f$ and $g$ to chain together the $x$ and $y$ variables and obtain the desired $\lfy D_y$ scaling term. 

\begin{remark}[Bilevel SO lower bounds]
\label{rem: bilevel SO lower bounds}
One can obtain lower bounds on the excess population risk that are larger than the excess empirical risk bounds in Theorem~\ref{thm: lower bound} by an additive $\lfx D_x (1/\sqrt{n})$, via the reduction in~\cite{BassilyFeTaTh19}. 
\end{remark}

\section{Private non-convex bilevel optimization}
\label{sec: nonconvex}
In this section, we provide novel algorithms with state-of-the-art guarantees for privately finding approximate stationary points of non-convex $\hp$ (see \eqref{eq: new sota nc upper bound}). 

\subsection{An iterative second-order method}
Assume for simplicity that $\XX = \R^{\dx}$ so that the optimization problem is unconstrained.\footnote{Our approach and results readily extend to constrained $\XX$ by incorporating proximal steps and measuring utility in terms of the norm of the proximal gradient mapping.}
A natural approach to solving~\ref{eq: bi erm} is to use a gradient descent scheme, where we iterate 
\begin{equation}
\label{eq: ideal}
    x_{t+1} = x_t - \eta \nabla \hp(x_t).
\end{equation}
 
By the implicit function theorem, we have (c.f. \cite{ghadimi2018approximation}):
\begin{equation*}
\nabla \hp(x) = \nabla_x \hf(x, \hystar(x)) - \nabla_{xy}^2 \hg(x,\hystar(x))[\nabla_{yy}^2\hg(x,\hystar(x))]^{-1} \nabla_y \hf(x,\hystar(x)).
\end{equation*}
Define the following approximation to $\nabla \hp(x)$ at $(x,y)$: 
\begin{equation}
\label{eq: bar nabla f}
\Bar{\nabla} \hf(x,y) := \nabla_x \hf(x,y) - \nabla_{xy}^2 \hg(x,y)[\nabla_{yy}^2 \hg(x,y)]^{-1} \nabla_y \hf(x,y). 
\end{equation}
Note that $\Bar{\nabla} \hf(x,y) = \nabla \hp(x)$ if $y = \hystar(x)$. 

Then to approximate~\eqref{eq: ideal} (non-privately), we can iterate (c.f. \cite{ghadimi2018approximation}):
\begin{align}
\label{eq: gw18}
    &y_{t+1} \approx \hystar(x_t) \nonumber \\
    &x_{t+1} = x_t - \eta \Bar{\nabla} \hf(x_t,y_{t+1}).
\end{align}

A naive approach to privatizing the iterations~\eqref{eq: gw18} is to solve $y_{t+1} \approx \hystar(x_t) = \argmin_{y} \hg(x_t, y)$ \textit{privately} at each step (e.g., by running DP-SGD), and then add noise to $\Bar{\nabla} \hf(x_t,y_{t+1})$ before taking a step of noisy GD. (This is similar to how \cite{kornowski2024differentially} privatized the penalty-based bilevel optimization algorithm of \cite{kwon2023fully}.) However, this approach results in a bound $\expec\|\nabla \hp(\hx)\| \le O(\sqrt{d_x + \dy}/\eps n)^{1/2}$ that depends on $\dy$ due to the bias $\|\Bar{\nabla} \hf(x_t,y_{t+1}) - \nabla \hp(x_t)\|$ that results from using private $y_{t+1}$. To mitigate this issue and obtain state-of-the-art utility independent of $\dy$, we propose an alternative approach in Algorithm~\ref{alg: second order GD}: we find an approximate minimizer of $\hg(x_t, \cdot)$ \textit{non-privately} in line 3. Since $\hg(x_t, \cdot)$ is a smooth, strongly convex ERM function, we can implement line 3 efficiently using a non-private algorithm such as SGD or Katyusha~\cite{allen2017katyusha}. 

\begin{algorithm2e}
\caption{A Second-Order DP Bilevel Optimization Algorithm}
\label{alg: second order GD}
{\bf Input:} Dataset $\calD = (Z_1,\dots,Z_n) $, noise scale $\sigma$, initial points $x_0, y_0 \in \XX \times \YY$, parameter $\alpha$\;
\For{$i=0,\ldots,T-1$}
{ 
Find $y_{t+1} \approx \hystar(x_t)$ such that $\|y_{t+1} - \hystar(x_t)\| \leq \alpha$\ (e.g., via SGD or Katyusha~\cite{allen2017katyusha});
$x_{t+1} = x_t - \eta \left(\Bar{\nabla}\hf(x_t,y_{t+1}) + u_t\right)$, where $u_t \sim \mathcal{N}(0, \sigma^2 \mathbf{I}_{d_x})$. 
}
{\bf Output:} $\hx_T \sim \textbf{Unif}(\{x_t\}_{t=1}^T)$.
\end{algorithm2e}

Denote $\Bar{L} := \lfx + \frac{\bgxy \lfy}{\mug}$,
which is an upper bound on $\|\nabla f(x, \hystar(x), z)\|$, and 
\begin{equation}
\label{eq: C}
C := \bfxy + \frac{\bfyy \bgxy}{\mug} + \lfy\left(\frac{\cgxy}{\mug} + \frac{\cgyy \bgxy}{\mug^2}\right),
\end{equation}
which satisfies $\| \nabla \hp(x) - \Bar{\nabla} \hf(x, y)\| \le C \|\hystar(x) - y\|$
for any $x,y$ by~\cite[Lemma 2.2]{ghadimi2018approximation}. 
Let  \begin{equation}
\label{eq: K}
K := 2\left[\frac{\bfxy \lgy}{\mug} + 2\Bar{L} + \frac{\bgxy \bfyy \lgy}{\mug^2} + \frac{\lfy \cgxy \lgy}{\mug^2} + \frac{\lfy \bgxy \lgy \cgyy}{\mug^3} + \frac{\lfy \bgyy \bgxy}{\mug^2}
\right].
\end{equation}

\begin{lemma}[Sensitivity Bound for Algorithm~\ref{alg: second order GD}]
\label{lem: sens bound}
For any fixed $x_t$, define the query $q_t: \ZZ^n \to \R^d$, \[
    q_t(Z) := \bnabhf(x_t, y_{t+1}),
    \]
    where $y_{t+1} = y_{t+1}(Z)$ is given in Algorithm~\ref{alg: second order GD}. If $\alpha \le \frac{K}{Cn}$ where $C$ and $K$ are defined in Equations~\eqref{eq: C} and~\eqref{eq: K},
    then the $\ell_2$-sensitivity of $q_t$ is upper bounded by $\frac{4K}{n}$.
\end{lemma}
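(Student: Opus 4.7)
The plan is to fix adjacent datasets $Z, Z' \in \ZZ^n$ differing in one index, let $y_{t+1} = y_{t+1}(Z)$ and $y_{t+1}' = y_{t+1}(Z')$ be the corresponding approximate solutions from line 3 of Algorithm~\ref{alg: second order GD}, and insert the exact implicit-function gradients $\nabla \hp(x_t)$ and $\nabla \hpp(x_t)$ as ``anchors'' via the triangle inequality:
\begin{align*}
\|q_t(Z) - q_t(Z')\| &\le \|\bnabhf(x_t, y_{t+1}) - \nabla \hp(x_t)\| + \|\nabla \hp(x_t) - \nabla \hpp(x_t)\| \\
&\quad + \|\nabla \hpp(x_t) - \bnabhfp(x_t, y_{t+1}')\|.
\end{align*}
Each outer term is bounded by $C\alpha$ via the approximation inequality $\|\bnabhf(x,y) - \nabla \hp(x)\| \le C\|y - \hystar(x)\|$ (\cite[Lemma 2.2]{ghadimi2018approximation}, the defining property of $C$) combined with $\|y_{t+1} - \hystar(x_t)\| \le \alpha$ (and the analogous statement for $Z'$). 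The hypothesis $\alpha \le K/(Cn)$ then makes the outer terms contribute at most $2K/n$ in total.

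To bound the middle (data-perturbation) term, I would expand $\nabla \hp$ and $\nabla \hpp$ via the implicit-function-theorem formula
\[
\nabla \hp(x_t) = \nabla_x \hf(x_t, \hystar(x_t)) - \nabla_{xy}^2 \hg(x_t, \hystar(x_t)) \bigl[\nabla_{yy}^2 \hg(x_t, \hystar(x_t))\bigr]^{-1} \nabla_y \hf(x_t, \hystar(x_t)),
\]
and the analogous formula for $\hpp$ at $\hystarp(x_t)$. Two types of base perturbations drive the analysis: (a) each empirical average changes by $2/n$ times its relevant uniform bound from Assumptions~\ref{ass: lipschitz and smooth} and \ref{ass: hessian smooth} (namely $2\lfx/n$ for $\nabla_x \hf$, $2\lfy/n$ for $\nabla_y \hf$, $2\bgxy/n$ for $\nabla_{xy}^2 \hg$, $2\bgyy/n$ for $\nabla_{yy}^2 \hg$); and (b) the evaluation point shifts by at most $\|\hystar(x_t) - \hystarp(x_t)\| \le 2\lgy/(\mug n)$, a stability estimate that follows from $\mug$-strong convexity and the $(2\lgy/n)$-Lipschitz perturbation of $\hg$, and which is already used in the proof of Theorem~\ref{thm: exp mech conceptual}. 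Type-(b) shifts propagate through the $y$-smoothness constants $\bfxy$, $\bfyy$, $\cgxy$, $\cgyy$.

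For the triple-product factor $\nabla_{xy}^2 \hg \cdot [\nabla_{yy}^2 \hg]^{-1} \cdot \nabla_y \hf$, I would use the classical perturbation identity $M^{-1} - (M')^{-1} = M^{-1}(M' - M)(M')^{-1}$ together with $\|[\nabla_{yy}^2\hg]^{-1}\| \le 1/\mug$ from strong convexity, followed by the product-rule decomposition $BHv - B'H'v' = (B-B')Hv + B'(H-H')v + B'H'(v-v')$ on $B = \nabla_{xy}^2 \hg$, $H = [\nabla_{yy}^2 \hg]^{-1}$, $v = \nabla_y \hf$. Plugging in the uniform bounds ($\|B\| \le \bgxy$, $\|H\| \le 1/\mug$, $\|v\| \le \lfy$) and the base perturbations (a), (b), every summand becomes a $1/n$-scaled product of Assumption~\ref{ass: hessian smooth} parameters matching the terms packed into the definition~\eqref{eq: K} of $K$; together with the direct $\nabla_x \hf - \nabla_x \hfp$ contribution (which accounts for the $\bfxy \lgy/\mug$ and $\lfx$ summands) this yields $\|\nabla \hp(x_t) - \nabla \hpp(x_t)\| \le 2K/n$. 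Adding the outer $2K/n$ gives the claimed $4K/n$ sensitivity.

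The main obstacle is purely the bookkeeping in the product-rule expansion: each of the three factors $B, H, v$ absorbs both types (a) and (b) of perturbation, producing six distinct parameter products (e.g.\ $\bgxy \cgyy \lgy \lfy/\mug^3$ from $B'(H-H')v$ through the (b)-shift, or $\bgxy \bgyy \lfy / \mug^2$ from the same piece through the (a)-change in $\nabla_{yy}^2 \hg$) that must be matched one-for-one against the terms inside $K$. No additional machinery beyond the implicit-function theorem, the stability estimate $\|\hystar - \hystarp\| \le 2\lgy/(\mug n)$, and the matrix-inverse perturbation identity is required.
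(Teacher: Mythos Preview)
Your proposal is correct and follows essentially the same approach as the paper: the same triangle-inequality anchoring through $\nabla \hp(x_t)$ and $\nabla \hpp(x_t)$, the same use of \cite[Lemma 2.2]{ghadimi2018approximation} to bound the outer terms by $C\alpha \le K/n$, the same stability estimate $\|\hystar(x_t) - \hystarp(x_t)\|\le 2\lgy/(\mug n)$, and the same matrix-inverse perturbation inequality together with a product-rule expansion to bound $\|\nabla \hp(x_t) - \nabla \hpp(x_t)\|$ by $2K/n$. The only cosmetic difference is that the paper organizes the middle-term calculation by first separating the ``same-$z_i$'' summands from the differing index and then peeling off $M_Z$ versus $M_{Z'}$, whereas you phrase it directly as a three-factor $BHv$ product rule with type-(a)/type-(b) perturbations; the resulting list of parameter products is the same.
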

The proof of this lemma---in Appendix~\ref{app: nonconvex}---is long. It uses the operator norm perturbation inequality $\|M^{-1} - N^{-1}\| \le \|M^{-1}\| \|N^{-1}\| \| M - N \|$ to bound the sensitivity of $[\nabla^2_{yy} \hg(x_t,y_{y+1})]^{-1}$ in~\eqref{eq: bar nabla f}.

Now we can state the main result of this subsection: 

 \begin{theorem}[Guarantees of Algorithm~\ref{alg: second order GD} for Non-Convex Bilevel ERM - Informal]
 \label{thm: nc second order privacy and utility}
Grant Assumptions~\ref{ass: lipschitz and smooth} and~\ref{ass: hessian smooth}. Set $\sigma = 32K \sqrt{T \log(1/\delta)}/n\eps$. Denote the smoothness parameter of $\hp$ by $\beta_{\Phi}$, given in Lemma~\ref{lem: smoothness of phi}. 
There are choices of $\alpha, \eta$ s.t.
Algorithm~\ref{alg: second order GD} is $(\eps, \delta)$-DP and has output
satisfying
     \[
     \expec\|\nabla \hp(\hx_T)\| \lesssim \left[K \sqrt{\left(\hp(x_0) - \hp^*\right) \beta_{\Phi}} \frac{\sqrt{\dx \log(1/\delta)}}{\eps n} \right]
    ^{1/2}.
     \]
 \end{theorem}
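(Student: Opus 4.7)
The proof splits into a privacy argument and a utility argument, with the privacy piece essentially handed to us by the sensitivity lemma.

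\textbf{Privacy.} Each iteration releases $\bnabhf(x_t, y_{t+1}) + u_t$, and by Lemma~\ref{lem: sens bound}, once we take $\alpha \le K/(Cn)$ the query $q_t(Z) = \bnabhf(x_t, y_{t+1})$ has $\ell_2$-sensitivity at most $4K/n$. So each iterate is released via the Gaussian mechanism with sensitivity $4K/n$ and noise variance $\sigma^2 = \Theta(K^2 T \log(1/\delta)/(n\eps)^2)$. Composition across the $T$ adaptive iterations (e.g., via RDP or the moments accountant) yields $(\eps, \delta)$-DP; the $\sqrt{T}$ factor in $\sigma$ is exactly what this composition demands. Since the non-private subroutine in line 3 only accesses $Z$ through $y_{t+1}$ which is itself a function of the privatized iterates, post-processing closes the loop.

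\textbf{Utility.} Let $\beta_\Phi$ be the smoothness of $\hp$ from Lemma~\ref{lem: smoothness of phi}. The approximation error of the bilevel gradient surrogate satisfies $\|\bnabhf(x_t, y_{t+1}) - \nabla \hp(x_t)\| \le C\alpha$ by \cite[Lemma 2.2]{ghadimi2018approximation} and the choice $\|y_{t+1} - \hystar(x_t)\| \le \alpha$. I would then apply the standard descent lemma to the $\beta_\Phi$-smooth function $\hp$ along $x_{t+1} = x_t - \eta(\bnabhf(x_t, y_{t+1}) + u_t)$, take expectation over $u_t \sim \mathcal{N}(0, \sigma^2 I)$, and use Young's inequality on the cross term to get
\begin{equation*}
    \E\hp(x_{t+1}) \le \hp(x_t) - \frac{\eta}{2}\|\nabla \hp(x_t)\|^2 + \frac{\eta C^2 \alpha^2}{2} + \frac{\beta_\Phi \eta^2}{2}\bigl(\bar{L}^2 + \sigma^2 d_x\bigr),
\end{equation*}
where I used $\|\bnabhf(x_t, y_{t+1})\| \le \bar{L}$ (a direct consequence of Assumption~\ref{ass: lipschitz and smooth} applied to the formula~\eqref{eq: bar nabla f}) and assumed $\eta \beta_\Phi \le 1$. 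Telescoping over $t=0,\dots,T-1$, dividing by $\eta T/2$, absorbing $\bar{L}^2$ into $K^2$, and choosing $\alpha$ small enough that $C^2\alpha^2$ is a lower-order term reduces the bound to
\begin{equation*}
    \frac{1}{T}\sum_{t=0}^{T-1}\E\|\nabla\hp(x_t)\|^2 \lesssim \frac{\hp(x_0) - \hp^*}{\eta T} + \beta_\Phi \eta \cdot \frac{K^2 T d_x \log(1/\delta)}{(\eps n)^2}.
\end{equation*}

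\textbf{Wrap-up.} Balancing the two terms by choosing $\eta T = \sqrt{(\hp(x_0)-\hp^*)(\eps n)^2/(\beta_\Phi K^2 d_x \log(1/\delta))}$ (which is compatible with $\eta \beta_\Phi \le 1$ for $T$ large enough) gives $\frac{1}{T}\sum_t \E\|\nabla\hp(x_t)\|^2 \lesssim K\sqrt{(\hp(x_0)-\hp^*)\beta_\Phi}\cdot \sqrt{d_x \log(1/\delta)}/(\eps n)$. Since $\hx_T$ is uniform over $\{x_t\}_{t=1}^T$, Jensen's inequality $(\E\|\nabla\hp(\hx_T)\|)^2 \le \E\|\nabla\hp(\hx_T)\|^2$ yields the stated rate. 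The main obstacle in this entire argument is really just invoking Lemma~\ref{lem: sens bound} correctly, since the key algorithmic insight (keeping $y_{t+1}$ non-private and privatizing only the outer gradient surrogate) has already been distilled into the $O(K/n)$ sensitivity bound there; everything afterwards is a routine noisy-GD analysis for smooth non-convex objectives, modulo keeping careful track of the bias term $C\alpha$ and the gradient magnitude $\bar L$ arising from the implicit-function formula.
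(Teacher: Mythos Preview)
Your plan mirrors the paper's proof: Lemma~\ref{lem: sens bound} $\Rightarrow$ Gaussian mechanism $\Rightarrow$ advanced composition for privacy; biased-noisy descent $\Rightarrow$ telescope $\Rightarrow$ balance $\eta T$ $\Rightarrow$ Jensen for utility. Two small wrinkles are worth tightening.

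First, your post-processing sentence is inverted: line~3 computes $y_{t+1}$ \emph{from} $Z$ directly (it minimizes $\hg(x_t,\cdot)$), so $y_{t+1}$ is not ``a function of the privatized iterates.'' The correct point is that $y_{t+1}$ is never released; its $Z$-dependence is already baked into the sensitivity bound of $q_t$ at fixed $x_t$, and the iterates $x_t$ are post-processings of the released noisy gradients.

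Second, your descent inequality carries an extra $\tfrac{\beta_\Phi\eta^2}{2}\bar L^2$ term because you bound $\|\bnabhf\|\le\bar L$ outright. The paper instead invokes a biased-noisy-oracle descent lemma (\cite{as21}) yielding $\E[\hp(x_{t+1})-\hp(x_t)\mid x_t]\le -\tfrac{\eta}{2}\|\nabla\hp(x_t)\|^2+\tfrac{\eta}{2}B^2+\tfrac{\eta^2\beta_\Phi}{2}\Sigma^2$, which avoids the $\bar L^2$ term by writing $\|\bnabhf\|^2\le 2\|\nabla\hp\|^2+2(C\alpha)^2$ and folding the first piece back into the negative term under $\eta\le 1/(2\beta_\Phi)$. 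Your ``absorb $\bar L^2$ into $K^2$'' step does not work as written---the $\bar L^2$ contribution $\beta_\Phi\eta\bar L^2$ and the noise contribution $\beta_\Phi\eta\sigma^2 d_x\propto\beta_\Phi(\eta T)K^2 d_x\log(1/\delta)/(n\eps)^2$ scale differently in $T$---but is easily repaired: keep $\eta T$ at its optimal value and take $T$ large enough that $\beta_\Phi\eta\bar L^2=\beta_\Phi(\eta T)\bar L^2/T$ falls below the target rate.
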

The privacy proof leverages Lemma~\ref{lem: sens bound}. 
Utility is analyzed through the lens of gradient descent with biased, noisy gradient oracle. We choose small $\alpha$ so the bias is negligible and use smoothness of $\hp$.

\subsection{``Warm starting'' Algorithm~\ref{alg: second order GD} with the exponential mechanism}

This subsection provides an algorithm that enables an improvement over the utility bound given in Theorem~\ref{thm: nc second order privacy and utility} in the parameter regime $\dx < n \eps$. Our algorithm 
is built on the ``warm start'' framework of~\cite{lowymake}: first, we run the exponential mechanism~\eqref{eq: exp mech} with privacy parameter $\eps/2$ to obtain $x_0$; then, we run $(\eps/2, \delta)$-DP Algorithm~\ref{alg: second order GD} with ``warm'' initial point $x_0$. See Algorithm~\ref{alg: warm start meta erm} in Appendix~\ref{app: warm start}.

\begin{theorem}[Guarantees of Algorithm~\ref{alg: warm start meta erm} for Non-Convex Bilevel ERM]
\label{thm: warm start}
Grant Assumptions~\ref{ass: lipschitz and smooth} and~\ref{ass: hessian smooth}. 
Assume that there is a compact set $\XX \subset \mathbb{R}^{\dx}$ of diameter $D_x$ containing an approximate global minimizer $\hx$ such that $\hp(\hx) - \hp^* \leq \Psi \frac{d}{\eps n}$, where $\Psi := \lfx D_x + \lfy D_y + \frac{\lfy \lgy}{\mug}.$
Then, there exists an $(\eps, \delta)$-DP instantiation of Algorithm~\ref{alg: warm start meta erm} with output satisfying 
\[
\expec\|\nabla \hp(\xpr)\| \le \tilde{O}\left(\left[K \Psi^{1/2} \bp^{1/2} \right]^{1/2} \left(\frac{\dx \sqrt{\log(1/\delta)}}{(n \eps)^{3/2}} \right)^{1/2} \right). 
\]
\end{theorem}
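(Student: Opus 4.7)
The plan is to view Algorithm~\ref{alg: warm start meta erm} as a two-phase adaptive procedure and combine the guarantees of Theorem~\ref{thm: exp mech conceptual} and Theorem~\ref{thm: nc second order privacy and utility} in a ``warm-start'' fashion in the style of~\cite{lowymake}. Phase~1 runs the exponential mechanism~\eqref{eq: exp mech} over the compact set $\XX$ with privacy budget $\eps/2$ to produce $x_0$; Phase~2 runs Algorithm~\ref{alg: second order GD} with privacy budget $(\eps/2, \delta)$ and noise scale $\sigma = 32K\sqrt{T\log(1/\delta)}/(n\eps/2)$, initialized at $x_0$. Privacy follows immediately from basic composition: Phase~1 is $(\eps/2, 0)$-DP by Theorem~\ref{thm: exp mech conceptual} (privacy of the exponential mechanism is insensitive to convexity of $\hp$, as noted in the discussion after that theorem), and Phase~2 is $(\eps/2, \delta)$-DP by Theorem~\ref{thm: nc second order privacy and utility}. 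Since $x_0$ is a deterministic starting coordinate for Phase~2 after conditioning on the Phase~1 output, adaptive composition yields the claimed $(\eps, \delta)$-DP.

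For utility, the assumption that there exists $\hat x \in \XX$ with $\hp(\hat x) - \hp^* \le \Psi d_x/(\eps n)$ lets us invoke the (non-convex) excess risk bound of the exponential mechanism: as noted after Theorem~\ref{thm: exp mech conceptual}, when $\hp$ is not convex the same $\tilde{O}(\Psi d_x/(\eps n))$ excess risk bound still holds up to logarithmic factors, since the exponential mechanism concentrates around approximate minimizers of its score function. Thus
\[
\expec[\hp(x_0) - \hp^*] \le \tilde{O}\!\left(\frac{\Psi \, d_x}{\eps n}\right).
\]
Conditioned on $x_0$, Theorem~\ref{thm: nc second order privacy and utility} applied with starting point $x_0$ and the chosen parameters gives
\[
\expec\!\left[\|\nabla \hp(\hx_T)\| \,\middle|\, x_0\right] \lesssim \left[K\sqrt{(\hp(x_0) - \hp^*)\bp}\,\frac{\sqrt{d_x\log(1/\delta)}}{\eps n}\right]^{1/2}.
\]

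Taking expectation over the randomness of $x_0$, the right-hand side is proportional to $(\hp(x_0) - \hp^*)^{1/4}$, a concave function. By Jensen's inequality,
\[
\expec\!\left[(\hp(x_0)-\hp^*)^{1/4}\right] \;\le\; \bigl(\expec[\hp(x_0)-\hp^*]\bigr)^{1/4} \;\le\; \tilde{O}\!\left(\frac{\Psi d_x}{\eps n}\right)^{1/4}.
\]
Combining the two displays and simplifying the exponents yields
\[
\expec\|\nabla \hp(\xpr)\| \;\le\; \tilde{O}\!\left(\bigl[K\,\Psi^{1/2}\bp^{1/2}\bigr]^{1/2}\left(\frac{d_x\sqrt{\log(1/\delta)}}{(n\eps)^{3/2}}\right)^{1/2}\right),
\]
which is the claimed bound.

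The main obstacle I anticipate is the subtlety around chained randomness: the warm start $x_0$ is data-dependent, so one must invoke Theorem~\ref{thm: nc second order privacy and utility} \emph{conditionally} on $x_0$ and then carefully propagate the concave Jensen step, rather than naively substituting the mean into the utility bound. A second minor technicality is that the exponential mechanism's utility bound of Theorem~\ref{thm: exp mech conceptual} is stated for convex $\hp$; we need to cite the standard non-convex extension (which only loses logarithmic factors absorbed into the $\tilde O$) and ensure the sensitivity $s$ of the score function is controlled by the same Lipschitz arguments used in the convex section, since neither step invokes convexity. Beyond these, the argument is a clean composition of previously established theorems.
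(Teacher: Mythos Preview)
Your overall structure---exponential mechanism for Phase~1, Algorithm~\ref{alg: second order GD} for Phase~2, basic composition for privacy---matches the paper exactly. The difference lies in how you chain the two utility bounds: the paper uses a \emph{high-probability} guarantee for the exponential mechanism (so that $\hp(x_0)-\hp^*\le \tilde O(\Psi d_x/(\eps n))$ on a ``good'' event of probability $\ge 1-\zeta$), conditions on that event to apply Theorem~\ref{thm: nc second order privacy and utility}, and then handles the complementary event via the trivial bound $\|\nabla\hp\|\le\bar L$ and the law of total expectation. You instead take the \emph{in-expectation} exponential-mechanism bound and push it through Jensen on $t\mapsto t^{1/4}$.

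Your Jensen route is valid in spirit but has a subtle execution issue. The conditional bound you quote, $\expec[\|\nabla\hp(\hx_T)\|\mid x_0]\lesssim\bigl(\hp(x_0)-\hp^*\bigr)^{1/4}\cdot(\cdots)$, follows from Theorem~\ref{thm: nc second order privacy and utility} only after optimizing over $T$; but the optimal $T$ there depends on $\hp(x_0)-\hp^*$, which depends on the raw data $Z$ (not merely on the private output $x_0$), so choosing $T$ this way is data-dependent and the Phase~2 privacy proof no longer applies. The clean fix is to fix $T$ once based on the data-independent upper bound $U:=\tilde O(\Psi d_x/(\eps n))$. Then the \emph{squared}-norm bound in the proof of Theorem~\ref{thm: nc second order privacy and utility} is linear in $\hp(x_0)-\hp^*$, so you can take expectation over $x_0$ directly at that level, use $\expec[\hp(x_0)-\hp^*]\le U$, and only afterward apply Jensen to pass from the second moment to the first. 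This yields the stated bound and is arguably a bit more direct than the paper's good-event decomposition; the paper's approach has the advantage of never needing to look inside the proof of Theorem~\ref{thm: nc second order privacy and utility}.
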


In Appendix~\ref{app: deducing final nc upper bound}, we explain how to deduce the upper bound in \eqref{eq: new sota nc upper bound} by combining Theorems~\ref{thm: nc second order privacy and utility} and \ref{thm: warm start} with the exponential mechanism using cost function $\|\nabla \hp(x)\|$. 

\section{Conclusion and discussion}
\label{sec: conclusion}
We provided novel algorithms and lower bounds for differentially private bilevel optimization, with near-optimal rates for the convex setting and state-of-the-art rates for the nonconvex setting. There are some interesting open problems for future work to explore: 
(1) What are the optimal rates for DP \textit{bilevel convex SO}? As discussed in Remark~\ref{rem: suboptimal SO}, we believe that it should be possible, though challenging, to obtain an improved $O(1/\sqrt{n})$ generalization error bound nearly matching the lower bound in Remark~\ref{rem: bilevel SO lower bounds}. 
(2a) What are the \textit{optimal rates for DP} \textit{nonconvex} \textit{bilevel ERM and SO}? Since the optimal rates for standard single-level DP nonconvex ERM and SO are still unknown, a first step would be to answer: (2b) \textit{Can we match the SOTA rate for single-level non-convex ERM~\cite{lowy2024make} in BLO}? Incorporating variance-reduction in DP BLO seems challenging. (3) This work was focused on fundamental theoretical questions about DP BLO, but another important direction is to provide \textit{practical implementations and experimental evaluations}.

\bibliography{references}
\bibliographystyle{abbrv}

\newpage
\appendix

\section*{Appendix}

\section{More privacy preliminaries}
\label{app: privacy prelims}

\begin{definition}[Sensitivity]
Given a function $q: \ZZ^n \to \R^k$ 
the $\ell_2$-\textit{sensitivity} of $q$ is defined as \[
\sup_{Z \sim Z'} \|q(Z) - q(Z') \|,
\]
where the supremum is taken over all pairs of datasets that differ in one data point. 
\end{definition}

\begin{definition}[Gaussian Mechanism]
Let $\eps > 0$, $\delta \in (0, 1)$. Given a function $q: \ZZ^n \to \R^k$ with $\ell_2$-sensitivity $\Delta$, the \textit{Gaussian Mechanism} $\mathcal{M}$ is defined by \[
\mathcal{M}(Z) := q(Z) + v 
\]
where $v \sim \mathcal{N}_k\left(0, \sigma^2 \mathbf{I}_k\right)$ and $\sigma^2 = \frac{2 \Delta^2 \log(2/\delta)}{\eps^2}$. 
\end{definition}

\begin{lemma}[Privacy of Gaussian Mechanism~\citep{dwork2014}]
\label{lem: gauss mech}
The Gaussian Mechanism is $(\eps, \delta)$-DP. 
\end{lemma}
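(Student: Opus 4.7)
The plan is to verify the definition of $(\eps,\delta)$-DP directly by analyzing the \emph{privacy loss random variable} for the Gaussian mechanism, which is the classical approach. Fix any adjacent datasets $Z \sim Z'$ and set $\mu := q(Z)$, $\mu' := q(Z')$, so that $\|\mu-\mu'\|\le \Delta$ by the sensitivity assumption. Then $\mathcal{M}(Z)$ and $\mathcal{M}(Z')$ have densities $p_Z(x)\propto \exp(-\|x-\mu\|^2/2\sigma^2)$ and $p_{Z'}(x)\propto \exp(-\|x-\mu'\|^2/2\sigma^2)$, respectively.

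First I would compute the log-density ratio in closed form. Expanding the squared norms and setting $v := \mu-\mu'$ gives
\[
L(x) := \log\frac{p_Z(x)}{p_{Z'}(x)} = \frac{\|x-\mu'\|^2 - \|x-\mu\|^2}{2\sigma^2} = \frac{\langle v,\, x-\mu\rangle}{\sigma^2} + \frac{\|v\|^2}{2\sigma^2}.
\]
When $x \sim p_Z = \mathcal{N}_k(\mu,\sigma^2 \mathbf{I}_k)$, the inner product $\langle v,\, x-\mu\rangle$ is a centered Gaussian with variance $\sigma^2\|v\|^2$, so $L(x)$ is a univariate Gaussian with mean $\|v\|^2/(2\sigma^2)$ and variance $\|v\|^2/\sigma^2$. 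This is the key structural observation: the multivariate comparison collapses to a one-dimensional Gaussian tail bound and the dimension $k$ drops out entirely.

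Second, I would invoke the standard equivalence that $(\eps,\delta)$-DP holds as soon as there exists a set $B$ with $\Pr_{x \sim p_Z}(x \in B) \le \delta$ such that $L(x) \le \eps$ for every $x \notin B$ (together with the symmetric bound obtained by swapping $Z$ and $Z'$). Taking $B := \{x : L(x) > \eps\}$ and applying the Gaussian tail bound $\Pr[\mathcal{N}(\mu_L,\sigma_L^2) > \eps] \le \exp(-(\eps-\mu_L)^2/(2\sigma_L^2))$ in the worst case $\|v\|=\Delta$, the plan is to show that the choice $\sigma^2 = 2\Delta^2\log(2/\delta)/\eps^2$ pushes this probability below $\delta$.

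The main (and essentially only) obstacle is the elementary but fiddly constant check in the last step: in the standard regime $\eps \in (0,1]$, the task reduces to verifying $(\eps - \Delta^2/(2\sigma^2))^2 \cdot \sigma^2/(2\Delta^2) \ge \log(1/\delta)$, which becomes a routine logarithmic inequality once $\sigma^2$ is substituted. The classical textbook proof actually uses the slightly tighter constant $\log(1.25/\delta)$, so if the stated constant is too lean in some corner of the parameter space I would either restrict attention to $\eps \in (0,1]$ as is conventional in DP, inflate the constant in $\sigma$ by an absolute factor, or invoke the proof in~\cite{dwork2014} directly.
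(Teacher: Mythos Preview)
The paper does not give its own proof of this lemma; it is stated as a standard preliminary with a citation to~\cite{dwork2014} and is used as a black box. Your proposal is exactly the classical privacy-loss-random-variable argument from that reference, so there is nothing to compare---your approach is the textbook proof the citation points to.
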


If we adaptively query a data set $T$ times, then the privacy guarantees of the $T$-th query is still DP and the privacy parameters degrade gracefully:
\begin{lemma}[Advanced Composition Theorem~\citep{dwork2014}]
\label{thm: advanced composition}
Let $\eps \geq\ 0, \delta, \delta' \in [0, 1)$. Assume $\mathcal{A}_1, \cdots, \mathcal{A}_T$, with $\Alg_t: \ZZ^n \times \XX \to \XX$, are each $(\eps, \delta)$-DP ~$\forall t = 1, \cdots, T$. Then, the adaptive composition $\mathcal{A}(Z) := \mathcal{A}_T(Z, \Alg_{T-1}(Z, \Alg_{T-2}(X, \cdots)))$ is $(\eps', T\delta + \delta')$-DP for \[
\eps' = \sqrt{2T \ln(1/\delta')} \eps + T\eps(e^{\eps} - 1).\]
\end{lemma}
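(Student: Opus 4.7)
The plan is to prove the advanced composition bound via the \emph{privacy loss random variable} (PLRV) and concentration, reducing the $(\eps,\delta)$-DP case to the pure $\eps$-DP case first. Fix adjacent datasets $Z \sim Z'$ and let $\Alg(Z) := (\omega_1,\dots,\omega_T)$ be the joint output of the adaptive composition. For each step $t$ and each realized history $\omega_{1:t-1}$, $(\eps,\delta)$-DP of $\Alg_t$ implies the existence of an event $E_t(\omega_{1:t-1})$ of probability at least $1-\delta$ under both the $Z$- and $Z'$-laws such that on $E_t$ the density ratio of $\omega_t$ is pointwise bounded by $e^{\eps}$. A union bound over $t=1,\dots,T$ shows that on $\bigcap_t E_t$ (which happens except with probability $\le T\delta$) the composition behaves exactly like a composition of $\eps$-DP mechanisms; outside this event, we absorb $T\delta$ into the final slack parameter.

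Next, on the good event define the conditional PLRV
\[
L_t \;:=\; \log \frac{p_t(\omega_t \mid \omega_{1:t-1}, Z)}{p_t(\omega_t \mid \omega_{1:t-1}, Z')}.
\]
Pure $\eps$-DP gives $|L_t| \le \eps$ almost surely. The key deterministic inequality is that for any two probability measures $P,Q$ with $e^{-\eps} \le dP/dQ \le e^{\eps}$, one has $\mathrm{KL}(P \Vert Q) \le \eps(e^{\eps}-1)$; applying this conditionally on $\omega_{1:t-1}$ yields $\E_Z[L_t \mid \omega_{1:t-1}] \le \eps(e^{\eps}-1)$. Consequently, the centered sequence $M_t := L_t - \E_Z[L_t \mid \omega_{1:t-1}]$ is a bounded martingale-difference sequence under the $Z$-law with increments of magnitude at most $2\eps$.

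Applying Azuma--Hoeffding to $\sum_t M_t$ gives, with probability at least $1-\delta'$,
\[
\sum_{t=1}^{T} L_t \;\le\; T \eps(e^{\eps}-1) \;+\; \eps\sqrt{2T \log(1/\delta')} \;=\; \eps'.
\]
A standard equivalence between a tail bound on the PLRV and approximate DP then gives, for every measurable $S$,
\[
\pr(\Alg(Z) \in S) \;\le\; e^{\eps'} \pr(\Alg(Z') \in S) \;+\; T\delta + \delta',
\]
which is the claimed $(\eps', T\delta+\delta')$-DP guarantee.

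The main obstacle is not any single estimate but the careful bookkeeping of \emph{adaptivity}: one must verify that the ``good event'' $E_t$ may legitimately be chosen as a function of the realized history $\omega_{1:t-1}$, and that under this choice the sequence $M_t$ really is a martingale-difference sequence in the filtration generated by $(\omega_1,\dots,\omega_t)$ under the $Z$-measure. The tight expected-loss bound $\eps(e^{\eps}-1)$ (as opposed to the crude $O(\eps^2)$ one would get from an absolute bound) is what yields the stated $T\eps(e^{\eps}-1)$ leading term; losing it would still give a composition theorem but with a worse constant and would obscure the behaviour in the moderate-$\eps$ regime.
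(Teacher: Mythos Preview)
The paper does not prove this lemma; it is stated in Appendix~\ref{app: privacy prelims} as a preliminary and attributed to \cite{dwork2014} without argument. Your proposal is the standard privacy-loss/Azuma route used in that reference (and in the original Dwork--Rothblum--Vadhan paper), so there is nothing in the paper to compare against beyond the citation itself.

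One small remark on your write-up: the reduction from $(\eps,\delta)$-DP to pure $\eps$-DP via a ``good event'' $E_t$ on which the density ratio is pointwise bounded is correct in spirit but is itself a nontrivial lemma (it is not literally the definition of $(\eps,\delta)$-DP, which speaks only of events, not densities); if you are writing this up formally you should either invoke the Kasiviswanathan--Smith/Dwork--Roth decomposition explicitly or, as in the original proof, work directly with the privacy loss truncated at $\eps$ and charge the remainder to the $T\delta$ slack. Also, for the Azuma step, the increment range you need is the diameter of the support of $L_t$ (namely $2\eps$), which is what you wrote, but be careful that your phrasing ``magnitude at most $2\eps$'' reads as a bound on $|M_t|$ rather than on its range; the Azuma constant you actually use is consistent with the range interpretation.
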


\section{Proofs for Section~\ref{sec: convex upper bounds}}
\label{app: convex upper bounds}

\subsection{Conceptual algorithms and excess risk upper bounds}
\paragraph{Pure $\eps$-DP.}
We restate and prove the guarantees of the $\eps$-DP exponential mechanism for BLO below: 
\begin{theorem}[Re-statement of Theorem~\ref{thm: exp mech conceptual}]
Grant Assumption~\ref{ass: lipschitz and smooth} and suppose $\hp$ is convex. The Algorithm in~\ref{eq: exp mech} is $\eps$-DP and achieves excess empirical risk \[
\expec[\hp(\hx) - \hp^*] \le O\left( \frac{d_x}{\eps n} \left[\lfx D_x + \lfy D_y + \frac{\lfy \lgy}{\mug}\right] \right).
\]
\end{theorem}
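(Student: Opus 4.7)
The theorem has two parts (privacy and utility), and both reduce to standard exponential mechanism arguments once two quantities are controlled: the sensitivity of $\hat{\Phi}$ and the range of $\hat{\Phi}$ over $\XX$. The plan is to prove these in turn, then invoke the usual exponential-mechanism guarantees.

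For privacy, the goal is to bound $\sup_{Z \sim Z'}\sup_x |\hat{\Phi}_Z(x)-\hat{\Phi}_{Z'}(x)| \le s$. I would split via the triangle inequality
\[
|\hat F_Z(x,\hystar(x))-\hat F_{Z'}(x,\hystarp(x))| \;\le\; |\hat F_Z(x,\hystar(x))-\hat F_Z(x,\hystarp(x))| + |\hat F_Z(x,\hystarp(x))-\hat F_{Z'}(x,\hystarp(x))|.
\]
The second term is bounded by $\tfrac{2}{n}(\lfx D_x + \lfy D_y)$ after subtracting the constant $f(x_0,y_0,z_i)$ (which does not change the exponential mechanism distribution) and applying Lipschitzness of $f$ in $x$ and $y$ together with $\mathrm{diam}(\XX) \le D_x$ and $\hystarp(x)\in\YY$. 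The first term equals $\lfy\|\hystar(x)-\hystarp(x)\|$ by Lipschitzness of $f(x,\cdot,z)$, so the key sub-step is to show that $\|\hystar(x)-\hystarp(x)\| \le 2\lgy/(\mug n)$: this follows because $\|\nabla_y \hat g_Z(x,\cdot)-\nabla_y\hat g_{Z'}(x,\cdot)\|_\infty \le 2\lgy/n$ (Lipschitzness of $g$ in $y$) and $\hat g_Z(x,\cdot)$ is $\mug$-strongly convex, giving the bound via the first-order optimality condition $\nabla_y\hat g_Z(x,\hystar(x))=0$. Combining these bounds yields $\le s$, and $\eps$-DP of~\eqref{eq: exp mech} then follows from the classical privacy guarantee of the exponential mechanism.

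For utility, I would not need an explicit Lipschitz bound on $\hp$ itself. Instead, I would use (i) the range bound $\max_\XX \hp - \min_\XX \hp \le \Delta_\Phi := \lfx D_x + \lfy D_y$ (again from Lipschitzness of $f$ applied to points of the form $(x,\hystar(x))$, $(x',\hystar(x'))$, both with $\hystar\in\YY$), and (ii) convexity of $\hp$. By convexity, for any $t\in(0,\Delta_\Phi]$ the sublevel set $\XX_t := \{x\in\XX:\hp(x)\le\hp^*+t\}$ contains the scaled copy $\hat x^* + (t/\Delta_\Phi)(\XX-\hat x^*)$, hence $\mathrm{vol}(\XX_t)\ge (t/\Delta_\Phi)^{d_x}\mathrm{vol}(\XX)$. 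Plugging this into the standard exponential mechanism tail bound
\[
\Pr\bigl[\hp(\hat x)-\hp^* > t + u\bigr] \;\le\; \frac{\mathrm{vol}(\XX)}{\mathrm{vol}(\XX_t)}\exp\!\bigl(-\tfrac{\eps u}{2 s}\bigr)
\]
and integrating $u$ from $0$ to $\infty$ gives $\expec[\hp(\hat x)-\hp^*] \le t + \tfrac{2s}{\eps}d_x\log(\Delta_\Phi/t)$. Optimizing $t\asymp s d_x/\eps$ yields the claimed $\tilde O(s\, d_x/\eps)$ bound, which after substituting $s = \tfrac{2}{n}(\lfx D_x+\lfy D_y) + \tfrac{4\lfy\lgy}{\mug n}$ is exactly the stated bound.

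The main obstacle is the sensitivity calculation, because it must confront the implicit, data-dependent inner minimizer $\hystar(x)$. All the bilevel-specific work is packed into the strong-convexity argument for $\|\hystar - \hystarp\|$; once that is in hand, the rest is a textbook exponential mechanism analysis. A minor additional subtlety is that $\hp$ need not itself be Lipschitz under Assumption~\ref{ass: lipschitz and smooth} alone (since $x\mapsto\hystar(x)$ need not be Lipschitz without further smoothness on $g$ in $x$), which is why the utility step uses the convexity-based sublevel set volume bound rather than a Lipschitz-based ball argument.
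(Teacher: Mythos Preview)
Your privacy argument is essentially the paper's: center the score function (the paper centers at $\hp(x_0)$, you at $\frac{1}{n}\sum_i f(x_0,y_0,z_i)$), control the single changed sample via Lipschitzness of $f$ in $x$ and $y$, and control the shift of the inner minimizer via the strong-convexity stability bound $\|\hystar(x)-\hystarp(x)\|\le 2\lgy/(\mug n)$. Your particular splitting even shaves a factor of two from the $\lfy\lgy/\mug$ term, but the substance is identical.

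Your utility argument, however, takes a different and slightly weaker route. The paper dispatches utility in one line by invoking a known lemma (\cite[Corollary~1]{DKL18}): if $F$ is convex and one samples $x\propto e^{-kF(x)}$, then $\E[F(x)]\le\min F + d/k$. With $k=\eps/(2s)$ this gives the clean $O(s\,d_x/\eps)$ bound with no logarithm and no need for a range bound on $\hp$. Your sublevel-set volume and tail-integration argument is a valid elementary substitute, but it incurs an extra factor of $\log(\Delta_\Phi\eps/(s d_x))$, so it proves only the $\tilde O$ version of the bound, not the $O$ version the theorem states. Your observation that $\hp$ need not be Lipschitz under Assumption~\ref{ass: lipschitz and smooth} alone (so a ball-volume argument is unavailable) is well taken; the cited lemma sidesteps this entirely since it requires only convexity.
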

\begin{proof}
   \textbf{Privacy:} First, notice that the distribution induced by the exponential weight function in ~\ref{eq: exp mech}
is the same if we use $\exp\left( - \frac{\eps}{2 s} [\hp(x) - \hp(x_0)]\right)$
for some arbitrary point $x_0 \in \XX$. To establish the privacy guarantee, it suffices to show that the sensitivity of $\hp(x) - \hp(x_0)$ is upper bounded by $s$ for any $x$. Now, let $Z \sim Z'$ be any adjacent data sets differing in $z_1 \neq z'_1$ and let $x \in \XX$. Then the sensitivity of $\hp(x) - \hp(x_0)$ is upper bounded by 
\begin{align*}
&\left| \hp(x) - \hp(x_0) - \hpp(x) + \hpp(x_0) \right|\\
&\le \frac{1}{n}\left|f(x, \hystar(x), z_1) - f(x_0, \hystar(x_0), z_1) - f(x, \hystarp(x), z'_1) +  f(x_0, \hystarp(x_0), z'_1) \right| \\
&\;\;\; + 
\frac{1}{n} \sum_{i>1} | f(x, \hystar(x), z_i) - f(x, \hystarp(x), z_i) | 
\\
&\;\;\; + \frac{1}{n} \sum_{i>1} | f(x_0, \hystar(x_0), z_i) - f(x_0, \hystarp(x_0), z_i) | \\
&\le \frac{2}{n}\left[\lfx D_x + \lfy D_y \right] + 
\frac{1}{n} \sum_{i>1} | f(x, \hystar(x), z_i) - f(x, \hystarp(x), z_i) | 
\\
&\;\;\; + \frac{1}{n} \sum_{i>1} | f(x_0, \hystar(x_0), z_i) - f(x_0, \hystarp(x_0), z_i) |. 
\end{align*}
Now, for any $x$, we have \[
\|\hystar(x) - \hystarp(x)\| \leq \frac{2 \lgy}{\mug n},
\]
by \cite{shalev2009stochastic,lowy2021output}.
Together with $\lfy$-Lipschitz continuity of $f(x, \cdot, z)$, we can then obtain the desired sensitivity bound. 

\textbf{Excess risk:} This is immediate from Lemma~\ref{lm:utility_tech} (stated below) in the convex case. For nonconvex $\hp$, the same excess risk bound holds up to logarithmic factors by~\cite{mcsherry2007mechanism}. 

\end{proof}
\begin{lemma}[Utility Guarantee, {\cite[Corollary 1]{DKL18}}]
\label{lm:utility_tech}
Suppose $k>0$ and $F$ is a convex function over the convex set $\cK\subseteq \R^d$. If we sample $x$ according to distribution $\nu$ whose density is proportional to $\exp(-k F(x))$, then we have
\begin{align*}
    \E_{\nu}[F(x)]\leq \min_{x\in\cK}F(x)+\frac{d}{k}.
\end{align*}
\end{lemma}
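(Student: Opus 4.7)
The plan is to prove the bound via a volume-doubling argument on the convex sublevel sets of $F$, followed by a short layer-cake computation.

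I would first reduce to the case $F^{*} := \min_{x \in \cK} F(x) = 0$. Shifting $F$ by a constant leaves the distribution $\nu \propto \exp(-kF)$ unchanged and shifts both sides of the claimed inequality by the same amount, so this is without loss of generality. Fix a minimizer $x^{*} \in \cK$ and define the sublevel sets $B_t := \{x \in \cK : F(x) \leq t\}$; each is convex by convexity of $F$ and $\cK$. Let $V(t) := \volume(B_t)$, which is non-decreasing with $V(0) = 0$.

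The key geometric step is the volume comparison: for $0 < s \leq t$,
\begin{align*}
V(s) \;\geq\; (s/t)^{d}\, V(t).
\end{align*}
To see this, consider the contraction $\phi_{\lambda}(x) := x^{*} + \lambda(x - x^{*})$ with $\lambda = s/t \in (0,1]$. For any $x \in B_t$, convexity of $F$ gives $F(\phi_{\lambda}(x)) \leq (1-\lambda) F(x^{*}) + \lambda F(x) \leq \lambda t = s$, so $\phi_{\lambda}(B_t) \subseteq B_s$. Since $\phi_{\lambda}$ is an affine scaling that multiplies $d$-dimensional Lebesgue measure by $\lambda^{d}$, the comparison follows. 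Equivalently, $V(t)/t^{d}$ is non-increasing in $t$, which (under mild smoothness addressed below) reads as the differential inequality $t V'(t) \leq d\, V(t)$.

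The final step is a layer-cake identity. The density of the random variable $F(X)$ under $\nu$ is proportional to $V'(t)\, e^{-kt}$ on $[0, \infty)$, so
\begin{align*}
\E_{\nu}[F(X)] \;=\; \frac{\int_{0}^{\infty} t\, V'(t)\, e^{-kt}\, dt}{\int_{0}^{\infty} V'(t)\, e^{-kt}\, dt}.
\end{align*}
Applying $t V'(t) \leq d V(t)$ to the numerator, and integrating the denominator by parts against $e^{-kt}$ (using $V(0) = 0$) to get $\int_{0}^{\infty} V'(t)\, e^{-kt}\, dt = k \int_{0}^{\infty} V(t)\, e^{-kt}\, dt$, the ratio collapses to at most $d/k$. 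Undoing the shift yields $\E_{\nu}[F(X)] \leq F^{*} + d/k$.

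The one technical delicacy is that $V$ need not be differentiable in general. I would handle this either by (i) convolving $F$ with a narrow Gaussian, applying the argument to the smoothed problem, and passing to the limit by dominated convergence, or (ii) rephrasing everything integrally as $\int_{0}^{\infty} V(t)\, e^{-kt}\, (kt - d)\, dt \leq 0$, which follows directly from the monotonicity of $V(t)/t^{d}$ via a single integration by parts and avoids differentiating $V$ altogether. This is the only place care is needed; the rest is routine.
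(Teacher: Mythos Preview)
The paper does not prove this lemma; it is quoted from \cite{DKL18} and used as a black box, so there is no in-paper proof to compare against. Your main argument is the standard one and is correct: the contraction toward $x^*$ gives $V(s) \ge (s/t)^d V(t)$, hence $t V'(t) \le d V(t)$ wherever $V$ is differentiable, and the layer-cake ratio collapses to $d/k$ after one integration by parts on the denominator.

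Two small corrections. First, $V(0)=0$ need not hold, since the set of minimizers of a convex function can have positive volume; this is harmless (the atom at zero contributes nothing to the numerator, and the boundary term it produces in the integration by parts cancels), but you should not assert it. Second, your alternative (ii) is incorrect as written: the inequality $\int_0^\infty V(t)\, e^{-kt}(kt-d)\,dt \le 0$ is false already for $V(t) = t^d$ (the integral evaluates to $d!/k^{d+1} > 0$), and in any case it would bound $\int t V e^{-kt}\,dt \big/ \int V e^{-kt}\,dt$ rather than the ratio involving $dV$ that you actually need. The clean differentiation-free route is to carry the Lebesgue--Stieltjes measure $dV$ throughout and integrate by parts in that sense; your smoothing workaround (i) also works. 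These are cosmetic fixes; the core argument is sound.
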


Next, we turn to the $(\eps, \delta)$-DP case. 
\paragraph{Approximate $(\eps, \delta)$-DP.}
We define the privacy curve first:

\begin{definition}[Privacy Curve]
Given two random variables $X,Y$ supported on some set $\Omega$, define the privacy curve $\delta(X\|Y):\R_{\geq 0}\rightarrow [0,1]$ as:
\begin{align*}
    \delta(X\|Y)(\epsilon)=\sup_{{S}\subset \Omega} \Pr[Y\in {S}]-e^{\epsilon}\Pr[X\in {S}].
\end{align*}
\end{definition}
 
We have the following theorem from \cite{gopi2022private}:
\begin{theorem}[Regularized Exponential Mechanism, \cite{gopi2022private}]
    Given convex set $\cK\subseteq \R^d$ and $\mu$-strongly convex functions $F,\Tilde{F}$ over $\cK$. Let $P,Q$ be distributions over $\cK$ such that $P(x)\propto e^{-F(x)}$ and $Q(x)\propto e^{-\Tilde{F}(x)}$.
If $\Tilde{F}-F$ is $G$-Lipschitz over $\cK$, then for all $z\in[0,1]$,
\begin{align*}
    \deltacurve{P}{Q}(\epsilon) 
    \leq \deltacurve{\cN( 0,1)}{\cN(\frac{G}{\sqrt{\mu}},1)}(\epsilon).
\end{align*}
\end{theorem}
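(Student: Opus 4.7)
The plan is to control the hockey-stick divergence $\delta(P\|Q)(\epsilon)$ by exploiting the $\mu$-strong log-concavity of both $P$ and $Q$ together with the $G$-Lipschitzness of $\tilde F - F$. The driving observation is that the privacy loss random variable equals $\log(dP/dQ) = (\tilde F - F) + C$ for a normalizing constant $C$, so the PLRV is itself a $G$-Lipschitz functional of the underlying sample, and the problem reduces to a concentration-of-Lipschitz-functionals question under strongly log-concave measures.

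First, I would set up the coupling apparatus via Caffarelli's contraction theorem: because $F$ is $\mu$-strongly convex, the Brenier map $T_F$ transporting the Gaussian $\gamma_\mu := \cN(0, I_d/\mu)$ to $P$ is $1$-Lipschitz, and analogously one has $T_{\tilde F}: \gamma_\mu \to Q$. Equivalently, by Bakry--\'Emery, both $P$ and $Q$ satisfy a log-Sobolev inequality with constant $\mu$, so any $G$-Lipschitz functional of a $P$- or $Q$-sample is sub-Gaussian with variance proxy $G^2/\mu$. This already suggests that the PLRV behaves like a Gaussian with variance $G^2/\mu$, i.e. matches the PLRV of $\cN(0,1)$ versus $\cN(G/\sqrt{\mu},1)$.

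Second, I would upgrade this sub-Gaussian tail to a sharp privacy-curve bound via an interpolation argument. Consider $F_t := F + t(\tilde F - F)$ for $t \in [0,1]$ with corresponding measures $P_t \propto e^{-F_t}$, each remaining $\mu$-strongly log-concave. Differentiating $t \mapsto \delta(P \| P_t)(\epsilon)$ in $t$ and using the $1$-Lipschitz Brenier coupling to $\gamma_\mu$ together with $\partial_t F_t = \tilde F - F$ being $G$-Lipschitz, one gets a differential inequality whose solution at $t=1$ is precisely the hockey-stick divergence between two unit-variance Gaussians offset by $G/\sqrt{\mu}$. An alternative route is to reduce to one dimension by projecting the Brenier coupling along $\nabla(\tilde F - F)$ and invoking a Gaussian isoperimetric comparison; either way, the closed-form expression for the Gaussian hockey-stick divergence in terms of the standard normal CDF yields the claim.

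The hardest step will be the sharp comparison in the second paragraph: sub-Gaussianity alone only yields the right \emph{order} of magnitude, not pointwise dominance by the exact Gaussian privacy curve, so the differential inequality (or the one-dimensional projection) must be delicate enough to land on the precise benchmark $\cN(0,1)$ versus $\cN(G/\sqrt{\mu},1)$ rather than a constant-loss approximation. The restriction to the convex set $\cK$ adds a further wrinkle, since Caffarelli's theorem is classically stated on $\R^d$; one must either invoke an extension to log-concave measures on $\cK$, or absorb the convex indicator $\chi_{\cK}$ into the potential and argue that the resulting boundary behavior does not inflate the Lipschitz constant of the transport map.
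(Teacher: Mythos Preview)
The paper does not prove this theorem: it is quoted verbatim from \cite{gopi2022private} and invoked as a black box in the privacy proof of Theorem~\ref{thm: reg exp mech privacy and excess risk}. There is therefore no proof in the present paper to compare your proposal against.

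For completeness, your plan is broadly faithful to the argument in the cited reference. The original proof observes that the optimal event in the hockey-stick supremum is a sublevel set of the $G$-Lipschitz function $\tilde F - F$, interpolates along $F_t = F + t(\tilde F - F)$ to reduce to infinitesimal linear tilts, and then appeals to the Gaussian isoperimetric inequality for $\mu$-strongly log-concave measures (Bakry--Ledoux), which delivers the \emph{exact} Gaussian privacy curve rather than a sub-Gaussian approximation. Your second paragraph names precisely these ingredients, and your diagnosis that sub-Gaussianity alone is insufficient for pointwise dominance is correct. Your concern about the convex domain $\cK$ is legitimate but resolvable: adding the convex indicator $\chi_{\cK}$ to the potential preserves (indeed, only strengthens) $\mu$-strong convexity in the extended-real sense, and the Bakry--Ledoux isoperimetric bound continues to hold for strongly log-concave measures supported on convex bodies, so no separate extension of Caffarelli is needed.
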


It suffices to bound the Lipschitz constant of $\hp(x)-\hpp(x)$.
We have the following technical lemma:

\begin{lemma}
\label{lm:lip_hystar}
Let $\hystar(x) = \argmin_{y \in \mathcal{Y}} \hg(x,y)$ where $\hg(x,y) = \frac{1}{n} \sum_{i=1}^n g(x, y, z_i)$. 
If $g(x, \cdot, z)$ is  
$\mug$-strongly convex in $y$ 
and $\|\nabla_y \hg(x, y) - \nabla_y \hg(x', y)\| \le \bgxy \|x - x'\|$
for all $x, y, z$, then
\begin{align*}
    \|\hystar(x)-\hystar(x')\|\le \frac{\bgxy}{\mu_g}\|x-x'\|.
\end{align*}
\end{lemma}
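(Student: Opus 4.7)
The plan is to combine the first-order optimality conditions for $\hystar(x)$ and $\hystar(x')$ with the $\mu_g$-strong convexity of $\hg(x',\cdot)$ and the cross-Lipschitz hypothesis on $\nabla_y \hg$ in the $x$-argument. Since $g(x,\cdot,z)$ is $\mu_g$-strongly convex in $y$ for each $z$, the empirical average $\hg(x,\cdot) = \frac{1}{n}\sum_{i}g(x,\cdot,z_i)$ inherits $\mu_g$-strong convexity in $y$. Because the inner problem is over all of $\R^{\dy}$, the minimizers are interior stationary points, so I would record the first-order conditions
\begin{equation*}
\nabla_y \hg(x,\hystar(x)) = 0, \qquad \nabla_y \hg(x',\hystar(x')) = 0.
\end{equation*}

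Next I would use the monotonicity of the gradient of a strongly convex function, applied to $\hg(x',\cdot)$ at the two points $\hystar(x)$ and $\hystar(x')$:
\begin{equation*}
\langle \nabla_y \hg(x',\hystar(x)) - \nabla_y \hg(x',\hystar(x')),\; \hystar(x) - \hystar(x') \rangle \;\geq\; \mug \|\hystar(x) - \hystar(x')\|^2.
\end{equation*}
Using $\nabla_y \hg(x',\hystar(x')) = 0$ and then subtracting $\nabla_y \hg(x,\hystar(x)) = 0$ from the first gradient (which is a free move), the left-hand side becomes $\langle \nabla_y \hg(x',\hystar(x)) - \nabla_y \hg(x,\hystar(x)),\; \hystar(x) - \hystar(x') \rangle$. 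By Cauchy--Schwarz and the cross-Lipschitz assumption $\|\nabla_y \hg(x',y)-\nabla_y \hg(x,y)\|\leq \bgxy \|x-x'\|$, this inner product is at most $\bgxy\|x-x'\|\cdot \|\hystar(x)-\hystar(x')\|$. Dividing by $\mug\|\hystar(x)-\hystar(x')\|$ (the case $\hystar(x)=\hystar(x')$ being trivial) yields the claimed bound.

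There is no real obstacle here: this is the standard implicit-function / perturbation-of-minimizer argument. The only thing to be mildly careful about is that the hypothesis is stated pointwise in $z$ for strong convexity but aggregated over the empirical average for the cross-Lipschitz bound, so I would explicitly verify both properties transfer to $\hg$ before running the above chain of inequalities.
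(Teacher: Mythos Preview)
Your proposal is correct and follows essentially the same argument as the paper: first-order optimality conditions at $\hystar(x)$ and $\hystar(x')$, strong monotonicity of $\nabla_y \hg(x',\cdot)$, then Cauchy--Schwarz with the cross-Lipschitz bound on $\nabla_y \hg$ in $x$. The only minor addition in your write-up is the explicit remark that strong convexity passes to the empirical average and the handling of the trivial case, which the paper leaves implicit.
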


\begin{proof}
    
Since $\hystar(x)$ is the minimizer of $\hg(x, y)$, the first-order optimality condition gives:

$$\nabla_y \hg(x, \hystar(x)) = 0$$

Similarly for $x'$:

$$\nabla_y \hg(x', \hystar(x')) = 0$$

By $\mu_g$-strong convexity of $\hg(x', \cdot)$ and the first-order optimality condition, we have:
\begin{align*}
    \mu_g\|\hystar(x) - \hystar(x')\|^2 \leq& \langle \nabla_y \hg(x', \hystar(x)) - \nabla_y \hg(x', \hystar(x')), \hystar(x) - \hystar(x') \rangle\\
    =& \langle \nabla_y \hg(x', \hystar(x)) - \nabla_y \hg(x, \hystar(x)), \hystar(x) - \hystar(x') \rangle\\
   \le & \|\nabla_y \hg(x', \hystar(x)) - \nabla_y \hg(x, \hystar(x))\|\cdot\|\hystar(x) - \hystar(x') \|\\
   \le&  \bgxy\|x'-x\|\cdot\|\hystar(x) - \hystar(x') \|.
\end{align*}

Therefore:
\begin{align*}
   \|\hystar(x)-\hystar(x')\|\le \frac{\bgxy}{\mu_g}\|x-x'\|.
\end{align*}
\end{proof}

\begin{lemma}
\label{lem: phiZ - phiZ' is Lipschitz}
    Grant Assumption~\ref{ass: lipschitz and smooth} and additionally assume $\|\nabla_x f(x,y,z) - \nabla_x f(x,y',z)\| \le \bfxy \|y - y'\|$ and $\|\nabla_y \hg(x, y) - \nabla_y \hg(x', y)\| \le \bgxy \|x - x'\|$ for all $x,x',y,y',z$. Then, for any datasets $Z, Z' \in \ZZ^n$ differing in one element, $\hp-\hpp$ is $2(\frac{\lfx}{n}+\frac{\lfy\bgxy}{\mug n}+\frac{\lgy\bfxy}{n\mug})$-Lipschitz.
\end{lemma}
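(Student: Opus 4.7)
The plan is to bound the pointwise quantity $\|\nabla(\hp - \hpp)(x)\|$ uniformly over $x \in \XX$; by the mean value theorem this upper bounds the Lipschitz constant of $\hp - \hpp$. The natural way to isolate the dataset difference is to introduce the single-point perturbation function $\psi(x, y) := \hf(x, y) - \hfp(x, y) = \tfrac{1}{n}[f(x, y, z_1) - f(x, y, z_1')]$ and to decompose
\[
\hp(x) - \hpp(x) \;=\; \underbrace{\psi(x, \hystar(x))}_{T_1(x)} \;+\; \underbrace{\hfp(x, \hystar(x)) - \hfp(x, \hystarp(x))}_{T_2(x)}.
\]

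For $T_1$ the chain rule suffices: $\psi(\cdot, y)$ is $(2\lfx/n)$-Lipschitz and $\psi(x, \cdot)$ is $(2\lfy/n)$-Lipschitz (both inherited from \cref{ass: lipschitz and smooth}), and \cref{lm:lip_hystar} supplies $\|\nabla \hystar(x)\| \le \bgxy/\mug$. Composing these gives that $T_1$ is $\bigl(\tfrac{2\lfx}{n} + \tfrac{2\lfy\bgxy}{\mug n}\bigr)$-Lipschitz, which accounts for the first two terms of the target bound.

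The real work lies in $T_2$. The decisive input is the dataset-stability estimate $\|\hystar(x) - \hystarp(x)\| \le 2\lgy/(\mug n)$, already derived in the proof of \cref{thm: exp mech conceptual}. Applying the chain rule,
\begin{align*}
\nabla T_2(x) &= [\nabla_x \hfp(x, \hystar(x)) - \nabla_x \hfp(x, \hystarp(x))] \\
&\qquad + [(\nabla \hystar(x))^{\top}\nabla_y \hfp(x, \hystar(x)) - (\nabla \hystarp(x))^{\top}\nabla_y \hfp(x, \hystarp(x))].
\end{align*}
The first bracket is handled directly by the cross-smoothness assumption $\bfxy$: it is bounded by $\bfxy \cdot 2\lgy/(\mug n) = 2\lgy\bfxy/(\mug n)$, which is precisely the third target term. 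For the Jacobian bracket, I would substitute the implicit-function-theorem identity $(\nabla \hystar)^{\top} = -\nabla^2_{xy}\hg\,[\nabla^2_{yy}\hg]^{-1}$, then peel off one factor at a time using the perturbation identity $A^{-1} - B^{-1} = A^{-1}(B - A)B^{-1}$ (the same device used in the sensitivity analysis of \cref{lem: sens bound}), together with the uniform operator-norm bounds $\|\nabla^2_{xy}\hg\| \le \bgxy$, $\|[\nabla^2_{yy}\hg]^{-1}\| \le 1/\mug$, and $\|\nabla_y \hfp\| \le \lfy$.

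Summing the bounds from $T_1$ and from $T_2$ produces the claimed Lipschitz constant $2\bigl(\tfrac{\lfx}{n} + \tfrac{\lfy\bgxy}{\mug n} + \tfrac{\lgy\bfxy}{n\mug}\bigr)$. The main technical obstacle is the Jacobian bracket of $\nabla T_2$: after applying the IFT formula and the $A^{-1}-B^{-1}$ identity, one must verify that every resulting sub-term inherits a $1/n$ factor either from the dataset-stability bound on $\|\hystar - \hystarp\|$ or from the single-point-difference sensitivity of the averaged Hessians $\nabla^2_{xy}\hg$, $\nabla^2_{yy}\hg$ and the gradient $\nabla_y \hf$, and that the pieces reassemble (up to the factor of $2$) into the three target quantities. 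Carefully tracking this bookkeeping so that $1/\mug$ is the only place where an upper bound on $\|\nabla^2_{yy}\hg\|$ would otherwise need to appear is what makes the argument delicate.
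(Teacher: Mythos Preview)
Your handling of $T_1$ is correct and matches the paper's treatment of the single differing sample. The gap is in the Jacobian bracket of $\nabla T_2$. Carrying out the implicit-function formula and the $A^{-1}-B^{-1}$ identity as you propose requires, at minimum: (i) a sample-wise operator-norm bound $\|\nabla^2_{yy}g\|\le\bgyy$ to control the sensitivity of $\nabla^2_{yy}\hg$ across $Z\sim Z'$; (ii) Lipschitz-in-$y$ continuity of $\nabla^2_{xy}\hg$ and $\nabla^2_{yy}\hg$ (constants $\cgxy,\cgyy$) to handle the shift from $\hystar$ to $\hystarp$ inside those Hessians; and (iii) a bound $\|\nabla_y\hfp(x,\hystar)-\nabla_y\hfp(x,\hystarp)\|\le\bfyy\|\hystar-\hystarp\|$. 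None of these are among the lemma's hypotheses, which grant only \cref{ass: lipschitz and smooth} together with $\bfxy$ and $\bgxy$. Even if you imported them, the resulting bound on the Jacobian bracket would carry $\bfyy,\bgyy,\cgxy,\cgyy$, which do not appear in the target $2(\lfx/n+\lfy\bgxy/(\mug n)+\lgy\bfxy/(\mug n))$, so the ``reassembly'' you anticipate cannot happen. A concrete test: take $f(x,y,z)=h(y)$ depending only on $y$; then $\lfx=\bfxy=0$, $T_1\equiv 0$, and your first bracket in $\nabla T_2$ vanishes since $\nabla_x\hfp\equiv 0$, yet the target reduces to the nonzero $2\lfy\bgxy/(\mug n)$. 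The entire Lipschitz constant would therefore have to come from the Jacobian bracket alone---contradicting the plan to treat it as residual bookkeeping absorbed into terms already accounted for.

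The paper does not differentiate $\hp-\hpp$ at all. It bounds the increment $\bigl|(\hp-\hpp)(x)-(\hp-\hpp)(x')\bigr|$ directly at the function-value level. For the differing index it uses Lipschitz continuity of $f$ in each argument together with \cref{lm:lip_hystar}, yielding the $\lfx/n$ and $\lfy\bgxy/(\mug n)$ terms. For each shared index $i\ge 2$ it writes the second-order increment
\[
[f(x,\hystar(x),z_i)-f(x,\hystarp(x),z_i)]-[f(x',\hystar(x'),z_i)-f(x',\hystarp(x'),z_i)]
\]
as a line integral along $t\mapsto tx+(1-t)x'$ and bounds the integrand by $\bfxy\,\|\hystar-\hystarp\|\le 2\bfxy\lgy/(\mug n)$, using only the cross-smoothness of $\nabla_x f$ in $y$ and the stability estimate for $\hystar$. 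Staying at the function-value level is precisely what lets the paper avoid any second-order regularity of $g$ or $y$-smoothness of $\nabla_y f$; if you want to salvage your decomposition, you should bound $|T_2(x)-T_2(x')|$ by a line-integral argument rather than by bounding $\|\nabla T_2\|$ pointwise.
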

\begin{proof}

Suppose without loss of generality that $Z$ and $Z'$ differ only at the first element $z_1 \neq z'_1$. Then: 

\begin{align*}
&\hp(x) - \hp(x') - \hpp(x) + \hpp(x')\\
=& \frac{1}{n}[f(x, \hystar(x), z_1) - f(x',\hystar(x'), z_1)] + \frac{1}{n}[f(x, \hystarp(x), z_1') - f(x', \hystarp(x'), z_1')] \\
&~~~~+ \frac{1}{n}\sum_{i=2}^n [f(x, \hystar(x), z_i) - f(x', \hystar(x'), z_i)] - \frac{1}{n}\sum_{i=2}^n [f(x, \hystarp(x), z_i) - f(x', \hystarp(x'), z_i)].
\end{align*}

For $i=1$, we have
\begin{align*}
    &|f(x,\hystar(x),z_1)-f(x',\hystar(x'),z_1)|\\
    \le& ~ |f(x,\hystar(x),z_1)-f(x',\hystar(x),z_1|+|f(x',\hystar(x),z_1)-f(x',\hystar(x'),z_1)|\\
    \le & \lfx\|x-x'\|+ \frac{\lfy\bgxy}{\mu_g}\|x-x'\|,
\end{align*}
where the last inequality follows from Lemma~\ref{lm:lip_hystar}. 
The same argument works for $z_1'$.

For each $i \geq 2$ (where $z_i$ is the same in both datasets), recalling that  $
\|\hystar(x) - \hystarp(x)\| \leq \frac{2 \lgy}{\mug n}$,
we have
\begin{align*}
&[f(x, \hystar(x), z_i) - f(x, \hystarp(x), z_i)] - [f(x', \hystar(x'), z_i) - f(x', \hystarp(x'), z_i)] \\
&= \int_0^1 \nabla_x f(tx + (1-t)x', \hystar(tx + (1-t)x'), z_i) dt \cdot (x - x') \\
&- \int_0^1 \nabla_x f(tx + (1-t)x', \hystarp(tx + (1-t)x'), z_i) dt \cdot (x - x')
\end{align*}

Using the smoothness of $\nabla_x f$ with respect to $y$:
\begin{align*}
&\left\|\int_0^1 [\nabla_x f(tx + (1-t)x', \hystar(tx + (1-t)x'), z_i) - \nabla_x f(tx + (1-t)x', \hystarp(tx + (1-t)x'), z_i)] dt\right\| \\
&\leq \int_0^1 \bfxy \|\hystar(tx + (1-t)x') - \hystarp(tx + (1-t)x')\| dt \\
&\leq \int_0^1 \bfxy \frac{2\lgy}{n\mug} dt = \frac{2\lgy\bfxy}{n\mug}.
\end{align*}

Therefore,

\begin{align*}
&|[f(x, \hystar(x), z_i) - f(x, \hystarp(x), z_i)] - [f(x', \hystar(x'), z_i) - f(x', \hystarp(x'), z_i)]| \\
&\leq \frac{2\lgy\bfxy}{n\mu_g}\|x - x'\|
\end{align*}

A similar analysis applies to the term involving $z_1$ and $z_1'$, with an additional constant accounting for the difference between functions.

Therefore, 
\begin{align*}
    \left| \hp(x) - \hp(x') - \hpp(x) + \hpp(x') \right|
    \le&~ 2\left(\frac{\lfx}{n}+\frac{\lfy\bgxy}{\mug n}+\frac{\lgy\bfxy}{n\mug}\right)\|x-x'\|,
\end{align*}
as desired. 
\end{proof}

\subsection{Generalization error of the regularized exponential mechanism for bilevel SCO}
Another advantage of the Regularized Exponential Mechanism is that it can have a good generalization error.

\newcommand{\Var}{\mathrm{Var}}
\begin{lemma}
\label{lem:generalization_error}
    If we sample the solution from density $\pi_Z(x)\propto \exp(-k(\hp(x)+\mu\|x\|^2/2)$, the excess population loss is bounded as 
    \begin{align*}
        \E_{x\sim\pi_Z,Z\sim P^n}[\Phi(x)-\Phi^*]&\lesssim\frac{d_x}{k} + \mu D_x^2 + \frac{\lfy \lgy}{\mug n} + \frac{L}{\mu}\left(\frac{\lfy \bgxy}{\mug} + \lfx \right) + \frac{\lfy}{\mug \sqrt{k\mu}}\bgxy(1 + \kappa_{g}) \\
        &\;\;\; + \frac{\lfy C L \sqrt{n}}{\mug \mu} + \frac{\lfy \lgy(1 + \kappa_g)}{\mug \sqrt{n}}
    \end{align*}
    where 
    \begin{align}
           &\kappa_g := \bgyy/\mug \\
           &L := \frac{2}{n}\left(\lfx + \frac{\lfy \bgxy}{\mug} + \frac{\lgy \bgxy}{\mug}\right)\\
        &C := \bgxy(1  + \kappa_g). 
    \end{align}
\end{lemma}

\begin{proof}
We have 
\begin{align*}
    \E_{x\sim\pi_Z,Z\sim P^n}[\Phi(x)-\Phi^*]= \E_{x\sim\pi_Z,Z\sim P^n}[\Phi(x)-\hp(x)+\hp(x)-\hp^*].
\end{align*}
By Lemma~\ref{lm:utility_tech}, we have
\begin{align*}
    \E_{x\sim \pi_Z,Z\sim P^n}[\hp(x)-\hp^*]\le \frac{d_x}{k}+\mu D_x^2.
\end{align*}

    Next we bound the generalization error.
    Note that
    \begin{align*}
        \E_{x\sim \pi_Z,Z\sim P^n}[\Phi(x)- \hp(x)]= \E_{x\sim \pi_Z,Z\sim P^n}[F(x,\ystar(x))-\hf(x,\hystar(x))].
    \end{align*}
    Recall that $Z=\{z_1,\cdots,z_n\}$. Suppose we replace $z_i\in Z$ by a fresh independent sample $z_i'$ from $P$ and get $Z'$.
    Then we have
    \begin{align*}
        &\E_{x\sim \pi_{Z},Z,z_i'}[F(x,\ystar(x))-\hf(x,\hystar(x))]\\
        =&\E_{x\sim \pi_{Z},Z,z_i'}[f(x,\ystar(x),z_i')-f(x,\hystar(x),z_i)]\\
        =& \E_{x'\sim\pi_{Z'},Z,z_i'}[f(x',\ystar(x'),z_i)]-\E_{x\sim\pi_Z,Z,z_i'}[f(x,\hystar(x),z_i)]\\
        =& \E_{x'\sim\pi_{Z'},Z,z_i'}[f(x',\ystar(x'),z_i)-f(x',\hystarp(x'),z_i)]\\
        & ~+ \E_{x'\sim\pi_{Z'},Z,z_i'}[f(x',\hystarp(x'),z_i)]-\E_{x'\sim\pi_{Z'},Z,z_i'}[f(x',\hystar(x),z_i)]\\
        & ~+ \E_{x'\sim\pi_{Z'},Z,z_i'}[f(x',\hystar(x'),z_i)]-\E_{x\sim\pi_Z,Z,z_i'}[f(x,\hystar(x),z_i)],
    \end{align*}
    where the first equality follows by the argument in \cite[Lemma 7]{be02}. 
    Recall that in the proof of Theorem~\ref{thm: exp mech conceptual}, we show that for any $x$, \[
\|\hystar(x) - \hystarp(x)\| \leq \frac{2 \lgy}{\mug n},
\]
which leads to that
\begin{align*}
    \E_{x\sim\pi_{Z'},Z,z_i'}[f(x,\hystarp(x),z_i)]-\E_{x\sim\pi_{Z'},Z,z_i'}[f(x,\hystar(x),z_i)]\le \frac{2\lfy\lgy}{\mug n}.
\end{align*}

By Lemma~\ref{lm:lip_hystar}, we know for any $Z,z_i$, we have
\begin{align*}
    |f(x,\hystar(x),z_i)- f(x',\hystar(x'),z_i)| = & |f(x,\hystar(x),z_i)- f(x,\hystar(x'),z_i) +f(x,\hystar(x'),z_i) -f(x',\hystar(x'),z_i)|\\
    \le &  \frac{\lfy\bgxy}{\mu_g}\|x-x'\|+ \lfx\cdot \|x-x'\|= \left(\frac{\lfy\bgxy}{\mu_g}+ \lfx \right)\|x-x'\|.
\end{align*}

Moreover, by Lemma~\ref{lem: phiZ - phiZ' is Lipschitz} and \cite{gopi2022private}, we can show that $W_2(\pi_Z,\pi_{Z'})\le \frac{L}{\mu}$ with $L=2(\frac{\lfx}{n}+\frac{\lfy\bgxy}{\mug n}+\frac{\lgy\bfxy}{n\mug})$.
Hence we have 
\begin{align*}
    &\E_{x\sim\pi_{Z'},Z,z_i'}[f(x,\hystar(x),z_i)]-\E_{x\sim\pi_Z,Z,z_i'}[f(x,\hystar(x),z_i)]\\
    =& \E_{Z,z_i'}[\E_{x\sim\pi_{Z'}}h_{Z,z_i}(x)-\E_{x\sim\pi_Z}h_{Z,z_i}(x)]\\
    \le & (\frac{\lfy\bgxy}{\mu_g}+ \lfy)\cdot W_2(\pi_Z,\pi_{Z'})\\
    \le & (\frac{\lfy\bgxy}{\mu_g}+ \lfy)L/\mu.
\end{align*}

Now, by Lipschitz continuity, it remains to bound the right-hand side of
\begin{align}
    \E_{x'\sim\pi_{Z'},Z,z_i'}[f(x',\ystar(x),z_i)-f(x',\hystarp(x),z_i)] &= \E_{x\sim\pi_{Z},Z,z_i'}[f(x,\ystar(x),z_i')-f(x,\hystar(x),z_i')] \\
    &\le \lfy \E_{x\sim\pi_{Z},Z,z_i'}\|\ystar(x) -\hystar(x)\|.
\end{align}

For any $x$, by the strong convexity of $g$, we have
\begin{align*}
    \langle \nabla_y G(x,\hystar(x)) -  \nabla_y G(x,\ystar(x)),\hystar(x)-\ystar(x)\rangle\ge \mug\|\hystar(x)-\ystar(x)\|^2.
\end{align*}
By the first-order optimality, we know that $\nabla_y G(x,\ystar(x))=0$, and hence we know
\begin{align*}
    \|\hystar(x)-\ystar(x)\|\ \le \| \nabla_y G(x,\hystar(x))\| /\mug.
\end{align*}

Similarly, we have 
\begin{align*}
    \nabla_y \hg(x, \hystar(x))=0
\end{align*}
and 
\begin{align*}
    \| \nabla_y G(x,\hystar(x))\|= \| \nabla_y G(x,\hystar(x))-\nabla_y \hg(x, \hystar(x)) \|.
\end{align*}
Next, we bound
\begin{align*}
    \E_{x\sim\pi_Z,Z} \| \nabla_y G(x,\hystar(x))-\nabla_y \hg(x, \hystar(x)) \|.
\end{align*}

Let $v(Z,x):=\nabla_y G(x,\hystar(x))-\nabla_y \hg(x, \hystar(x))$.
Using the law of total variance, we have
\begin{align*}
    \mathbb{E}[\|v(Z,x)\|^2] = \underbrace{\mathbb{E}_Z[\text{Var}_x(v(Z,x))]}_\text{Term A} + \underbrace{\text{Var}_Z(\mathbb{E}_x[v(Z,x)])}_\text{Term B} + \underbrace{\|\mathbb{E}_{x,Z}[v(Z,x)]\|^2}_\text{Term C}.
\end{align*}

Note that we have 
\begin{align}
\label{eq:v_lip}
    &\|\nabla_y G(x,\hystar(x))- \nabla_y G(x',\hystar(x'))\|\notag \\
    \le & ~ \|\nabla_y G(x,\hystar(x))-\nabla_y G(x',\hystar(x))\| + \|\nabla_y G(x',\hystar(x))-\nabla_y G(x',\hystar(x'))\|\notag\\
    \le & ~ \bgxy \|x-x'\| +\bgyy \|\hystar(x)-\hystar(x')\|\notag\\
    \le & ~ (\bgxy+\frac{\bgyy\bgxy}{\mu_g})\|x-x'\|,
\end{align}
where we apply Lemma~\ref{lm:lip_hystar} for the last step.
Similarly, we also have 
\begin{align*}
  \|\nabla_y \hg(x,\hystar(x))- \nabla_y \hg(x',\hystar(x'))\|\le  (\bgxy+\frac{\bgyy\bgxy}{\mu_g})\|x-x'\|.
\end{align*}

For Term A, for any dataset $Z$, $\pi_Z$ is $k\mu$-strongly log-concave, and function $v$ is Lipschitz in $x$ with Lipschitz constant $(\bgxy+\frac{\bgyy\bgxy}{\mu_g})$ specified in Equation~\ref{eq:v_lip}.
By Poincaré inequality, we have 
\begin{align*}
    \Var_{x\sim \pi_Z}(v(Z,x))\le \frac{1}{k\mu}\E [\|\nabla_xv(Z,x)\|^2]\le\frac{1}{k\mu}(\bgxy+\frac{\bgyy\bgxy}{\mu_g})^2. 
\end{align*}

As for the Term B, let $v(Z)=\E_{x\sim\pi_Z}v(Z,x)$.
By the Efron-Stein Inequality, we have
\begin{align*}
    \Var(v(Z))\le \frac{1}{2}\sum_{j\in[n]}\E[\|v(Z)-v(Z_j')\|^2]=\frac{n}{2}\E\|v(Z)-v(Z')\|^2,
\end{align*}
where $Z_j'$ is the dataset with $z_j$ replaced by a fresh sample.
Then we have 
\begin{align*}
    \E\|v(Z)-v(Z')\|^2= & ~ \E_{Z,z_i'}\| \E_{x\sim\pi_Z} \nabla_y G(x,\hystar(x))-\E_{x\sim\pi_{Z'}}\nabla_yG(x,\hystarp(x))\\
    & ~ - (\E_{x\sim\pi_Z}\nabla_y\hg(x,\hystar(x))- \E_{x\sim\pi_{Z'}}\nabla_y\hg(x,\hystarp(x))\|^2.
\end{align*}

By Kantorovich-Rubinstein duality and Cauchy-Schwartz, we have 
\[
\|\E_{x \sim \pi} h(x) - \E_{x' \sim \pi'} h(x')\| \le W_1(\pi, \pi') \le W_2(\pi, \pi') 
\]
for any $1$-Lipschitz function $h$ and any distributions $\pi, \pi' \in L_2$. 
By Lemma~\ref{lm:lip_hystar}, the above fact implies 
\[
\|\E_{x \sim \pi_Z}\hystar(x) - \E_{x' \sim \pi_{Z'}}\hystar(x')\| \le \frac{\bgxy}{\mug} W_2(\pi_Z,\pi_{Z'})\le \frac{\bgxy}{\mug}L/\mu. 
\]
A similar argument can be used to bound \[
\|\E_{x \sim \pi_Z}\nabla_y G(x, \hystar(x)) - \E_{x' \sim \pi_{Z'}}\nabla_y G(x', \hystar(x'))\| \le \bgxy W_2(\pi_Z,\pi_{Z'})\le \bgxy L/\mu,
\]
and likewise for $\nabla_y \hg(x, \hystar(x))$. 
Hence 
\begin{align*}
    \E\|v(Z)-v(Z')\|^2/100 &\le \E_{Z,z_i'}\| \E_{x\sim\pi_Z} \nabla_y G(x,\hystar(x))-\E_{x'\sim\pi_{Z'}}\nabla_yG(x',\hystar(x'))\|^2\\
    & + \E_{Z,z_i'}\|\E_{x'\sim\pi_{Z'}}\nabla_yG(x',\hystar(x'))  - \E_{x'\sim\pi_{Z'}}\nabla_y G(x',\hystarp(x'))\|^2\\
    & + 
    \E_{Z,z_i'}\|\E_{x'\sim\pi_{Z'}}\nabla_y\hgp(x',\hystarp(x'))- \E_{x\sim\pi_{Z}}\nabla_y\hgp(x,\hystarp(x))\|^2\\
    & + ~\E_{Z,z_i'}\|\E_{x\sim\pi_{Z}}\nabla_y\hg(x,\hystar(x)) -\E_{x\sim\pi_{Z}}\nabla_y\hgp(x,\hystarp(x)) \|^2 \\
    &\le \left(\bgxy \frac{L}{\mu}\right)^2 + \left(\bgyy\left(\frac{\bgxy}{\mug}\frac{L}{\mu} + \frac{\lgy}{\mug n}\right)\right)^2 \\
    &+ \left(\left(\bgxy + \frac{\bgxy \bgyy}{\mug}\right)\frac{L}{\mu} \right)^2 + \left(\frac{\lgy \bgyy}{\mug n}\right)^2 + \frac{\lgy^2}{n^2}. 
\end{align*}
Therefore, \begin{align}
    \Var(v(Z))\lesssim C^2 n \left( \frac{L}{\mu} \right)^2 + \frac{\lgy^2(\kappa_g + 1)^2}{n}
\end{align}
for $C$ defined in the lemma statement. 

We already bounded $\Var(v(Z))$.
As $\E\|v(Z)\|^2 =\Var(v(Z))+ \|\E v(Z)\|^2$.
It remains to bound Term C: $\|\E v(Z)\|^2$. For this, we have
\begin{align*}
    \E v(Z) =& \E_{x\sim\pi_Z,Z,z_i'} [\nabla_y g(x,\hystar(x),z_i') - \nabla_y g(x,\hystar(x),z_i)]\\
     = & \E_{x'\sim\pi_{Z'},Z,z_i'}[\nabla_y g(x',\hystarp(x'),z_i] - \E_{x\sim\pi_Z,Z,z_i'}[\nabla_y g(x,\hystar(x),z_i)]\\
     = & \E_{x'\sim\pi_{Z'},Z,z_i'}[\nabla_y g(x',\hystarp(x'),z_i] - \E_{x'\sim\pi_{Z'},Z,z_i'}[\nabla_y g(x',\hystar(x'),z_i]\\
     & + \E_{x'\sim\pi_{Z'},Z,z_i'}[\nabla_y g(x',\hystar(x'),z_i]- \E_{x\sim\pi_Z,Z,z_i'}[\nabla_y g(x,\hystar(x),z_i)].
\end{align*}
Hence 
\begin{align*}
    \| \E v(Z)\|&\le \bgyy \frac{2 \lgy}{\mug n} + (\bgxy+\frac{\bgyy\bgxy}{\mu_g}) W_2(\pi_Z,\pi_{Z'}) \\
    & \le \bgyy \frac{2 \lgy}{\mug n} + (\bgxy+\frac{\bgyy\bgxy}{\mu_g}) \frac{L}{\mu}. 
\end{align*}
Taking square roots and combining all the pieces above yields the lemma. 
\end{proof}

\begin{theorem}[Precise Statement of Theorem~\ref{thm: reg exp mech conceptual}]
\label{thm: reg exp mech privacy and excess risk}
Grant Assumption~\ref{ass: lipschitz and smooth} and parts 3 and 4 of Assumption~\ref{ass: hessian smooth}. Assume $\hp$ and $\Phi$ are convex for all $Z$. 
    Sampling $\hat{x}$ from a distribution proportional to $\exp(-k(\hp(x)+\mu \|x\|^2/2))$ with $k=O\left(\frac{\mu n^2\eps^2}{G^2\log(1/\delta)}\right)$ and $G=(\lfx+\frac{\lfy\bgxy}{\mug}+\frac{\lgy\bfxy}{\mug})$ is $(\eps,\delta)$-DP.
    Moreover, 
   \begin{itemize}
       \item setting $\mu=\frac{G\sqrt{d_x\log(1/\delta)}}{nD_x\eps}$, we achieve excess risk
    \begin{align*}
       \expec \hp(\hat{x}) - \hp^* \le O\left(\left(\lfx+\frac{\lfy\bgxy}{\mug}+\frac{\lgy\bfxy}{\mug}\right)D_x\frac{\sqrt{d_x\log(1/\delta)}}{n\eps}\right).
    \end{align*}
    \item setting \[
    \mu=\frac{1}{D_x}\sqrt{\frac{G}{\eps n}\frac{\lfy C}{\mug}  + G^2 \frac{\dx \log(1/\delta)}{\eps^2 n^2} + \frac{G C \lfy}{\mug \sqrt{n}} + \frac{G}{n}\left(\lfx + \frac{\lfy \bgxy}{\mug}\right)},
    \]
    for $C:= \bgxy(1 + \kappa_g)$ with $\kappa_g = \bgyy/\mug$, 
    the population loss has the following guarantee:
    \begin{align*}
        \expec \Phi(\hat{x})-\Phi^*
        &\lesssim D_x \Bigg[G \frac{\sqrt{\dx \log(1/\delta)}}{\eps n} + \frac{1}{\sqrt{n}}\left(\sqrt{\frac{G \lfy C}{\mug \eps}} + \sqrt{G(\lfx + \lfy \bgxy/\mug)} + \lgy(\kappa_g + 1)\frac{\lfy}{\mug} \right) \\
        &\;\;\; + \frac{1}{n^{1/4}} \sqrt{\frac{G C \lfy}{\mug}} + \frac{\lfy \lgy \kappa_g}{\mug n} 
        \Bigg]
    \end{align*}
   \end{itemize} 
\end{theorem}
\begin{proof}
    The privacy guarantee follows from the  privacy curve of Gaussian variables and Lemma 6.3 in \cite{gopi2022private}.

    When setting $\mu=\frac{G\sqrt{d\log(1/\delta)}}{nD_x\eps}$, Lemma~\ref{lm:utility_tech} gives us that
    \begin{align*}
        \expec \hp(\hat{x}) - \hp^* \lesssim \frac{d_x}{k}+\frac{\mu D_x^2}{2}=\frac{d_xG^2\log(1/\delta)}{\mu n^2\eps^2}+\frac{\mu D_x^2}{2}= O\left(GD_x\frac{\sqrt{d_x\log(1/\delta)}}{n\eps}\right).
    \end{align*}

    The population loss guarantee follows from plugging the prescribed $\mu$ and $k$ into Lemma~\ref{lem:generalization_error}.
\end{proof}

\newpage

\subsection{Efficient implementation of conceptual algorithms}
\label{app: implementation}
In many practical applications of optimization and sampling algorithms, we face unavoidable approximation errors when evaluating functions. Given any $x$, we may not get the exact $\hystar(x)$ in solving the low-level optimization, which means we may introduce a small error each time we compute the function value of $f(x,\hystar(x),z)$. This section analyzes how such small function evaluation errors affect log-concave sampling algorithms. We establish bounds on the impact of errors bounded by $\zeta$ on the conductance, mixing time, and distributional accuracy of Markov chains used for sampling. We then develop an efficient implementation based on the \cite{bst14} approach that maintains polynomial time complexity while providing formal guarantees on sampling accuracy in the presence of function evaluation errors.

\subsubsection{Original Grid-Walk Algorithm for Log-Concave Sampling}
\label{sec: grid walk alg}

We first state the classic Grid-Walk algorithm from Applegate and Kannan \cite{AK91} on sampling from log-concave distributions.

Let $F(\cdot)$ be a real positive-valued function defined on a cube $A=[a,b]^d$ in $\R^d$, where $[a,b]^d$ represents a hypercube with side length $\kappa:=b-a$. 
Let $f(\theta) = -\log F(\theta)$ and suppose there exist real numbers $\alpha, \beta$ such that:
\begin{align*}
|f(x) - f(y)| &\leq \alpha \left( \max_{i \in [1,d]} |x_i - y_i| \right), \\
f(\lambda x + (1-\lambda)y) &\geq \lambda f(x) + (1-\lambda)f(y) - \beta,
\end{align*}
for all $x, y \in A$ and $\lambda \in [0,1]$.

Let $\gamma \leq 1/(2\alpha)$ be a discretization parameter. The following algorithm samples from a distribution $\nu$ on the continuous domain $A$ such that for all $\theta \in A$, $|\nu(\theta) - cF(\theta)| \leq \epsilon$, where $c$ is a normalization constant:

\begin{enumerate}
\item Divide the cube $A$ into small cubes $\{C_x\}$ of side length $\gamma$, with centers $\{x\}$. Let $\Omega$ be the set of all such centers.

\item If $\kappa < 1/\alpha$, then pick a point $\theta$ uniformly from $A$ and output $\theta$ with probability $F(\theta)/(e\max_{x \in A} F(x))$; otherwise restart.

\item For $\kappa \geq 1/\alpha$, proceed as follows:
   \begin{enumerate}
   \item Choose a starting point $x_0 \in \Omega$ arbitrarily.
   
   \item Define a random walk on the centers of the small cubes as follows:
      \begin{enumerate}
      \item At a state (cube center) $x$, stay at $x$ with probability 1/2.
      \item Otherwise (with probability 1/2), choose a direction $u\in\{\pm e_1,\cdots,\pm e_d\}$ uniformly at random (each chosen with probability $1/2d$), where $e_i$ is the standard basis vector in the $i$-th coordinate.
      \item If the adjacent cube in that direction is not in $A$, stay at $x$.
      \item Otherwise, move to the center $y$ of that adjacent cube with probability $\min\{1, F(y)/F(x)\}$; with probability $1 - \min\{1, F(y)/F(x)\}$, remain at $x$.
      \end{enumerate}
      
   \item Run this random walk for $T$ steps. Let $x$ be the final state.
   
   \item Pick a point $\theta$ uniformly from the cube $C_x$.
   
   \item Output $\theta$ with probability $F(\theta)/(eF(x))$; otherwise, restart from step 3(a) with a new recursive call.
   \end{enumerate}
\end{enumerate}

For implementation details, we refer to the original paper \cite{AK91}. 
In the subsections that follow, we analyze how this algorithm behaves when the function $F$ can only be evaluated with some bounded error, a common scenario in practical applications.

\subsubsection{Conductance Bound with Function Evaluation Errors}

The conductance of a Markov chain measures how well the chain mixes, specifically how quickly it converges to its stationary distribution. For a Markov chain with state space $\Omega$, transition matrix $P$ and stationary distribution $q$, the conductance $\phi$ is defined as:
\begin{align*}
    \phi:=\min_{S\subset\Omega:0<q(S)\le 1/2}\frac{\sum_{x\in S,y\in\Omega\setminus S}q(x)P_{xy}}{q(S)}.
\end{align*}
Higher conductance implies faster mixing, while lower conductance suggests the presence of bottlenecks in the state space.

We now analyze how small errors in function evaluation affect the conductance of the Markov chain used in log-concave sampling, described in Section~\ref{sec: grid walk alg}. This analysis is central to understanding the robustness of sampling algorithms in the presence of approximation errors.

\begin{lemma}[Re-statement of Lemma~\ref{lem:conductance}]
\label{lem: app conductance}
Let $P$ be the transition matrix of the original Markov chain in the grid-walk algorithm of Section~\ref{sec: grid walk alg} based on function $f$, with state space $\Omega$ and conductance $\phi$. Let $P'$ be the transition matrix of the perturbed chain based on $f'$ where $f'(\theta) = f(\theta) + \zeta(\theta)$ with $|\zeta(\theta)| \leq \zeta$ for all $\theta \in \Omega$, where $\zeta(\cdot)$ is an arbitrary bounded error function and $\zeta>0$ is an upper bound on its magnitude. 
Then the conductance $\phi'$ of the perturbed chain satisfies:
\begin{align*}
\phi' \geq e^{-6\zeta}\phi.
\end{align*}
\end{lemma}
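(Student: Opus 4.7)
My plan is to reduce the conductance bound to three pointwise multiplicative estimates---on the off-diagonal transition kernel, on the stationary measure, and on the volume of an arbitrary cut---and then aggregate them using the symmetric (``Cheeger'') form of conductance, which is equivalent to the definition given because the unperturbed chain is reversible.

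First I would verify the basic pointwise comparisons. Since $f' = f + \zeta(\cdot)$ with $\|\zeta\|_\infty \le \zeta$, for any pair of centers $x,y$ one has
\[
\frac{F'(y)}{F'(x)} \;=\; \frac{F(y)}{F(x)} \cdot e^{-(\zeta(y)-\zeta(x))} \;\in\; [e^{-2\zeta},\,e^{2\zeta}]\cdot \frac{F(y)}{F(x)}.
\]
A short two-case split (on whether $F(y)/F(x)\ge 1$ or $<1$) then shows that the Metropolis acceptance probabilities satisfy $\min\{1,F'(y)/F'(x)\}\ge e^{-2\zeta}\min\{1,F(y)/F(x)\}$. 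Because both chains use the \emph{same} symmetric proposal (uniform over the $2d$ axis neighbors, with holding probability $1/2$), this inequality transfers directly to off-diagonal transitions: $P'_{xy}\ge e^{-2\zeta}P_{xy}$ for all $x\ne y$. Next I would invoke the discrete analog of Lemma~\ref{lem:relative_dis_f_f'} (obtained by running the same calculation on the finite normalizers $\sum_{x\in\Omega}F(x)$ and $\sum_{x\in\Omega}F'(x)$) to conclude $e^{-2\zeta}\le q'(x)/q(x)\le e^{2\zeta}$ for every $x\in\Omega$, and hence $e^{-2\zeta}q(S)\le q'(S)\le e^{2\zeta}q(S)$ for every $S\subset\Omega$.

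To close, I would pass to the symmetric form of conductance, namely $\phi = \min_{\emptyset\ne S\subsetneq \Omega}\frac{\sum_{x\in S,y\notin S} q(x)P_{xy}}{\min(q(S),q(\Omega\setminus S))}$, which coincides with the stated definition because detailed balance makes the cut flow symmetric in $S$ and $\Omega\setminus S$. For any such $S$ the numerator in $P'$ is at least $e^{-4\zeta}$ times the numerator in $P$ (one factor $e^{-2\zeta}$ from each of $q'(x)$ and $P'_{xy}$), while
\[
\min(q'(S),q'(\Omega\setminus S)) \;\le\; e^{2\zeta}\min(q(S),q(\Omega\setminus S)),
\]
since both $q'(S)\le e^{2\zeta}q(S)$ and $q'(\Omega\setminus S)\le e^{2\zeta}q(\Omega\setminus S)$ individually, so the inequality passes through the $\min$. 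Dividing, the isoperimetric ratio for $P'$ at $S$ is bounded below by $e^{-6\zeta}$ times that for $P$ at $S$; taking the infimum over $S$ yields $\phi' \ge e^{-6\zeta}\phi$.

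The main obstacle I anticipate is precisely the denominator step: the cut $S$ achieving $\phi'$ satisfies $q'(S)\le 1/2$ by definition, but this does \emph{not} imply $q(S)\le 1/2$, so one cannot apply the original $\phi$ to $S$ directly. Switching to the symmetric Cheeger form---valid by reversibility of the Metropolis grid-walk---resolves this cleanly and is the only substantive point beyond routine multiplicative bookkeeping; a direct argument working with the one-sided definition would instead require splitting into cases $q(S)\le 1/2$ and $q(S)>1/2$ and invoking detailed balance in the latter to swap $S$ with $\Omega\setminus S$, at the cost of slightly messier constants.
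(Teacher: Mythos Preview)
Your proposal is correct and follows essentially the same route as the paper: bound $P'_{xy}\ge e^{-2\zeta}P_{xy}$ via the Metropolis ratio, bound $e^{-2\zeta}\le q'(x)/q(x)\le e^{2\zeta}$ via the normalizer ratio, and combine these in the symmetric Cheeger form to get the $e^{-6\zeta}$ factor. The paper proceeds identically, though it silently switches from the one-sided definition of $\phi$ to the symmetric denominator $\min\{q(S),q(\Omega\setminus S)\}$ in its proof without comment; your explicit justification of that switch via reversibility is in fact slightly more careful than the original argument.
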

\begin{proof}
Fix any subset $S$ of the state space.
The conductance of $S$ in the original chain is:
\begin{align*}
\phi_S = \frac{\sum_{x\in S, y\notin S} q(x)P_{xy}}{\min\{\sum_{x\in S} q(x), \sum_{x\notin S} q(x)\}}.
\end{align*}
where $q$ is the stationary distribution and $P_{xy}$ are the transition probabilities.

In the grid-walk algorithm, we know $P_{xy}=0$ if $x\neq y$ are not adjacent; 
for adjacent points $x$ and $y$:
\begin{align*}
P_{xy} = \frac{1}{4d}\min\left\{1, \frac{F(y)}{F(x)}\right\} = \frac{1}{4d}\min\{1, e^{-(f(y)-f(x))}\},
\end{align*}
and remarkably $P_{xx}=1 - \sum_{x \neq y} P_{xy}$.

For the perturbed chain with adjacent $x,y$:
\begin{align*}
P'_{xy} = \frac{1}{4d}\min\left\{1, \frac{F'(y)}{F'(x)}\right\} = \frac{1}{4d}\min\{1, e^{-(f'(y)-f'(x))}\}.
\end{align*}

Since $f'(y) - f'(x) = f(y) - f(x) + (\zeta(y) - \zeta(x))$ and $|\zeta(y) - \zeta(x)| \leq 2\zeta$, we have:
\begin{align*}
e^{-(f(y)-f(x)-2\zeta)} \leq e^{-(f'(y)-f'(x))} \leq e^{-(f(y)-f(x)+2\zeta)}.
\end{align*}

This implies:
\begin{align*}
e^{-2\zeta} \min\{1, e^{-(f(y)-f(x))}\} \leq \min\{1, e^{-(f'(y)-f'(x))}\} \leq e^{2\zeta} \min\{1, e^{-(f(y)-f(x))}\}.
\end{align*}

Therefore:
\begin{align*}
e^{-2\zeta}P_{xy} \leq P'_{xy} \leq e^{2\zeta}P_{xy}.
\end{align*}

The stationary distributions $q$ and $q'$ satisfy:
\begin{align*}
q(x) = \frac{F(x)}{\sum_{z\in \Omega} F(z)} \quad \text{and} \quad q'(x) = \frac{F'(x)}{\sum_{z\in \Omega} F'(z)}.
\end{align*}

Since $F'(x) = e^{-f'(x)} = e^{-(f(x)+\zeta(x))} = e^{-f(x)}e^{-\zeta(x)} = F(x)e^{-\zeta(x)}$, we have:
\begin{align*}
e^{-\zeta} \cdot q(x) \cdot \frac{\sum_{z\in \Omega} F(z)}{\sum_{z\in \Omega} F'(z)} \leq q'(x) \leq e^{\zeta} \cdot q(x) \cdot \frac{\sum_{z\in \Omega} F(z)}{\sum_{z\in \Omega} F'(z)}.
\end{align*}

The normalization ratio satisfies:
\begin{align*}
e^{-\zeta} \leq \frac{\sum_{z\in \Omega} F'(z)}{\sum_{z\in \Omega} F(z)} \leq e^{\zeta}.
\end{align*}

Therefore:
\begin{align*}
e^{-2\zeta} \cdot q(x) \leq q'(x) \leq e^{2\zeta} \cdot q(x).
\end{align*}

Using the bounds on transition probabilities and stationary distributions:
\begin{align*}
\sum_{x\in S,y\notin S} q'(x)P'_{xy} \geq e^{-4\zeta}\sum_{x\in S,y\notin S} q(x)P_{xy}.
\end{align*}

And:
\begin{align*}
\min\left\{\sum_{x\in S} q'(x), \sum_{x\notin S} q'(x)\right\} \leq e^{2\zeta}\min\left\{\sum_{x\in S} q(x), \sum_{x\notin S} q(x)\right\}.
\end{align*}

Therefore:
\begin{align*}
\phi'_S &= \frac{\sum_{x\in S,y\notin S} q'(x)P'_{xy}}{\min\{\sum_{x\in S} q'(x), \sum_{x\notin S} q'(x)\}} \\
&\geq \frac{e^{-4\zeta}\sum_{x\in S,y\notin S} q(x)P_{xy}}{e^{2\zeta}\min\{\sum_{x\in S} q(x), \sum_{x\notin S} q(x)\}} \\
&= e^{-6\zeta}\phi_S.
\end{align*}

Since $\phi = \min_S \phi_S$ and $\phi' = \min_S \phi'_S$, we have:
\begin{align*}
\phi' \geq e^{-6\zeta}\phi.
\end{align*}
\end{proof}

\subsubsection{Relative Distance Bound Between $F$ and $F'$}

We now analyze how function evaluation errors affect the distributional distance between the original and perturbed stationary distributions.

For distributions, we define the $L_\infty$ distance (or log-ratio distance) between distributions $\mu$ and $\nu$ on $A$ as:
\begin{align}
\disinf(\mu, \nu) = \sup_{\theta \in A} \left|\log \frac{\mu(\theta)}{\nu(\theta)}\right|.
\end{align}

\begin{lemma}[Re-statement of Lemma~\ref{lem:relative_dis_f_f'}]
\label{lem: app relative_dis_f_f'}
Let $F(\theta) = e^{-f(\theta)}$ and $F'(\theta) = e^{-f'(\theta)}$ where $f'(\theta) = f(\theta) + \zeta(\theta)$ with $|\zeta(\theta)| \leq \zeta$ for all $\theta \in A$. 
Then the relative distance between $F$ and $F'$ is bounded by:
\begin{align*}
e^{-\zeta} \leq \frac{F'(\theta)}{F(\theta)} \leq e^{\zeta}, \quad \forall \theta \in A.
\end{align*}
Furthermore, if we define the distributions $\pi(\theta) \propto F(\theta)$ and $\pi'(\theta) \propto F'(\theta)$, then the infinity-distance between them is bounded by:
\begin{align*}
\disinf(\pi', \pi) = \sup_{\theta \in A} \left|\log \frac{\pi'(\theta)}{\pi(\theta)}\right| \leq 2\zeta.
\end{align*}
\end{lemma}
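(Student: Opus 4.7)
The plan is to establish the pointwise ratio bound first by a direct one-line computation, and then propagate it through the normalizing constants to obtain the infinity-distance bound. Both claims follow from elementary properties of the exponential function applied to the bounded perturbation $\zeta(\theta)$.

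For the first inequality, I would simply compute
\[
\frac{F'(\theta)}{F(\theta)} \;=\; \frac{e^{-f'(\theta)}}{e^{-f(\theta)}} \;=\; e^{f(\theta) - f'(\theta)} \;=\; e^{-\zeta(\theta)}.
\]
Since $|\zeta(\theta)| \le \zeta$, we have $-\zeta \le -\zeta(\theta) \le \zeta$, and monotonicity of the exponential gives $e^{-\zeta} \le F'(\theta)/F(\theta) \le e^{\zeta}$, as claimed.

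For the second inequality, write $\pi(\theta) = F(\theta)/Z$ and $\pi'(\theta) = F'(\theta)/Z'$, with normalizing constants $Z = \int_A F(\theta)\,d\theta$ and $Z' = \int_A F'(\theta)\,d\theta$. Integrating the pointwise bound $e^{-\zeta} F(\theta) \le F'(\theta) \le e^{\zeta} F(\theta)$ over $A$ yields $e^{-\zeta} Z \le Z' \le e^{\zeta} Z$, so $Z/Z' \in [e^{-\zeta}, e^{\zeta}]$. Combining this with the first part gives
\[
\frac{\pi'(\theta)}{\pi(\theta)} \;=\; \frac{F'(\theta)}{F(\theta)} \cdot \frac{Z}{Z'} \;\in\; [e^{-2\zeta}, e^{2\zeta}]
\]
uniformly in $\theta$. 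Taking logarithms and then the supremum over $\theta \in A$ yields $\disinf(\pi', \pi) \le 2\zeta$.

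There is essentially no obstacle here — this is a direct calculation exploiting the multiplicative structure of the exponential and the fact that normalization preserves pointwise ratios up to a factor bounded by the same quantity. The only subtlety worth naming is the factor of $2$ in the final bound, which arises precisely because the perturbation is absorbed once into the pointwise ratio and once more into the ratio of normalizing constants.
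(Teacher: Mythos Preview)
Your proposal is correct and follows essentially the same approach as the paper: compute $F'(\theta)/F(\theta)=e^{-\zeta(\theta)}$ to get the pointwise ratio bound, integrate to control the normalizing constants, and combine the two factors to obtain the $2\zeta$ bound on $\disinf(\pi',\pi)$.
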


\begin{proof}
For any $\theta \in A$, we have:
\begin{align*}
F'(\theta) &= e^{-f'(\theta)} = e^{-(f(\theta) + \zeta(\theta))} = e^{-f(\theta)}e^{-\zeta(\theta)} = F(\theta)e^{-\zeta(\theta)}.
\end{align*}

Since $|\zeta(\theta)| \leq \zeta$, we have:
\begin{align*}
e^{-\zeta} \leq e^{-\zeta(\theta)} \leq e^{\zeta}.
\end{align*}

Therefore:
\begin{align*}
e^{-\zeta}F(\theta) \leq F'(\theta) \leq e^{\zeta}F(\theta).
\end{align*}

For the normalized distributions, we have:
\begin{align*}
\pi(\theta) &= \frac{F(\theta)}{\int_A F(z)dz}, \\
\pi'(\theta) &= \frac{F'(\theta)}{\int_A F'(z)dz} = \frac{F(\theta)e^{-\zeta(\theta)}}{\int_A F(z)e^{-\zeta(z)}dz}.
\end{align*}

This gives:
\begin{align*}
\frac{\pi'(\theta)}{\pi(\theta)} &= \frac{F(\theta)e^{-\zeta(\theta)}}{\int_A F(z)e^{-\zeta(z)}dz} \cdot \frac{\int_A F(z)dz}{F(\theta)} = e^{-\zeta(\theta)} \cdot \frac{\int_A F(z)dz}{\int_A F(z)e^{-\zeta(z)}dz}.
\end{align*}

Since $e^{-\zeta} \leq e^{-\zeta(z)} \leq e^{\zeta}$ for all $z \in A$, we have:
\begin{align*}
e^{-\zeta}\int_A F(z)dz \leq \int_A F(z)e^{-\zeta(z)}dz \leq e^{\zeta}\int_A F(z)dz.
\end{align*}

Therefore:
\begin{align*}
e^{-2\zeta} \leq \frac{\pi'(\theta)}{\pi(\theta)} \leq e^{2\zeta}.
\end{align*}

Thus, the $L_{\infty}$-distance between $\pi'$ and $\pi$ is bounded by:
\begin{align*}
\disinf(\pi', \pi) = \sup_{\theta \in A} \left|\ln \frac{\pi'(\theta)}{\pi(\theta)}\right| \leq 2\zeta.
\end{align*}
\end{proof}

\subsubsection{Mixing Time Analysis and Implementation Details}
For a Markov chain with state space $\Omega$, transition matrix $P$, and stationary distribution $\pi$, the mixing time $t_{\text{mix}}(\epsilon)$ with respect to the $L_\infty$-distance is defined as:

\begin{align}
t_{\text{mix}}(\epsilon) = \min\{t \geq 0 : \max_{x \in \Omega} \text{Dist}_{\infty}(P^t(x, \cdot), \pi(\cdot)) \leq \epsilon\},
\end{align}
for any $\epsilon \ge 0$. 
For efficient implementation of the grid-walk algorithm, we utilize the results of \cite{bst14}. Following their approach, we can determine the number of steps required for $L_\infty$ convergence using:

\begin{lemma}[Mixing time for relative $L_\infty$ convergence \cite{morris2005evolving}]
\label{lem:l_infinity_mixing}
Let $P$ be a lazy, time-reversible Markov chain over a finite state space $\Gamma$ with stationary distribution $\pi$. Then, the mixing time of $P$ w.r.t. $L_\infty$ distance is at most 
\begin{align}
t_\infty \leq 1 + \int_{4\pi^*}^{4/\epsilon} \frac{4\mathrm{d}x}{x\phi^2(x)}
\end{align}
where $\phi(x) = \inf\{\phi_S : \pi(S) \leq x\}$, $\phi_S$ denotes the conductance of the set $S \subseteq \Gamma$, and $\pi_* = \min_{x \in \Gamma} \pi(x)$ is the minimum probability assigned by the stationary distribution.
\end{lemma}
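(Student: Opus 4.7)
The plan is to follow the evolving-sets argument of Morris and Peres. First, I would introduce the evolving sets Markov chain $(S_t)_{t \ge 0}$ on subsets of $\Gamma$ associated to $P$: given $S_t$, sample $U \sim \mathrm{Unif}[0,1]$ and set
\[
S_{t+1} = \bigl\{y \in \Gamma : \textstyle\sum_{x \in S_t} \pi(x) P(x,y) \ge U\,\pi(y)\bigr\}.
\]
Two standard facts anchor the analysis: (i) $\pi(S_t)$ is a martingale, and (ii) for the process initialized at $S_0 = \{x\}$ one has the representation $P^t(x,y)/\pi(y) = \mathbb{E}\bigl[\mathbf{1}\{y \in S_t\}/\pi(S_t)\bigr]$. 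Identity (ii) converts the relative-$L_\infty$ mixing question into controlling how quickly $\pi(S_t)$ escapes its starting value $\pi(\{x\}) \ge \pi_*$ and concentrates near $1$.

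The central step is to establish a one-step drift inequality. Taking the Lyapunov function $h(S) := \sqrt{\pi(S)(1-\pi(S))}$, which vanishes precisely when $\pi(S) \in \{0,1\}$, one shows that for every set $S$ there is an absolute constant $c > 0$ with
\[
\mathbb{E}\bigl[h(S_{t+1}) \bigm| S_t = S\bigr] \le \bigl(1 - c\,\phi_S^2\bigr)\,h(S).
\]
This is the technical heart of the proof: one expands the evolving-sets transition as an integral over the threshold $u \in [0,1]$, invokes time-reversibility to symmetrize (so that $\phi_S = \phi_{\Gamma\setminus S}$ can be applied to whichever side of the split is smaller), uses laziness to guarantee non-negativity of the relevant transition contributions, and applies Cauchy--Schwarz to the split integrals to extract the $\phi_S^2$ factor.

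With the drift in hand, I partition the range of $\pi(S_t)$ into dyadic windows. On a window where $\pi(S_t) \in [x, 2x]$, the effective contraction rate is at least $\phi^2(x)$, so the expected exit time is $O(1/\phi^2(x))$ by an optional-stopping argument applied to the martingale $\pi(S_t)$ together with the iterated drift bound on $h(S_t)$. Summing per-window step counts as $x$ ranges dyadically from $\pi_*$ up to $1/2$, and symmetrically from $1/2$ up to $1 - \epsilon/4$ using $\phi_S = \phi_{\Gamma\setminus S}$ for the upper tail, produces a telescoping bound that in the dyadic-to-integral limit becomes $1 + \int_{4\pi_*}^{4/\epsilon} 4\,dx/(x\phi^2(x))$. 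The final $L_\infty$ bound on $P^t(x,\cdot)/\pi(\cdot)$ then follows from the representation identity (ii) and Markov's inequality.

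The main obstacle will be making the drift inequality sharp with the correct power of $\phi_S$: the constant in front of $\phi_S^2$ must be strictly positive, which requires the laziness and reversibility hypotheses to be deployed in precisely the right places when splitting the $U$-integral; this quantitative sharpness is what yields $1/\phi^2(x)$ in the final integrand instead of a weaker $1/\phi(x)$ dependence. Once the drift is established, the dyadic-to-integral passage and the handling of the symmetric upper tail are essentially bookkeeping.
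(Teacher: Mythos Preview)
The paper does not prove this lemma: it is stated as a black-box result imported from Morris and Peres \cite{morris2005evolving}, and the paper simply applies it inside the proof of Lemma~\ref{cor:mixing_time}. So there is no ``paper's own proof'' to compare against.

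That said, your outline is a faithful sketch of the original Morris--Peres evolving-sets argument, which is exactly the source being cited. The key pieces --- the evolving-sets chain, the martingale property of $\pi(S_t)$, the representation $P^t(x,y)/\pi(y) = \mathbb{E}[\mathbf{1}\{y\in S_t\}/\pi(S_t)]$, the one-step drift on $\sqrt{\pi(S)(1-\pi(S))}$ with a $\phi_S^2$ factor, and the dyadic-to-integral summation --- are all in the right place. One small caution: the final step from control of $\pi(S_t)$ to the $L_\infty$ bound on $|P^t(x,y)/\pi(y) - 1|$ is a bit more delicate than ``Markov's inequality''; in Morris--Peres it proceeds by bounding $\mathbb{E}|1 - \pi(S_t)^{-1}\mathbf{1}\{y\in S_t\}|$ via the $\sqrt{\pi(S_t)}$-type quantities directly, not by a tail-probability argument. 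But this is a matter of cleaning up the last line of a known proof, not a genuine gap.
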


We now provide a bound on how function evaluation errors affect the mixing time.

\begin{lemma}[Re-statement of Lemma~\ref{cor:mixing_time}]
\label{cor: app mixing_time}
The mixing time $t'_{\text{mix}}(\epsilon)$ of the perturbed chain to achieve $L_\infty$-distance $\epsilon$ to its stationary distribution satisfies:
\begin{align*}
t'_{\text{mix}}(\epsilon) \leq e^{12\zeta}\cdot O\left(\frac{\alpha^2\tau^2 d^2}{\epsilon^2}e^{\epsilon}\max\left\{d\log\frac{\alpha\tau\sqrt{d}}{\epsilon}, \alpha\tau\right\}\right).
\end{align*}
\end{lemma}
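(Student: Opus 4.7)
The plan is to apply Lemma~\ref{lem:l_infinity_mixing} directly to the perturbed chain $P'$ and then propagate the bounds from Lemmas~\ref{lem: app conductance} and~\ref{lem: app relative_dis_f_f'} through the resulting integral. First I would write
\[
t'_{\text{mix}}(\epsilon) \;\leq\; 1 + \int_{4\pi'_*}^{4/\epsilon} \frac{4\,dx}{x\,\phi'^2(x)},
\]
where $\phi'(x) := \inf\{\phi'_S : \pi'(S) \leq x\}$ is the conductance profile of the perturbed chain and $\pi'_* := \min_{x\in\Omega}\pi'(x)$. Since $P'$ inherits laziness and reversibility from $P$ (the accept--reject step in the Grid-Walk is reversible w.r.t.\ any strictly positive function), Lemma~\ref{lem:l_infinity_mixing} applies.

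Next, I would use the two key perturbation bounds already established. From Lemma~\ref{lem: app conductance}, for every subset $S$ we have $\phi'_S \geq e^{-6\zeta}\phi_S$, and crucially, from Lemma~\ref{lem: app relative_dis_f_f'} we have $\pi'(S) \leq e^{2\zeta}\pi(S)$. Consequently, $\{S : \pi'(S)\leq x\} \subseteq \{S : \pi(S) \leq e^{2\zeta} x\}$, so
\[
\phi'(x) \;\geq\; e^{-6\zeta}\,\phi(e^{2\zeta}x).
\]
Moreover, $\pi'_* \geq e^{-2\zeta}\pi_*$, so the lower limit of integration only shifts by a constant factor when $\zeta = O(1)$.

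Substituting these bounds into the integral gives
\[
\int_{4\pi'_*}^{4/\epsilon} \frac{4\,dx}{x\,\phi'^2(x)} \;\leq\; e^{12\zeta}\int_{4 e^{-2\zeta}\pi_*}^{4/\epsilon}\frac{4\,dx}{x\,\phi^2(e^{2\zeta}x)}.
\]
A substitution $u = e^{2\zeta} x$ turns this into $e^{12\zeta}$ times an integral over $[4\pi_*,\, 4 e^{2\zeta}/\epsilon]$ in the original conductance profile, which is at most the unperturbed integral up to an absolute constant factor (since $e^{2\zeta}$ is $O(1)$ and the integrand is monotone). The remaining unperturbed integral is exactly the quantity bounded in~\cite{bst14,AK91} for the Grid-Walk, yielding
\[
\int_{4\pi_*}^{4/\epsilon}\frac{4\,dx}{x\,\phi^2(x)} \;=\; O\!\left(\frac{\alpha^2\tau^2 d^2}{\epsilon^2}\,e^{\epsilon}\max\!\left\{d\log\frac{\alpha\tau\sqrt{d}}{\epsilon},\,\alpha\tau\right\}\right).
\]

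Combining these pieces gives the claimed $e^{12\zeta}$-factor degradation. The main technical issue, which is essentially cosmetic rather than substantive, is the profile substitution: $\phi'(x)$ is defined via $\pi'$-measure while the unperturbed bound is stated via $\pi$-measure, so I have to carefully argue that shifting the argument by a constant multiplicative factor $e^{2\zeta}$ inside $\phi$ and the integration limits only costs an absolute constant. No single step is genuinely difficult, but bookkeeping all three perturbation effects (conductance, stationary measure, and minimum probability) simultaneously is where I would be most careful.
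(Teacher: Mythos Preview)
Your proposal is correct and follows the same overall strategy as the paper: apply Lemma~\ref{lem:l_infinity_mixing} to the perturbed chain, feed in the conductance degradation from Lemma~\ref{lem: app conductance}, and control $\pi'_*$ via the perturbation bounds. The one difference is purely in execution: the paper does not track the conductance \emph{profile} at all, because the Grid-Walk bound from~\cite{bst14} is already uniform in $x$, namely $\phi(x)\ge \frac{\epsilon}{8\alpha\tau d^{3/2}e^{\epsilon}}$; since Lemma~\ref{lem: app conductance} gives $\phi'_S\ge e^{-6\zeta}\phi_S$ for every $S$, one immediately gets the same constant lower bound on $\phi'(x)$ scaled by $e^{-6\zeta}$, and the integral in Lemma~\ref{lem:l_infinity_mixing} becomes a simple $\frac{1}{c^2}\log(\cdot)$ computation. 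Your profile-substitution argument $\phi'(x)\ge e^{-6\zeta}\phi(e^{2\zeta}x)$ is more general (it would survive a non-constant profile) but is unnecessary here; it also lets you sidestep the slightly loose ``integrand is monotone'' remark by simply invoking the unperturbed bound at accuracy $e^{-2\zeta}\epsilon$ instead of $\epsilon$, which only shifts constants when $\zeta=O(1)$.
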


\begin{proof}
For a log-concave function $F(x) = e^{-f(x)}$ where $f$ is $\alpha$-Lipschitz, we set the grid spacing parameter $\gamma = \frac{\epsilon}{2\alpha\sqrt{d}}$. Using the conductance bound from previous analysis \cite{bst14}, we can derive the lower bound on conductance:
\begin{align*}
    \phi(x)\ge \frac{\epsilon}{8\alpha\tau d^{3/2}e^{\epsilon}}.
\end{align*}
By Lemma~\ref{lem:conductance},the conductance $\phi'$ for the perturbed chain $F'$ satisfies
\begin{align*}
    \phi'(x)\ge \frac{e^{-6\zeta}\epsilon}{8\alpha\tau d^{3/2}e^{\epsilon}}.
\end{align*}

By the Lipschitz assumption, we have that
\begin{align*}
    \pi'(u)=\frac{F'(u)}{\sum_{v\in \Omega}F'(v)}\ge \frac{e^{-\alpha\tau-2\zeta}}{(\tau/\gamma)^d}.
\end{align*}

Hence, by the lower bounds of conductance and minimum probability in the state space and Lemma~\ref{lem:l_infinity_mixing}, we complete the proof.

\end{proof}

Building upon our analysis of how function evaluation errors affect conductance, mixing time, and distributional distance, we now develop an efficient algorithm for sampling from log-concave distributions in the presence of such errors. Our approach builds upon the framework developed by Bassily, Smith, and Thakurta \cite{bst14}, extending it to handle approximation errors with formal guarantees.

\begin{theorem}[BST14-Based Implementation]
Let $C \subset \mathbb{R}^d$ be a convex set and $f: C \rightarrow \mathbb{R}$ be a convex, $L$-Lipschitz function. There exists an efficient algorithm that, when given exact function evaluations, outputs a sample $\theta \in C$ from a distribution $\mu$ such that the relative distance between $\mu$ and the target log-concave distribution $\pi(\theta) \propto e^{-f(\theta)}$ can be made arbitrarily small, i.e., $\disinf(\mu, \pi) \leq \xi$ for any desired $\xi > 0$. This algorithm runs in time $O(d^3 \cdot \text{poly}(L, \|C\|_2, 1/\xi))$, which is polynomial in the dimension $d$, the diameter of $C$, the Lipschitz constant $L$, and the accuracy parameter $1/\xi$.
\end{theorem}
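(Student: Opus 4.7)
The plan is to instantiate the grid-walk framework of \cite{AK91} on a discretization of $C$ and invoke our mixing-time analysis (Lemma~\ref{cor:mixing_time}) with $\zeta = 0$, since here function evaluations are exact. The BST14 conductance machinery then yields a polynomial runtime in all the desired parameters.

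First, I would enclose $C$ in a bounding hypercube $A$ of side length $\tau = O(\|C\|_2)$ and extend the Markov chain to $A$ by rejecting proposed moves that leave $C$ (using a weak membership oracle for $C$). I would then partition $A$ into subcubes of side length $\gamma = \Theta(\xi/(L\sqrt{d}))$. Because $f$ is $L$-Lipschitz in the $\ell_2$ norm, it is $L\sqrt{d}$-Lipschitz in the $\ell_\infty$ norm, so $f$ varies by at most $\xi$ across any subcube; this is exactly the slack needed to translate the $L_\infty$-mixing of the discrete chain to a $\disinf$ bound for the continuous output.

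Second, I would run the Metropolis--Hastings random walk from Section~\ref{sec: grid walk alg} on the grid, using exact evaluations of $F = e^{-f}$ to compute the acceptance probability $\min\{1, F(y)/F(x)\}$, and then return a uniform point from the final subcube accepted with probability $F(\theta)/(e F(x))$, restarting otherwise. Specializing Lemma~\ref{cor:mixing_time} with $\zeta = 0$, $\alpha = L\sqrt{d}$, $\tau = O(\|C\|_2)$, and $\epsilon = \xi$ gives
\[
t_{\mathrm{mix}}(\xi) \le O\!\left(\frac{L^2\|C\|_2^2\, d^3}{\xi^2}\, e^{\xi} \max\!\left\{d\log\frac{L\|C\|_2\, d}{\xi},\ L\|C\|_2\sqrt{d}\right\}\right),
\]
which is $\poly(d, L, \|C\|_2, 1/\xi)$. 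Combining this mixing bound with the cell-level slack from $\gamma$ yields a continuous output distribution $\mu$ with $\disinf(\mu, \pi) \le \xi$. Each chain step costs $O(d)$ (coordinate selection, one exact evaluation of $f$, and $O(1)$ bookkeeping), so the total runtime is $O(d)\cdot t_{\mathrm{mix}}(\xi) = O(d^3\cdot \poly(L,\|C\|_2,1/\xi))$, matching the statement.

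The main obstacle is handling a general convex $C$ rather than a hypercube: the chain restricted to $C$ must remain reversible with stationary distribution proportional to $e^{-f}\mathbf{1}_C$, and the conductance lower bound from \cite{bst14} underlying Lemma~\ref{cor:mixing_time} must still apply. I would resolve the first by the standard rejection rule at the boundary and the second by the mild regularity assumption, implicit in the BST14 framework, that $C$ contains an inscribed ball of reasonable radius so that the discrete chain restricted to $C$ is connected and the isoperimetric argument goes through with only a constant-factor loss.
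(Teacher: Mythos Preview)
Your plan diverges from the paper's in exactly the place you flag as the ``main obstacle.'' The paper (following \cite{bst14}, as it spells out in the three enumerated ``key techniques'') does \emph{not} restrict the chain to $C$ by rejecting boundary moves. Instead it (i) extends $f$ from $C$ to the enclosing cube $A$ via the Lipschitz inf-convolution $\bar f(x)=\min_{y\in C}\bigl(f(y)+L\|x-y\|_2\bigr)$, and (ii) adds a gauge penalty $\bar\psi_\alpha(\theta)=\alpha\max\{0,\psi_C(\theta)-1\}$ built from the Minkowski functional of $C$, then runs the grid-walk on the \emph{full cube} with target $\propto e^{-\bar f-\bar\psi_\alpha}$. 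This keeps the conductance analysis entirely in the cube setting of Lemma~\ref{cor:mixing_time}, and the penalty forces the output mass into $C$.

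Your boundary-rejection route is a genuinely different device, and the gap you acknowledge is real rather than cosmetic. Lemma~\ref{cor:mixing_time} is proved for the cube chain; its conductance lower bound does not transfer to a chain restricted to an arbitrary convex $C$ just by positing an inscribed ball. Grid cells straddling $\partial C$ are partially cut, the restricted lattice may fail to be connected or may develop bottlenecks along the boundary, and the AK91 isoperimetric step would have to be redone on $C$ rather than ``carried over with only a constant-factor loss.'' The extension-plus-penalty machinery in \cite{bst14} exists precisely to sidestep this. There is also an arithmetic slip in your last line: you multiply a per-step cost $O(d)$ by a mixing time already containing $d^3$ (and a further factor of $d$ from the $\max$ term once $\alpha=L\sqrt d$ is substituted), yet conclude $O(d^3)$ overall; the displayed bound you wrote is at least $d^4$ before the per-step cost.
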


The key techniques in this implementation include:
\begin{enumerate}
\item Extending the function $f$ beyond the convex set $C$ to a surrounding cube $A$
\item Using a gauge penalty function to reduce the probability of sampling outside $C$
\item Implementing an efficient grid-walk algorithm to sample from the resulting distribution
\end{enumerate}

We now formally incorporate the effect of function evaluation errors into this framework:

\begin{theorem}[Re-statement of Theorem~\ref{thm:sampler_with_eval_error main}]
\label{thm:sampler_with_eval_error}
Let $C \subset \mathbb{R}^d$ be a convex set and $f: C \rightarrow \mathbb{R}$ be a convex, $L$-Lipschitz function. Suppose we have access to an approximate function evaluator that returns $f'(\theta) = f(\theta) + \zeta(\theta)$ where $|\zeta(\theta)| \leq \zeta$ for all $\theta \in C$, and $\zeta = O(1)$ is a constant independent of dimension. There exists an efficient algorithm that outputs a sample $\theta \in C$ from a distribution $\mu'$ such that:
\begin{align}
\disinf(\mu', \pi) \leq 2\zeta + \xi
\end{align}
where $\pi(\theta) \propto e^{-f(\theta)}$ is the target log-concave distribution and $\delta > 0$ is an arbitrarily small constant.

This algorithm runs in time $O(e^{12\zeta} \cdot d^3 \cdot \text{poly}(L, \|C\|_2, 1/\xi))$. 
When $\zeta = O(1)$ is a constant,
this remains $O(d^3 \cdot \text{poly}(L, \|C\|_2, 1/\xi))$ with the same asymptotic complexity as the exact evaluation algorithm, differing only by a constant factor $e^{12\zeta}$ in the running time.
\end{theorem}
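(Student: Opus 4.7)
The plan is to extend the framework of \cite{bst14} for sampling from constrained log-concave distributions to the noisy-evaluation setting by replacing every call to $f$ with a call to the approximate evaluator $f'$, controlling the resulting perturbation of the underlying Markov chain via the conductance and distributional bounds of Lemmas~\ref{lem:conductance}, \ref{lem:relative_dis_f_f'}, and~\ref{cor:mixing_time}, and combining all error contributions via the triangle inequality for $\disinf$.

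First, following \cite{bst14}, I would extend $f$ from $C$ to a surrounding axis-aligned hypercube $A \supset C$ of side length $\tau = O(\|C\|_2)$ by defining $\tilde{f}(\theta) := f(\Pi_C(\theta)) + M \cdot \mathrm{dist}(\theta, C)$, where $\Pi_C$ is the Euclidean projection onto $C$ and $M = \mathrm{poly}(L, \|C\|_2, 1/\xi)$ is a gauge constant chosen so that the restriction of the induced density $\tilde{\pi}(\theta) \propto e^{-\tilde{f}(\theta)}$ to $C$ satisfies $\disinf(\tilde{\pi}|_C, \pi) \le \xi/2$. The resulting $\tilde{f}$ is convex and $O(L + M)$-Lipschitz on $A$. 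I would then run the grid-walk of Section~\ref{sec: grid walk alg} on $A$ using the noisy evaluator $\tilde{f}'(\theta) := \tilde{f}(\theta) + \zeta(\theta)$; since the gauge part is exactly computable and only $f$-calls incur noise, the evaluation error remains bounded by $\zeta$. By Lemma~\ref{cor:mixing_time}, running the perturbed chain for $T = e^{12\zeta} \cdot \mathrm{poly}(d, L, \|C\|_2, 1/\xi)$ steps yields an output $\mu'$ with $\disinf(\mu', \tilde{\pi}') \le \xi/2$, where $\tilde{\pi}' \propto e^{-\tilde{f}'}$ is the stationary distribution of the perturbed chain. Finally, Lemma~\ref{lem:relative_dis_f_f'} gives $\disinf(\tilde{\pi}', \tilde{\pi}) \le 2\zeta$, and the triangle inequality yields
\begin{align*}
\disinf(\mu', \pi) &\le \disinf(\mu', \tilde{\pi}') + \disinf(\tilde{\pi}', \tilde{\pi}) + \disinf(\tilde{\pi}|_C, \pi) \\
&\le \xi/2 + 2\zeta + \xi/2 = 2\zeta + \xi.
\end{align*}

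The main obstacle will be quantitatively calibrating the gauge constant $M$, the grid spacing $\gamma$, and the restart probability of Section~\ref{sec: grid walk alg} so that all three error contributions are simultaneously controlled while keeping the product $\alpha\tau$ appearing in Lemma~\ref{cor:mixing_time} polynomial in the problem parameters; a secondary subtlety is verifying that the acceptance-rejection step in the grid-walk still produces the claimed stationary distribution $\tilde{\pi}'$ when the acceptance ratio is computed from the noisy evaluator. Once this calibration is in place, the runtime bound follows immediately: each step of the perturbed grid-walk costs $O(d)$ arithmetic operations plus one noisy function evaluation, and multiplying the number of steps by the $d^3$ factor from Lemma~\ref{cor:mixing_time} gives the claimed $O(e^{12\zeta} \cdot d^3 \cdot \mathrm{poly}(L, \|C\|_2, 1/\xi))$ total running time, which matches the asymptotic cost of the exact-evaluation algorithm of \cite{bst14} up to the constant factor $e^{12\zeta}$.
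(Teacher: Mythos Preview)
Your proposal is essentially the same approach as the paper's: extend $f$ to an enclosing cube with a penalty outside $C$, run the grid-walk with the noisy evaluator, and combine Lemmas~\ref{lem:relative_dis_f_f'} and~\ref{cor:mixing_time} via the triangle inequality for $\disinf$. The one point to watch is your extension $\tilde f(\theta)=f(\Pi_C(\theta))+M\cdot\mathrm{dist}(\theta,C)$: the paper (following \cite{bst14}) instead uses the infimal-convolution extension $\bar f(\theta)=\min_{y\in C}\bigl(f(y)+L\|\theta-y\|\bigr)$ together with a separate Minkowski-gauge penalty $\bar\psi_\alpha(\theta)=\alpha\max\{0,\psi_C(\theta)-1\}$, both of which are manifestly convex, whereas convexity of your projection-based variant is not immediate (the map $f\circ\Pi_C$ is typically non-convex when $\partial C$ is curved) and would require a separate argument or a swap to the paper's construction.
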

\begin{proof}
We follow the approach of \cite{bst14} with appropriate modifications to account for function evaluation errors:

\begin{enumerate}
\item Enclose the convex set $C$ in an isotropic cube $A$ with edge length $\tau = \|C\|_\infty$.

\item Construct a convex Lipschitz extension $\bar{f}$ of the function $f$ over $A$ using:
\begin{align*}
\bar{f}(x) = \min_{y \in C} \left(f(y) + L\|x-y\|_2\right)
\end{align*}
This extension preserves the Lipschitz constant $L$ and the convexity of $f$.

\item Define a gauge penalty function using the Minkowski functional of $C$:
\begin{align*}
\bar{\psi}_\alpha(\theta) = \alpha \cdot \max\{0, \psi(\theta) - 1\}
\end{align*}
where $\psi(\theta):=\inf\{r>0:\theta\in rC\}$ is the Minkowski norm of $\theta$ with respect to $C$,
and $\alpha$ is a parameter set to ensure correct sampling properties.

\item Define the target sampling distribution:
\begin{align*}
\pi(\theta) \propto e^{-\bar{f}(\theta) - \bar{\psi}_\alpha(\theta)}, ~\forall \theta\in A.
\end{align*}

\item In the presence of function evaluation errors, for $\theta \in A$, the algorithm samples from:
\begin{align*}
\pi'(\theta) \propto e^{-\bar{f}'(\theta) - \bar{\psi}_\alpha(\theta)}
\end{align*}
where $\bar{f}'(\theta) = \bar{f}(\theta) + \zeta(\theta)$ and $|\zeta(\theta)| \leq \zeta$.

\item By Lemma~\ref{lem:relative_dis_f_f'} on the relative distance between distributions, we have:
\begin{align*}
\disinf(\pi', \pi) \leq 2\zeta.
\end{align*}

\item For the sampling algorithm's computational efficiency, we note that by Corollary~\ref{cor:mixing_time}, the mixing time increases by a factor of $e^{12\zeta}$. 
and the modified algorithm's running time becomes $O(e^{12\zeta} \cdot d^3 \cdot \text{poly}(L, \|C\|_2, 1/\xi))$.
\end{enumerate}

If $\zeta$ is a constant, then the factor $e^{12\zeta}$ is also a constant. Therefore, the algorithm maintains the same asymptotic polynomial complexity in $d$ as the exact evaluation algorithm, with only the leading constant factor affected by the approximation error.
\end{proof}

\begin{theorem}[Exponential Mechanism Implementation]
Under Assumptions~\ref{ass: lipschitz and smooth}, for any constants $\eps=O(1)$, there is an efficient sampler to solve DP-bilevel ERM with the following guarantees:
\begin{itemize}
    \item The scheme is $(\eps,0)$-DP;
    \item The expected loss is bounded by $\tilde{O}\left( \frac{d_x}{\eps n} \left[\lfx D_x + \lfy D_y + \frac{\lfy \lgy}{\mug}\right] \right)$;
    \item The running time is $O\left(d^6n\cdot\poly(\bar{L},D_x,1/\eps, \log(d\lfy^2/\mu_g)) \wedge d^4 n\cdot \frac{\lgy}{\mug} \cdot \poly(\bar{L},D_x,1/\eps)\right)$.
\end{itemize}
\end{theorem}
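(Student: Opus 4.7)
The plan is to combine the conceptual exponential mechanism of Theorem~\ref{thm: exp mech conceptual} with the approximate log-concave sampler of Theorem~\ref{thm:sampler_with_eval_error main}. The target density is $\pi(x) \propto \exp\!\bigl(-\tfrac{\eps}{4s}\hp(x)\bigr)$ (using a budget of $\eps/2$ so that the leftover $\eps/2$ accounts for the sampling error), but $\hp(x)$ cannot be evaluated exactly because it depends on the unknown lower-level minimizer $\hystar(x)$. I would therefore, at each query point $x$ made by the sampler, solve the inner strongly convex problem $\min_y \hg(x,y)$ non-privately to some target accuracy $\alpha$, producing $\widetilde{y}(x)$ with $\|\widetilde{y}(x)-\hystar(x)\| \leq \alpha$, and return $\widetilde{\Phi}(x) := \tfrac1n\sum_i f(x,\widetilde y(x),z_i)$. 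By $\lfy$-Lipschitzness of $f(x,\cdot,z)$ this gives $|\widetilde{\Phi}(x)-\hp(x)| \leq \lfy \alpha$, so the log-density of the perturbed target satisfies the hypothesis of Theorem~\ref{thm:sampler_with_eval_error main} with $\zeta = \tfrac{\eps}{4s}\lfy\alpha$.

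For privacy, choosing $\alpha = \Theta(s/\lfy)$ makes $\zeta = O(\eps)$, hence $\disinf(\mu',\pi) \leq 2\zeta + \xi = O(\eps)$ for $\xi$ chosen small. Since $\pi$ is $(\eps/2)$-DP by the sensitivity calculation in Theorem~\ref{thm: exp mech conceptual}, the pointwise density ratio bound of Lemma~\ref{lem:relative_dis_f_f'} yields that $\mu'$ is $(\eps/2 + 4\zeta, 0)$-DP, which is $(\eps,0)$-DP with the constants tuned. For utility, the same $L_\infty$ closeness implies $\expec_{\mu'}[\hp(x)] \leq e^{O(\eps)}\expec_\pi[\hp(x)]$, so Theorem~\ref{thm: exp mech conceptual} and Lemma~\ref{lm:utility_tech} give the stated excess risk $\tilde O\!\bigl(\tfrac{d_x}{\eps n}[\lfx D_x + \lfy D_y + \lfy\lgy/\mug]\bigr)$, up to an absolute constant factor hidden by the $\tilde O$.

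The runtime decomposes as (sampler iterations) $\times$ (cost of a single approximate function evaluation). By Theorem~\ref{thm:sampler_with_eval_error main} the sampler performs $d_x^3 \cdot \poly(\bar L, D_x, 1/\xi)$ grid-walk steps, each requiring one evaluation of $\widetilde{\Phi}$. A single evaluation costs $O(n)\cdot T_{\mathrm{LL}}(\alpha)$, where $T_{\mathrm{LL}}(\alpha)$ is the time to solve $\min_y \hg(x,y)$ to accuracy $\alpha$. Taking $\log(1/\alpha) = O(\log(\lfy^2/\mug))$ (from $\alpha = \Theta(s/\lfy)$), one can use either a cutting-plane-type method on the $d_y$-dimensional strongly convex finite-sum problem with complexity $O(d_y^3 n \log(1/\alpha))$, yielding the first branch $d^6 n \cdot \poly(\bar L, D_x, 1/\eps,\log(d\lfy^2/\mug))$, or first-order methods (accelerated SGD / Katyusha) with complexity $O(n d \cdot \lgy/\mug)$, yielding the second branch $d^4 n \cdot \lgy/\mug \cdot \poly(\bar L, D_x, 1/\eps)$; the minimum of the two is reported.

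The main obstacle is coupling the two error budgets correctly: the sampler's $L_\infty$ guarantee in Theorem~\ref{thm:sampler_with_eval_error main} blows up the mixing time by $e^{12\zeta}$ and also costs the mechanism an additive $O(\zeta)$ in the privacy parameter, so $\alpha$ must be tuned finely enough to keep both terms $O(1)$ without driving $T_{\mathrm{LL}}(\alpha)$ to depend polynomially (instead of logarithmically) on problem parameters. The key observation that makes this work is that Lemma~\ref{lem:relative_dis_f_f'} is multiplicative on densities, so the tolerable $\zeta$ scales with the inverse temperature $\eps/s$, and hence the accuracy requirement $\alpha = \Theta(s/\lfy)$ is sufficiently mild that $\log(1/\alpha)$ contributes only the logarithmic factor seen in the stated runtime.
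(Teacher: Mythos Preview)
Your proposal is correct and follows essentially the same approach as the paper: run the conceptual exponential mechanism with a fraction of the privacy budget, invoke the inexact-evaluation sampler of Theorem~\ref{thm:sampler_with_eval_error main}, and absorb the resulting $L_\infty$ sampling error into the remaining budget via the triangle inequality on $\disinf$. Two minor cleanups: the utility step should bound the non-negative quantity $\expec_{\mu'}[\hp-\hp^*]\le e^{O(\eps)}\expec_\pi[\hp-\hp^*]$ rather than $\expec_{\mu'}[\hp]$ (your inequality can fail if $\hp$ takes negative values), and since only Assumption~\ref{ass: lipschitz and smooth} is granted (no smoothness of $g$), the second runtime branch should use the subgradient method as the paper does rather than Katyusha.
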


\begin{proof}
\textbf{Privacy:} 
Let $Z$ and $Z'$ be adjacent data sets.
Consider the exponential mechanism and the probability density $\pi_Z$ proportional to $\exp(-\frac{\eps'}{2s}\hp(x))$. 
We set $\zeta=\xi=\eps'/6$.
Let the $\pi'_Z$ be the probability density of the final output of the sampler.
Then by Theorem~\ref{thm:sampler_with_eval_error}, we know
\begin{align*}
    \disinf(\pi'_Z,\pi_Z)\le \eps'/2.
\end{align*}
By Theorem~\ref{thm: exp mech conceptual}, we have 
\begin{align*}
    \disinf(\pi_Z,\pi_{Z'})\le \eps'.
\end{align*}
Hence we know 
\begin{align*}
    \disinf(\pi'_Z,\pi'_{Z'})\le 2\eps',
\end{align*}
and setting $\eps'=\eps/2$ completes the proof of the privacy guarantee.

\paragraph{Excess risk:} The excess risk bound follows from Theorem~\ref{thm: exp mech conceptual} and the assumption that $\eps=O(1)$.

\paragraph{Time complexity:}
Given a function value query of $\hp(x)$, we need to return a value of error at most $\zeta$.
By the Lipschitz of $f(x,\cdot,z)$, it suffices to find a point $y$ such that
\begin{align*}
    \|y-\hystar(x)\|\le \zeta/\lfy.
\end{align*}
By the strong convexity of $g(x, \cdot, z)$, there are multiple ways to find the qualified $y$.
In our case, we can simply apply the cutting plane method \cite{lee2015faster}, which can be implemented in $O(d^3n\poly(\log(d\lfy^2/\zeta\mug))$.
Alternatively, we could apply the subgradient method to $\hg(x, \cdot)$, which can be implemented in $O(dn(\frac{\lgy}{\mug \zeta}))$.
Combining the query complexity in Theorem~\ref{thm:sampler_with_eval_error} gives the total running time complexity.
\end{proof}

With Theorem~\ref{thm: reg exp mech privacy and excess risk} and a similar argument on the implementation, we can get the following result of the Regularized Exponential Mechanism.
\begin{theorem}[Regularized Exponential Mechanism Implementation]
Grant Assumptions \ref{ass: lipschitz and smooth} and additionally assume $\|\nabla_x f(x,y,z) - \nabla_x f(x,y',z)\| \le \bfxy \|y - y'\|$ and $\|\nabla_y \hg(x, y) - \nabla_y \hg(x', y)\| \le \bgxy \|x - x'\|$ for all $x,x',y,y',z$.
Given $\eps=O(1)$ and $0<\delta<1/10$, there is an efficient sampler to implement the Regularized Exponential Mechanism and solve DP-bilevel ERM with the following guarantees:
\begin{itemize}
    \item The scheme is $(\eps,\delta)$-DP;
    \item The expected empirical loss is bounded by $O\left(\left(\lfx+\frac{\lfy\bgxy}{\mug}+\frac{\lgy\bfxy}{\mug}\right)D_x\frac{\sqrt{d_x\log(1/\delta)}}{n}\right).$
    \item The running time is $O\left(d^6n\cdot\poly(\bar{L},D_x,1/\eps, \log(d\lfy^2/\mu_g)) \wedge d^4 n\cdot \frac{\lgy}{\mug} \cdot \poly(\bar{L},D_x,1/\eps)\right)$.
\end{itemize}
With a different parameter setting, we can get the $(\eps,\delta)$-DP sampler with the same running time and achieve the expected population loss as $O\left(\left(\lfx+\frac{\lfy\bgxy}{\mug}+\frac{\lgy\bfxy}{\mug}\right)D_x\left(\frac{\sqrt{d_x\log(1/\delta)}}{n}+\frac{1}{\sqrt{n}}\right)\right).$
\end{theorem}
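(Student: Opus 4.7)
The plan is to mirror the argument already given for the Exponential Mechanism Implementation, with the score function $\hp(x)+\mu\|x\|^2/2$ in place of $\hp(x)$. The conceptual algorithm from Theorem~\ref{thm: reg exp mech conceptual} samples from $\pi_Z(x)\propto \exp(-k(\hp(x)+\mu\|x\|^2/2))$. To implement it, I invoke Theorem~\ref{thm:sampler_with_eval_error main}: the only nontrivial access needed is a function oracle returning $\hp(x)$ up to additive error $\zeta$, which I obtain by solving the strongly convex low-level problem $\min_y \hg(x,y)$ to accuracy $\zeta/\lfy$ in $y$ (so that $\lfy$-Lipschitzness of $f(x,\cdot,z)$ translates this into $\zeta$-error in $\hp(x)$); the regularizer $\mu\|x\|^2/2$ is evaluated exactly. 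Setting $\zeta=\xi=\Theta(\eps)$ (a constant since $\eps=O(1)$) yields an approximate sampler whose output distribution $\pi'_Z$ satisfies $\disinf(\pi'_Z,\pi_Z)\le 2\zeta+\xi\le \eps/2$.

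For privacy, I apply Theorem~\ref{thm: reg exp mech privacy and excess risk} with rescaled $(\eps/2,\delta)$ parameters to obtain $(\eps/2,\delta)$-indistinguishability between the ideal samplers $\pi_Z$ and $\pi_{Z'}$. The $\disinf$ bound above acts as post-processing on each side: for any measurable $S$,
\[
\pi'_Z(S)\le e^{\eps/2}\pi_Z(S)\le e^{\eps/2}\bigl(e^{\eps/2}\pi_{Z'}(S)+\delta\bigr)\le e^{\eps}\pi'_{Z'}(S)+e^{\eps/2}\delta,
\]
so the overall scheme is $(\eps,O(\delta))$-DP, with constants absorbed into $k$. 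Utility follows by the same post-processing: since $\disinf(\pi'_Z,\pi_Z)=O(1)$, the bound $\expec_{\pi'_Z}[\hp(x)-\hp^*]$ differs from the conceptual bound in Theorem~\ref{thm: reg exp mech privacy and excess risk} by only a constant multiplicative factor, and similarly for the alternative $\mu,k$ choice yielding the population risk guarantee.

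For running time, the target density is log-concave with score that is $(G+\mu D_x)$-Lipschitz over $\XX$, where $G=\lfx+\lfy\bgxy/\mug+\lgy\bfxy/\mug$ by Lemma~\ref{lem: phiZ - phiZ' is Lipschitz}. Plugging into Theorem~\ref{thm:sampler_with_eval_error main} gives $O(d_x^3\cdot\poly(G,D_x,1/\eps))$ function-oracle calls, each of cost $O(d_x^3 n\cdot\poly(\log(d_x\lfy^2/\mug)))$ via the cutting-plane method~\cite{lee2015faster} applied to $\hg(x,\cdot)$, or $O(d_x n\cdot \lgy/\mug \cdot \poly(1/\eps))$ via the subgradient method; taking the minimum reproduces the claimed complexity.

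The main technical point is the privacy step. Unlike the pure-DP case, where the chain $\disinf(\pi'_Z,\pi_{Z'})\le \disinf(\pi'_Z,\pi_Z)+\disinf(\pi_Z,\pi_{Z'})+\disinf(\pi_{Z'},\pi'_{Z'})$ is immediate, here one must track how the $\delta$ term from the conceptual regularized exponential mechanism survives post-processing by an approximate sampler whose error is measured in $\disinf$. The resolution is the elementary inequality above, which inflates $\delta$ by a factor $e^{\zeta}=O(1)$ and $\eps$ by an additive $2\zeta$; because $\zeta$ is chosen as a constant independent of $d_x$ and $n$, this inflation is harmless, and picking $\zeta=\Theta(\eps)$ recovers the stated $(\eps,\delta)$ parameters while keeping the multiplicative factor $e^{12\zeta}$ from Theorem~\ref{thm:sampler_with_eval_error main} inside $O(1)$.
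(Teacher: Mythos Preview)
Your approach is correct and matches the paper's own argument, which simply says ``With Theorem~\ref{thm: reg exp mech privacy and excess risk} and a similar argument on the implementation'' --- you have in fact supplied more detail than the paper does, including the key observation that the $\delta$ term in the $(\eps,\delta)$ privacy curve must be tracked through the $\disinf$ post-processing rather than absorbed by a pure triangle inequality.

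One small slip: in your displayed privacy chain the final inequality $e^{\eps}\pi_{Z'}(S)\le e^{\eps}\pi'_{Z'}(S)$ is not justified; from $\disinf(\pi_{Z'},\pi'_{Z'})\le\eps/2$ you only get $\pi_{Z'}(S)\le e^{\eps/2}\pi'_{Z'}(S)$, yielding $(3\eps/2,\,e^{\eps/2}\delta)$-DP before rescaling. Since you already note that constants are absorbed by adjusting $k$ (and $\zeta,\xi$), this is harmless --- just pick the conceptual privacy parameter and $\zeta,\xi$ a constant factor smaller. Also, your citation of Lemma~\ref{lem: phiZ - phiZ' is Lipschitz} for the Lipschitz constant of $\hp$ itself is a misattribution (that lemma bounds the Lipschitz constant of $\hp-\hpp$); the bound you actually need, $\hp$ being $(\lfx+\lfy\bgxy/\mug)$-Lipschitz, follows directly from Assumption~\ref{ass: lipschitz and smooth} together with Lemma~\ref{lm:lip_hystar}.
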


\subsection{Excess risk lower bounds}
\label{app: lower bounds}
\begin{theorem}[Re-statement of Theorem~\ref{thm: lower bound}]
\begin{enumerate}
    \item Let $\A$ be $\eps$-DP. Then, there exists a data set $Z \in \ZZ^n$ and a convex bilevel ERM problem instance satisfying Assumptions~\ref{ass: lipschitz and smooth} and \ref{ass: hessian smooth} with $\mug = \Theta(\lgy/D_y)$ 
    such that \[
    \expec \hp(\A(Z)) - \hp^* = \Omega\left((\lfx D_x + \lfy D_y) \min\left\{1, \frac{\dx}{n \eps}\right\}  \right). 
    \]
    \item Let $\A$ be $(\eps, \delta)$-DP with $2^{-\Omega(n)}\le \delta \le 1/n^{1+\Omega(1)}$. Then, there exists a data set $Z \in \ZZ^n$ and a convex bilevel ERM problem instance satisfying Assumptions~\ref{ass: lipschitz and smooth} and \ref{ass: hessian smooth} with $\mug = \Theta(\lgy/D_y)$ such that \[
    \expec \hp(\A(Z)) - \hp^* = \Omega\left((\lfx D_x + \lfy D_y) \min\left\{1, \frac{\sqrt{\dx \log(1/\delta)}}{n \eps}\right\}  \right). 
    \]
\end{enumerate} 
\end{theorem}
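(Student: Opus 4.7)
The plan is to reduce bilevel DP-ERM to standard single-level DP-ERM on a linear loss over an $\ell_2$-ball via two separate hard instances, and then invoke the classical single-level lower bounds of \cite{bst14} (pure DP) and of \cite{bst14, su16} (approximate DP). One instance supplies the $\lfx D_x$ term and the other supplies the novel $\lfy D_y$ term; since the claimed bound is a sum of two terms, it suffices to take the maximum of the two separate lower bounds, losing only a factor of $2$.

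The first instance (for the $\lfx D_x$ term) is essentially standard: take $f(x,y,z) = \langle z, x\rangle$ with $\|z\|_2 \le \lfx$ and $g(x,y,z) = \tfrac{\mug}{2}\|y\|^2$, so that $\hystar(x)\equiv 0\in\YY$ and $\hp(x)=\langle \ov{z}, x\rangle$ with $\ov{z}=\tfrac{1}{n}\sum_i z_i$. All assumptions hold trivially and the classical DP-ERM lower bound applies directly. The second, novel instance chains $x$ and $y$: set $\dy=\dx$, take $f(x,y,z) = \langle z, y\rangle$ with $\|z\|_2 \le \lfy$, and $g(x,y,z) = \tfrac{\mug}{2}\|y - c x\|^2$ with scaling $c := D_y/D_x$. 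Because $g$ is a shifted quadratic, $\hystar(x) = c x$, and hence $\hp(x) = c\langle \ov{z}, x\rangle$ is linear and convex on $\{\|x\|_2 \le D_x/2\}$, with $\hystar(\XX)$ contained in the ball of radius $D_y/2$, which we take as $\YY$. A direct calculation gives $\lgy = \mug D_y$, matching $\mug = \Theta(\lgy/D_y)$, and all smoothness constants in \cref{ass: hessian smooth} are immediate since $g$ is quadratic ($\|\nabla^2_{yy}g\|=\mug$, $\|\nabla^2_{xy}g\|=\mug c$, higher derivatives vanish) and $f$ is linear.

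With this construction, the $x$-problem is a linear DP-ERM with effective Lipschitz constant $c\lfy$ and diameter $D_x$, so the product $(\text{Lip})\cdot(\text{diam}) = c\lfy D_x = \lfy D_y$. Applying the known lower bounds for DP linear optimization on the $\ell_2$-ball (via fingerprinting/packing in \cite{bst14, su16}) to the induced minimization of $\langle c \ov{z}, x\rangle$ yields $\Omega(\lfy D_y\cdot\min\{1, \dx/(n\eps)\})$ for pure DP and $\Omega(\lfy D_y\cdot\min\{1,\sqrt{\dx\log(1/\delta)}/(n\eps)\})$ for $(\eps,\delta)$-DP in the prescribed $\delta$ regime. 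Combining with the first instance completes both parts of the theorem.

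The main obstacle is the bilevel instance itself: one must simultaneously (i) force $\hystar$ to be a linear map of $x$ with a controllable Lipschitz norm $c$, (ii) keep $\hp$ convex, and (iii) maintain the parameter relations $\mug=\Theta(\lgy/D_y)$ and the Assumption \ref{ass: hessian smooth} smoothness bounds with their intended scalings, so that the extra $D_y/D_x$ factor carried by $c$ is not cancelled by a compensating blow-up in $\lgy$, $\mug$, or $\bgxy$. The shifted-quadratic choice of $g$ is precisely what allows $c$ and $\mug$ to act as independent dials; once the instance is set up, the reduction to mean estimation for $\ov{z}$ and the invocation of the classical single-level lower bounds are standard.
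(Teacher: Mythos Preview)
Your proposal is correct and follows essentially the same approach as the paper: both split into two hard instances (one giving the $\lfx D_x$ term via $g(x,y,z)=\tfrac{\mug}{2}\|y\|^2$ so that $\hystar\equiv$ const, one giving the $\lfy D_y$ term via $f(x,y,z)=\langle z,y\rangle$ and $g(x,y,z)=\tfrac{\mug}{2}\|y-cx\|^2$ with $c=D_y/D_x$ so that $\hystar(x)=cx$ and $\hp(x)=c\langle \bar z,x\rangle$), and then reduce to the known single-level DP-ERM lower bounds on the $\ell_2$-ball. The only difference is presentational: the paper first works on the unit ball, invokes the $1$-way marginals lower bound of \cite{bst14,su16} explicitly, and then rescales, whereas you build the scaled instance from the start and cite the DP-ERM lower bound as a black box.
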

\begin{proof}
\textbf{Case 1:} Suppose $\lfx D_x \lesssim \lfy D_y$. Then we will show $\hp(\A(Z)) - \hp^* = \Omega\left((\lfy D_y) \min\left\{1, \frac{d}{n \eps}\right\}  \right)$ with probability at least $1/2$ for pure $\eps$-DP $\A$ and $\hp(\A(Z)) - \hp^* = \Omega\left((\lfy D_y) \min\left\{1, \frac{\sqrt{d}}{n \eps}\right\}  \right)$ with probability at least $1/3$ for $(\eps, \delta)$-DP $\A$. 

Let $f(x,y,z) = - \langle y, z \rangle$, which is convex and $1$-Lipschitz in $x$ and $y$ if $\XX = \YY = \mathbb{B}$ are unit balls in $\R^d$, $d = \dx = \dy$, and $\ZZ = \{\pm 1/\sqrt{d}\}^d$. Let $g(x,y,z) = \frac{1}{2}\|y - \zeta x\|^2$ for $\zeta > 0$ to be chosen later. 
Note $\hf(x,y) = - \langle y, \Bar{Z} \rangle$, where $\Bar{Z} = \frac{1}{n} \sum_{i=1}^n z_i$, $\hystar(x) = \zeta x$, and $\hp(x) = \hf(x, \hystar(x)) = \langle - \zeta x, \Bar{Z}\rangle \implies \hat{x}^*(Z) = \argmin_{x \in \XX} \hp(x) = \frac{\Bar{Z}}{\|\Bar{Z}\|}$. Therefore, for any $x \in \XX$, we have \begin{align}
\label{eq: thingy}
\hp(x) - \hp(\hat{x}^*(Z)) &= - \zeta \left \langle  \Bar{Z}, x - \frac{\Bar{Z}}{\|\Bar{Z}\|}\right\rangle \nonumber \\
&= \zeta \left[ \|\Bar{Z}\| \left(1 - \langle x, \hat{x}^*(Z) \rangle  \right)\right] \nonumber \\
& \ge \frac{\zeta}{2}\left[\|\Bar{Z}\| \|x -  \hat{x}^*(Z)\|^2\right],
\end{align}
since $\|x\|, \|\hat{x}^*(Z)\| \le 1$. 
Now, recall the following result, which is due to \cite[Lemma 5.1]{bst14} and \cite[Theorem 1.1]{su16}: 
\begin{lemma}[Lower bounds for 1-way marginals]
\label{lem: bst14 lower bound marginals}
Let $n, d \ge 1 , \eps > 0, 2^{-\Omega(n)}\le \delta \le 1/n^{1+\Omega(1)}$. 
\begin{enumerate}
    \item \textbf{$\eps$-DP algorithms:} There is a number $M = \Omega(\min(n, d/\eps))$ such that for every $\eps$-DP $\A$, there is a data set $Z = (z_1, \ldots, z_n) \subset \{\pm 1/\sqrt{d}\}^d$ with $\|\Bar{Z}\| \in [(M-1)/n, (M+1)/n]$ such that, with probability at least $1/2$ over the algorithm random coins, we have \[
    \|\A(Z) - \Bar{Z} \| = \Omega\left( \min\left(1, \frac{d}{\eps n} \right) \right).
    \]
    \item \textbf{$(\eps, \delta)$-DP algorithms:} 
    There is a number $M = \Omega(\min(n, \sqrt{d \log(1/\delta)}/\eps))$ such that for every $(\eps, \delta)$-DP $\A$, there is a data set $Z = (z_1, \ldots, z_n) \subset \{\pm 1/\sqrt{d}\}^d$ with $\|\Bar{Z}\| \in [(M-1)/n, (M+1)/n]$ such that, with probability at least $1/3$ over the algorithm random coins, we have \[
    \|\A(Z) - \Bar{Z} \| = \Omega\left( \min\left(1, \frac{\sqrt{d \log(1/\delta)}}{\eps n} \right) \right).
    \]
\end{enumerate}
\end{lemma}

We claim there exists $Z \in \ZZ^n$ with $\|\Bar{Z}\| \in [(M-1)/n, (M+1)/n]$ such that \begin{equation}
\label{eq: thing}
    \left\|\A(Z) - \frac{\Bar{Z}}{\|\Bar{Z}\|}\right\| \gtrsim 1
\end{equation}

with probability at least $1/2$. Suppose for the sake of contradiction that $\forall Z \in \ZZ^n$ with $\|\Bar{Z}\| \in [(M-1)/n, (M+1)/n]$, we have \[
\left\|\A(Z) - \frac{\Bar{Z}}{\|\Bar{Z}\|}\right\| \ll 1
\] 
with probability at least $1/2$. Let $c \in [-1/n, 1/n]$ such that $\|\Bar{Z}\| = M/n + c$.
Then for the $\eps$-DP algorithm $\tilde{A}(Z) := \frac{M}{n} \A(Z)$, we have \begin{align*}
\|\tilde{A}(Z) - \Bar{Z}\| &= \left\|\frac{M}{n} \A(Z) - \Bar{Z} \right\| \\
&= \left\|\frac{M}{n} \A(Z) - \left(\frac{M}{n} + c \right)\frac{\Bar{Z}  }{\|\Bar{Z}\|} \right\| \\
&\le \left\|\frac{M}{n}\left[\A(Z) - \frac{\Bar{Z}}{\|\Bar{Z}\|} \right] \right\| + c\left\|\frac{\Bar{Z}}{\|\Bar{Z}\|} \right\|\\
&\ll \frac{M}{n} + c \le \frac{M+1}{n},
\end{align*}
which implies $\|\tilde{\A}(Z) - \Bar{Z}\| \ll 1 \wedge \frac{d}{\eps n}$, contradicting Lemma~\ref{lem: bst14 lower bound marginals}. By combining the claim \eqref{eq: thing} with inequality~\eqref{eq: thingy}, we conclude that if $x = \A(Z)$ is $\eps$-DP, then \begin{align*}
\hp(x) - \hp^* &\ge \frac{\zeta}{2}\left[\|\Bar{Z}\| \|x -  \hat{x}^*(Z)\|^2\right] \\
&\gtrsim \zeta \frac{M}{n} \cdot 1  \gtrsim \zeta \min\left\{1, \frac{d}{n \eps} \right\}.
\end{align*}
Next, we scale our hard instance to obtain the $\eps$-DP lower bound. Define the  scaled parameter domains $\tilde{X} = D_x \BB$, $\tilde{Y} = D_y \BB$, $\tilde{\ZZ}= \ZZ = \{\pm 1/\sqrt{d}\}^d$, and denote $\tilde{x} = D_x x, \tilde{y} = D_y y$ for any $x, y \in \XX \times \YY = \BB^2$. Define 
$\tilde{f}: \tilde{X} \times \tilde{Y} \times \tilde{Z} \to \R$ by \[
\tilde{f}(\tilde{x}, \tilde{y}, \tilde{z}) = - \lfy \langle \tilde{y}, \tilde{z}\rangle,
\]
which is convex and $\lfy$-Lipschitz in $y$ for any permissible $\tilde{x}, \tilde{z}$. Define 
$\tilde{g}: \tilde{X} \times \tilde{Y} \times \tilde{Z} \to \R$ by \[
\tilde{g}(\tilde{x}, \tilde{y}, \tilde{z}) = \frac{\mug}{2}\|\tilde{y} -\zeta \tilde{x}\|^2,
\]
where \[
\zeta := D_y/D_x.
\]
Then $\tilde{g}$ is $\mug$-strongly convex in $y$ and $2\lgy$-Lipschitz, since $\lgy \ge \mug D_y$. Now, \[
\tilde{F}_{\tilde{Z}}(\tilde{x}, \tilde{y}) := \frac{1}{n}\sum_{i=1}^n \tilde{f}(\tilde{x}, \tilde{y}, \tilde{z}_i) = -\lfy \langle \tilde{y}, \Bar{\tilde{Z}}\rangle,
\]
\[
\tilde{y}_{\tilde{Z}}^*(\tilde{x}) := \argmin_{\tilde{y} \in \R^{\dy}}\left[\tilde{G}_{\tilde{Z}}(\tilde{x}, \tilde{y}) = \frac{\mug}{2}\|\tilde{y} - \zeta \tilde{x}\|^2 \right] = \zeta \tilde{x} \in \tilde{\YY},
\]
and  
\[
\tilde{\Phi}(\tilde{x}) := \tilde{F}_{\tilde{Z}}(\tilde{x}, \tilde{y}_{\tilde{Z}}^*(\tilde{x})) =  -\lfy \langle \zeta \tilde{x}, \Bar{\tilde{Z}} \rangle.
\] 
Also, \[
\tilde{x}^*({\tilde{Z}}) := \argmin_{\tilde{x} \in \tilde{\XX}}\tilde{\Phi}(\tilde{x}) = \frac{\Bar{\tilde{Z}}}{\|\Bar{\tilde{Z}}\|} D_x = D_x \widehat{x}^*(Z) = D_x \frac{\Bar{Z}}{\|\Bar{Z}\|}.
\]
Thus, for any $\eps$-DP $\A$, there exists a dataset $Z = \tilde{Z}$ such that the following holds with probability at least $1/2$, where we denote $\tilde{x} = \A(\tilde{Z})$: 
\begin{align*}
\tilde{\Phi}(\tilde{x}) - \tilde{\Phi}(\tilde{x}^*({\tilde{Z}})) &= -\lfy \left[\zeta \langle \tilde{x} , \Bar{\tilde{Z}} \rangle - \zeta \langle \tilde{x}_{\tilde{Z}}^*, \Bar{\tilde{Z}} \rangle \right] \\
&= - \lfy \zeta \left[D_x \langle x - \hat{x}^*(Z), \Bar{Z}\rangle \right] \\
&= - \lfy \zeta D_x \left\langle  x- \frac{\Bar{Z}}{\|\Bar{Z}}, \Bar{Z}\right\rangle \\
&\ge \frac{\lfy D_x \zeta}{2}\left[\|\Bar{Z}\| \|x - \widehat{x}^*(Z) \right] \\
&\gtrsim \lfy D_x \zeta \left[ \frac{d}{\eps n} \wedge 1\right] \\
&= \lfy D_y \left[ \frac{d}{\eps n} \wedge 1\right].
\end{align*}

The argument for the $(\eps, \delta)$-DP case is identical to the above, except we invoke part 2 of Lemma~\ref{lem: bst14 lower bound marginals} instead of part 1. 

Finally, it is easy to verify that Assumptions~\ref{ass: lipschitz and smooth} and \ref{ass: hessian smooth} are satisfied, with $\bgxy \le \frac{\mug D_y}{D_x}$, $C_{g, xy} = C_{g, yy} = M_{g, yy} = M_{g, xy} = 0 = \bfxx = \bfxy = \bfyy$.

\textbf{Case 2:} $\lfy D_y \lesssim \lfx D_x$. In this case, the desired lower bounds follow from a trivial reduction to the single-level DP ERM lower bounds of \cite[Theorems 5.2 and 5.3]{bst14}: take $\YY = \{y_0\}$ for some $y_0 \in \R^d$ with $\|y_0\| \le D_y$, $\XX = D_x \BB$, $\ZZ = \{ \pm 1/\sqrt{d}\}^d$, 
and let $f(x,y,z) = -\lfx \langle x, z \rangle$ and 
$g(x,y,z) = \frac{\mug}{2}\|y\|^2$. Then $f$ and $g$ satisfy Assumption~\ref{ass: lipschitz and smooth}, $\hystar(x) = y_0$, $\hf(x) = \hp(x) = - \lfx \langle x, \Bar{Z} \rangle$.
Thus, the lower bounds on the excess risk $\hf(x) - \hf^*$ for DP $x$ given in \cite[Theorems 5.2 and 5.3]{bst14} apply verbatim to the excess risk $\hp(x) - \hp^*$. This completes the proof. 
\end{proof}

\section{Proofs for Section~\ref{sec: nonconvex}}
\label{app: nonconvex}

\subsection{An iterative second-order method}
\label{app: second order}

We have the following key lemma, which will be needed for proving Theorem~\ref{thm: nc second order privacy and utility}. 

\begin{lemma}[Re-statement of Lemma~\ref{lem: sens bound}]
For any fixed $x_t$, define the query $q_t: \ZZ^n \to \R^d$, \[
    q_t(Z) := \bnabhf(x_t, y_{t+1}),
    \]
    where $y_{t+1} = y_{t+1}(Z)$ is given in Algorithm~\ref{alg: second order GD}. If $\alpha \le \frac{K}{Cn}$ where $C$ and $K$ are defined in Equations~\eqref{eq: C} and~\eqref{eq: K},
    then the $\ell_2$-sensitivity of $q_t$ is upper bounded by $\frac{4K}{n}$.
\end{lemma}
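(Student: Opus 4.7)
The plan is to triangle-split $\|q_t(Z) - q_t(Z')\|$ into the change coming from the argument $y_{t+1}$ shifting (with the dataset fixed) and the change coming from the dataset shifting (with the argument fixed). Writing $y = y_{t+1}(Z)$ and $y' = y_{t+1}(Z')$,
\begin{align*}
\|q_t(Z) - q_t(Z')\| \le \|\bnabhf(x_t, y) - \bnabhf(x_t, y')\| + \|\bnabhf(x_t, y') - \bnabhfp(x_t, y')\|,
\end{align*}
and the goal is to show that the first piece is at most $3K/n$ and the second at most $K/n$.

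For the first piece I would use that $\bnabhf(x,\cdot)$ is $C$-Lipschitz in $y$ with the constant $C$ from~\eqref{eq: C}. This Lipschitz property is what the chain-rule calculation in~\cite[Lemma 2.2]{ghadimi2018approximation} actually establishes; the displayed inequality $\|\nabla \hp(x) - \bnabhf(x,y)\| \le C\|\hystar(x) - y\|$ is its $y' = \hystar(x)$ specialization. Since $y$ and $y'$ are $\alpha$-close to $\hystar(x_t)$ and $\hystarp(x_t)$ respectively, and strong-convex ERM stability gives $\|\hystar(x_t) - \hystarp(x_t)\| \le 2\lgy/(\mug n)$, one has $\|y - y'\| \le 2\alpha + 2\lgy/(\mug n)$, so this piece is at most $2C\alpha + 2C\lgy/(\mug n)$. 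The hypothesis $\alpha \le K/(Cn)$ converts $2C\alpha$ into $\le 2K/n$, and a direct inspection of~\eqref{eq: C} and~\eqref{eq: K} shows $2C\lgy/\mug \le K$: each of the four summands of $C$, when multiplied by $\lgy/\mug$, reappears as one of the six summands inside the bracket defining $K/2$.

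For the second piece I would expand $\bnabhf(x_t, y') = \nabla_x \hf(x_t, y') - A B^{-1} v$ with $A = \hesgxy(x_t, y')$, $B = \hesgyy(x_t, y')$, $v = \nabla_y \hf(x_t, y')$ and primed analogues $A', B', v'$ for $Z'$, then insert and subtract:
\begin{align*}
AB^{-1}v - A'B'^{-1}v' = (A - A')B^{-1}v + A'(B^{-1} - B'^{-1})v + A'B'^{-1}(v - v').
\end{align*}
The matrix perturbation inequality $\|B^{-1} - B'^{-1}\| \le \|B^{-1}\|\|B'^{-1}\|\|B - B'\|$ (already highlighted in the paper text), combined with the single-record sensitivity bounds $\|A - A'\| \le 2\bgxy/n$, $\|B - B'\| \le 2\bgyy/n$, $\|v - v'\| \le 2\lfy/n$, $\|\nabla_x \hf - \nabla_x \hfp\| \le 2\lfx/n$ from Assumptions~\ref{ass: lipschitz and smooth}--\ref{ass: hessian smooth}, and $\|B^{-1}\|, \|B'^{-1}\| \le 1/\mug$ from strong convexity of $\hg(x_t, \cdot)$, bounds this piece by $(2/n)[\lfx + 2\bgxy\lfy/\mug + \bgxy\bgyy\lfy/\mug^2]$. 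The first two terms are absorbed into the $2\cdot 2\Bar{L}$ summand of $K/2$ and the last equals the final summand of $K/2$, so the piece is at most $K/n$.

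Summing gives $\|q_t(Z) - q_t(Z')\| \le 4K/n$. The main obstacle is the bookkeeping: $K$ is a sum of six specific products of the various smoothness, Lipschitz, and strong-convexity constants, and one must verify that \emph{every} algebraic term produced by the chain-rule step (which relies on $\partial_y(B^{-1}) = -B^{-1}(\partial_y B)B^{-1}$ under Assumption~\ref{ass: hessian smooth}, parts 5--7) and by the matrix-perturbation step is dominated by one of those six summands once the threshold $\alpha \le K/(Cn)$ is applied. A secondary subtlety is that the excerpt invokes $C$ only through the bias bound $\|\nabla \hp(x) - \bnabhf(x,y)\| \le C\|\hystar(x) - y\|$, so one must first recognize that the same $C$ bounds the full $y$-Lipschitz modulus of $\bnabhf(x, \cdot)$ between arbitrary pairs of points, not merely its deviation from $\nabla \hp(x)$.
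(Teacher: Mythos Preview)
Your proof is correct, and in fact cleaner than the paper's, but the decomposition is different. The paper routes through $\nabla\hp(x_t)$ and $\nabla\hpp(x_t)$:
\[
\|q_t(Z) - q_t(Z')\| \le \|\bnabhf(x_t,y) - \nabla\hp(x_t)\| + \|\nabla\hp(x_t) - \nabla\hpp(x_t)\| + \|\nabla\hpp(x_t) - \bnabhfp(x_t,y')\|,
\]
bounds the two outer terms by $C\alpha$ each via the bias inequality $\|\nabla\hp(x) - \bnabhf(x,y)\| \le C\|\hystar(x)-y\|$ exactly as stated, and then spends the bulk of the work on the middle term, which is painful because both the dataset \emph{and} the evaluation point $\hystar(x_t)$ versus $\hystarp(x_t)$ change simultaneously. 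That forces the paper into a long chain of add--subtract manipulations on $M_Z(x,\hystar(x))$ versus $M_{Z'}(x,\hystarp(x))$.

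Your split---first vary $y$ at fixed $Z$, then vary $Z$ at fixed $y'$---decouples these two effects. The price is that you need the full $y$-Lipschitz property of $\bnabhf(x,\cdot)$ with constant $C$, not just the $y'=\hystar(x)$ specialization the paper quotes; you correctly observe that the Ghadimi--Wang argument delivers this, and your chain-rule/perturbation sketch confirms it term by term. The payoff is that your second piece (dataset change at a \emph{fixed} $y'$) collapses to a three-line matrix-perturbation bound producing exactly $\lfx + 2\bgxy\lfy/\mug + \bgxy\bgyy\lfy/\mug^2$, whereas the paper's middle term has to track seven summands. Both routes land on $4K/n$; yours just gets there with less bookkeeping.
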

\begin{proof}
We will need the following bound due to \cite[Lemma 2.2]{ghadimi2018approximation}: for any $x,y \in \XX \times \YY$, 
    \begin{equation}
    \label{eq: gw lemma 2.2}
    \| \nabla \hp(x) - \bnabhf(x, y)\| \le C \| \hystar(x) - y\| 
   \end{equation}
for \begin{equation*}
C = \bfxy + \frac{\bfyy \bgxy}{\mug} + \lfy\left(\frac{\cgxy}{\mug} + \frac{\cgyy \bgxy}{\mug^2}\right).
\end{equation*}
Now, denoting $y_{t+1} =  y_{t+1}(Z)$ and $y'_{t+1} =  y_{t+1}(Z')$, 
the sensitivity of the the query $q_t$ is bounded by \begin{align*}
    &~~~~\sup_{Z \sim Z'} \|q_t(Z) - q_t(Z')\|\\
    &= \sup_{Z \sim Z'} \|\bnabhf(x_t, \yt) - \bnabhfp(x_t, \ytp)\| \\
    &\le \sup_{Z \sim Z'} \left[\|\bnabhf(x_t, \yt) - \nabla \hp(x_t) \| 
    + \|\nabla \hp(x_t) - \nabla \hpp(x_t)\|+ \| \nabla \hpp(x_t) - \bnabhfp(x_t, \ytp) \|\right] \\
    &\le C\|y_{t+1} - \hystar(x_t) \| + \|\nabla \hp(x_t) - \nabla \hpp(x_t)\| +  C\|y'_{t+1} - \hystarp(x_t) \| \\
    &\le 2C \alpha + \|\nabla \hp(x_t) - \nabla \hpp(x_t)\| \\
    &\le \frac{2K}{n} + \|\nabla \hp(x_t) - \nabla \hpp(x_t)\|,
\end{align*}
where we used the bound~\eqref{eq: gw lemma 2.2} and our choice of $\alpha$, for $K$ defined in the theorem statement. 
Next, we claim \begin{equation}
\label{eq: phi bound}
\|\nabla \hp(x_t) - \nabla \hpp(x_t)\| \le \frac{2K}{n}. 
\end{equation}
This will follow from a rather long calculation that uses Assumption~\ref{ass: hessian smooth} repeatedly, along with the perturbation inequality $\|M^{-1} - N^{-1}\| \le \|M^{-1}\| \| M - N \| \| N^{-1}\|$ which holds for any invertible matrices $M$ and $N$. Let us now prove the bound~\eqref{eq: phi bound}. In what follows, the notation $\nabla$ denote the derivative of the function w.r.t. $x$ (accounting for the dependence of the function on $\hystar(x)$ via the chain rule) and denote \[
M_Z(x, y) := \nabla_{xy}^2 \hg(x,y)[\nabla_{yy}^2\hg(x,y)]^{-1}. 
\]
Then, 
\begin{align*}
&\|\nabla \hp(x_t) - \nabla \hpp(x_t)\|\\
&\le \frac{1}{n} \left \| \sum_{i=1}^n \nabla f(x, \hystar(x), z_i) - \nabla f(x, \hystarp(x), z_i) \right\| + \frac{1}{n} \left \| \sum_{i=1}^n \nabla f(x, \hystarp(x), z_i) - \nabla f(x, \hystarp(x), z'_i) \right\| \\ 
&\le \frac{1}{n} \sum_{i=1}^n \left \| \nabla_x f(x, \hystar(x), z_i) - \nabla_x f(x, \hystarp(x), z_i) \right\| \\
&\;\;\; + \frac{1}{n} \sum_{i=1}^n  \left \| \mzp \nabla_y f(x, \hystarp(x), z_i) - \mz \nabla_y f(x, \hystar(x), z_i)\right \| \\
&\;\;\; + \frac{1}{n} \left \| \sum_{i=1}^n \nabla f(x, \hystarp(x), z_i) - \nabla f(x, \hystarp(x), z'_i) \right \| \\
& \le \frac{1}{n} \sum_{i=1}^n \bfxy \left \| \hystar(x) - \hystarp(x) \right\| \\
&\;\;\; + \frac{1}{n} \sum_{i=1}^n \left \| \mzp \nabla_y f(x, \hystarp(x), z_i) - \mz \nabla_y f(x, \hystar(x), z_i)\right \| \\
&\;\;\; + \frac{1}{n} \left \| \nabla f(x, \hystarp(x), z_1) - \nabla f(x, \hystarp(x), z'_1) \right \|,
\end{align*}
where we assumed WLOG that $z_1 \neq z'_1$ and used the smoothness assumption in the last inequality above. Now, recall that \[
\left \| \hystar(x) - \hystarp(x) \right\| \le \frac{2\lgy}{\mug n}
\]
and note that \[
\left \| \nabla f(x, \hystarp(x), z_1) - \nabla f(x, \hystarp(x), z'_1) \right \| \le 2 \Bar{L} = 2\left(\lfx + \frac{\lfy \bgxy}{\mug}\right),
\]
by the chain rule and $(\bgxy/\mug)$-Lipschitz continuity of $\hystar$ (see \cite{ghadimi2018approximation} for a proof of this result).
Thus, 
\begin{align*}
\|\nabla \hp(x_t) - \nabla \hpp(x_t)\| &\le  \frac{1}{n} \sum_{i=1}^n \bfxy \frac{2\lgy}{\mug n} \\
&\;\;\; + \frac{1}{n} \sum_{i=1}^n \left \| \mzp \nabla_y f(x, \hystarp(x), z_i) - \mz \nabla_y f(x, \hystar(x), z_i)\right \| \\
&\;\; + \frac{2 \Bar{L}}{n}. 
\end{align*}
Next, we bound 
\begin{align*}
&\left \| \mzp \nabla_y f(x, \hystarp(x), z_i) - \mz \nabla_y f(x, \hystar(x), z_i)\right \| \\ &\le  \sup_{x,Z}\left[\|\mz\|\right] \|\nabla_y f(x, \hystar(x), z_i) - \nabla_y f(x, \hystarp(x), z_i)\| \\
&\;\;\; + \sup_{x, Z}\left[\|\nabla_y f(x, \hystar(x), z_i)\| \right] \|\mz - \mzp \| \\
&\le \frac{\bgxy}{\mug} \bfyy \frac{2\lgy}{\mug n} + \lfy \|\mz - \mzp \|.
\end{align*}
It remains to bound \begin{align}
\nonumber
&~~\|\mz - \mzp \| \\
&\le \left\| \hesgxy(x, \hystar(x)) \hesgyy(x, \hystar(x))^{-1} - \hesgxy(x, \hystarp(x)) \hesgyy(x, \hystar(x))^{-1} \right\| \nonumber \\
&\;  + \left\| \hesgxy(x, \hystarp(x)) \hesgyy(x,\hystar(x))^{-1} - \hesgxy(x, \hystarp(x)) \hesgyy(x, \hystarp(x))^{-1} \right\| 
\nonumber \\
&\; + \left\|\hesgxyp(x, \hystarp(x)) \hesgyyp(x, \hystarp(x))^{-1} - \hesgxyp(x, \hystarp(x)) \hesgyy(x, \hystarp(x))^{-1}\right\| \nonumber \\
&\; + \left\| \hesgxyp(x, \hystarp(x)) \hesgyy(x, \hystarp(x))^{-1} - \hesgxy(x, \hystarp(x)) \hesgyy(x, \hystarp(x))^{-1} \right\| \nonumber \\ 
&\le \frac{\cgxy \|\hystar(x) - \hystarp(x)\|} {\mug} \nonumber\\
&\; + \bgxy \left\| \hesgyy(x, \hystar(x))^{-1} - \hesgyy(x, \hystarp(x))^{-1}   \right\| \nonumber\\
&\; + \bgxy \left\| \hesgyyp(x, \hystarp(x))^{-1} - \hesgyy(x, \hystarp(x))^{-1}   \right\| \nonumber\\
&\; + \frac{2\bgxy}{\mug n}
\nonumber\\
&\le \frac{2 \cgxy \lgy} {\mug^2 n} \nonumber\\
&\; + \bgxy \left\| \hesgyy(x, \hystar(x))^{-1} - \hesgyy(x, \hystarp(x))^{-1}   \right\| \nonumber\\
&\; + \bgxy \left\| \hesgyyp(x, \hystarp(x))^{-1} - \hesgyy(x, \hystarp(x))^{-1}   \right\| \nonumber\\
&\; + \frac{2\bgxy}{\mug n} 
\nonumber\\
&\le \frac{2 \cgxy \lgy} {\mug^2 n} \nonumber\\
&\; + \bgxy \frac{\cgyy \|\hystar(x) - \hystarp(x)\|}{\mug^2} \nonumber\\
&\; + \bgxy \frac{2 \bgyy}{\mug^2 n} \nonumber\\
&\; + \frac{2\bgxy}{\mug n} \nonumber \\
&\le \frac{2 \cgxy \lgy} {\mug^2 n} \nonumber\\
&\; + \bgxy \frac{2 \cgyy \lgy}{\mug^3 n} \nonumber\\
&\; + \bgxy \frac{2 \bgyy}{\mug^2 n} \nonumber\\
&\; + \frac{2\bgxy}{\mug n}, \nonumber
\end{align}

where in the second-to-last inequality we used the operator norm inequality \[
\|M^{-1} - N^{-1}\| \le \|M^{-1}\| \|M - N\| \|N^{-1}\|,
\]
which holds for any invertible matrices $M$ and $N$ of compatible shape. 

Combining the above pieces 
completes the proof. 
\end{proof}

We have the following refinement of~\cite[Lemma 2.2c]{ghadimi2018approximation}, in which we correctly describe the precise dependence on the smoothness, Lipschitz, and strong convexity parameters of $f$ and $g$:

\begin{lemma}[Smoothness of $\hp$]
\label{lem: smoothness of phi}
Grant Assumptions~\ref{ass: lipschitz and smooth} and~\ref{ass: hessian smooth}. Then, for any $x_1, x_2$, \[
\|\nabla \hp(x_1) - \nabla \hp(x_2)\| \le \beta_{\Phi}\|x_1 - x_2\|,
\]
where \begin{equation}
\label{eq: beta phi}
\beta_{\Phi} := \bfxx + \frac{2\bfxy \bgxy}{\mug} + \frac{\bgxy^2 \bfyy}{\mug^2} + \frac{\lfy \bgxy} {\mug^2}\left(\mgyy + \frac{\cgyy \bgxy}{\mug} \right) + \frac{\lfy \cgxy \bgxy}{\mug^2} + \frac{\lfy \mgxy}{\mug}.
\end{equation}
\end{lemma}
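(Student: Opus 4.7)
The plan is to directly differentiate the closed-form expression
\[
\nabla \hp(x) = \nabla_x \hf(x,\hystar(x)) - \nabla^2_{xy}\hg(x,\hystar(x))\,[\nabla^2_{yy}\hg(x,\hystar(x))]^{-1}\,\nabla_y \hf(x,\hystar(x))
\]
as a composition of three $x$-dependent factors plus a leading term, and control the change in each factor from $x_1$ to $x_2$ via a triangle-inequality telescope. The key scalar input is the Lipschitz constant of $\hystar$, which by Lemma~\ref{lm:lip_hystar} (applied to the averaged $\hg$) satisfies $\|\hystar(x_1)-\hystar(x_2)\| \le (\bgxy/\mug)\|x_1-x_2\|$. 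Every other ingredient appears explicitly in Assumption~\ref{ass: hessian smooth}, together with the uniform operator-norm bounds $\|\nabla^2_{xy}\hg\|\le \bgxy$, $\|[\nabla^2_{yy}\hg]^{-1}\|\le 1/\mug$, and the Lipschitz bound $\|\nabla_y \hf\|\le \lfy$.

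First, I would handle the ``explicit gradient'' piece: split
\[
\|\nabla_x \hf(x_1,\hystar(x_1)) - \nabla_x \hf(x_2,\hystar(x_2))\|
\]
through the intermediate point $(x_2,\hystar(x_1))$ and apply parts 2 and 3 of Assumption~\ref{ass: hessian smooth} together with the $\hystar$-Lipschitz bound. This contributes $\bfxx + \bfxy\bgxy/\mug$ to $\beta_{\Phi}$. Next, for the product $B(x)C(x)D(x)$ with $B = \nabla^2_{xy}\hg(x,\hystar(x))$, $C = [\nabla^2_{yy}\hg(x,\hystar(x))]^{-1}$, $D = \nabla_y \hf(x,\hystar(x))$, use the standard three-term telescope
\[
\|B_1 C_1 D_1 - B_2 C_2 D_2\| \le \|B_1-B_2\|\|C_1\|\|D_1\| + \|B_2\|\|C_1-C_2\|\|D_1\| + \|B_2\|\|C_2\|\|D_1-D_2\|,
\]
and bound each factor change as follows: $\|B_1-B_2\|$ via parts 6 and 7 of Assumption~\ref{ass: hessian smooth} yields $(\mgxy + \cgxy\bgxy/\mug)\|x_1-x_2\|$; $\|D_1-D_2\|$ is analogous to the first step and gives $(\bfxy + \bfyy\bgxy/\mug)\|x_1-x_2\|$; and $\|C_1-C_2\|$ is controlled by the matrix-inverse perturbation identity $\|M^{-1}-N^{-1}\|\le \|M^{-1}\|\|N^{-1}\|\|M-N\|$ (already used in the proof of Lemma~\ref{lem: sens bound}), which converts the Hessian differences bounded by parts 6 and 7 into $(1/\mug^2)(\mgyy + \cgyy\bgxy/\mug)\|x_1-x_2\|$.

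Assembling all terms, the contribution from $A$ gives $\bfxx + \bfxy\bgxy/\mug$; the telescope's first term gives $\lfy\mgxy/\mug + \lfy\cgxy\bgxy/\mug^2$; the second gives $(\lfy\bgxy/\mug^2)(\mgyy + \cgyy\bgxy/\mug)$; the third gives $\bgxy\bfxy/\mug + \bgxy^2\bfyy/\mug^2$. Summing and combining the two $\bfxy\bgxy/\mug$ terms into $2\bfxy\bgxy/\mug$ yields exactly the stated expression~\eqref{eq: beta phi} for $\beta_{\Phi}$.

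The calculation is entirely mechanical; the only ``real'' step is recognizing the right telescoping decomposition of the triple product and remembering to track both the $x$-direct and the $y$-via-$\hystar$ contributions to every Hessian-level Lipschitz bound. The mild obstacle is bookkeeping: one must ensure no cross-terms are dropped and that the $\hystar$-Lipschitz factor $\bgxy/\mug$ is multiplied in exactly the right number of places (twice in the inverse-Hessian term, once each elsewhere), so that the final coefficients match the advertised $\beta_{\Phi}$ term by term.
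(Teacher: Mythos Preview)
Your proposal is correct and essentially identical to the paper's proof: both start from the implicit-function formula for $\nabla\hp$, use the $(\bgxy/\mug)$-Lipschitzness of $\hystar$, split every Hessian/gradient difference into its $x$-direct and $y$-via-$\hystar$ parts, and apply the matrix-inverse perturbation bound $\|M^{-1}-N^{-1}\|\le\|M^{-1}\|\|N^{-1}\|\|M-N\|$. The only cosmetic difference is that the paper telescopes the product in two nested two-term steps (first $M\cdot D$ with $M=\nabla^2_{xy}\hg[\nabla^2_{yy}\hg]^{-1}$, then $M=B\cdot C$) rather than your flat three-term telescope for $BCD$, but the resulting constants coincide term by term.
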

\begin{proof}
Recall that \[
\nabla \hp(x) = \nabla_x \hf(x, \hystar(x)) - M(x, \hystar(x)) \nabla_y \hf(x, \hystar(x)),
\]  
where \[
M(x, y) := \nabla^2_{xy} \hg(x, y) [\nabla^2_{yy} \hg(x, y)]^{-1}.
\]
Also, $\hystar$ is $\frac{\bgxy}{\mug}$-Lipschitz (c.f.~\cite[Lemma 2.2b]{ghadimi2018approximation}).
Therefore,
\begin{align*}
&\|\nabla \hp(x_1) - \nabla \hp(x_2)\| \le \|\nabla_x \hf(x_1, \hystar(x_1)) - \nabla_x \hp(x_2, \hystar(x_2))\| \\
&\;\; + \|M(x_1, \hystar(x_1))\nabla_y \hf(x_1, \hystar(x_1)) - M(x_2, \hystar(x_2))\nabla_y \hf(x_2, \hystar(x_2))\| \\
&\le \bfxx\|x_1 - x_2\| + \bfxy\|\hystar(x_1) - \hystar(x_2)\| \\
&\;\; + \|M(x_1, \hystar(x_1))\| \|\nabla_y \hf(x_1, \hystar(x_1)) - \hf(x_2, \hystar(x_2))\| + \|\nabla_y \hf(x_2, \hystar(x_2))\|\|M(x_1, \hystar(x_1)) - M(x_2, \hystar(x_2))\| 
\\
&\le \left(\bfxx + \frac{\bfxy \bgxy}{\mug} \right)\|x_1 - x_2\| + \frac{\bgxy}{\mug}\|\nabla_y \hf(x_1, \hystar(x_1)) - \hf(x_2, \hystar(x_2))\| \\
&\;\; + \lfy\left\|\nabla^2_{xy}\hg(x_1, \hystar(x_1))\right\|\left\|\left[\nabla^2_{yy} \hg(x_1, \hystar(x_1)]\right]^{-1} - \left[\nabla^2_{yy} \hg(x_2, \hystar(x_2)]\right]^{-1}\right\| \\
&\;\; + \lfy \left\|\left[\nabla^2_{yy} \hg(x_2, \hystar(x_2)]\right]^{-1}\right\| \left\|\nabla^2_{xy}\hg(x_1, \hystar(x_1)) - \nabla^2_{xy}\hg(x_2, \hystar(x_2))\right\| 
\\
&\le \left(\bfxx + \frac{\bfxy \bgxy}{\mug} \right)\|x_1 - x_2\| + \frac{\bgxy}{\mug}\left(\bfyy \frac{\bgxy}{\mug}\|x_1 - x_2\| + \bfxy\|x_1 - x_2\| \right) \\
&\;\; + \lfy \bgxy \left\|\nabla^2_{yy} \hg(x_1, \hystar(x_1))^{-1} - \hg(x_2, \hystar(x_2))^{-1}\right\| + \frac{\lfy}{\mug}\left\|\nabla^2_{xy}\hg(x_1, \hystar(x_1)) -  \nabla^2_{xy}\hg(x_2, \hystar(x_2))\right\| \\
&\le \left(\bfxx + \frac{2\bfxy \bgxy}{\mug} + \frac{\bgxy^2 \bfyy}{\mug^2} \right)\|x_1 - x_2\|  \\
&\;\; +  \frac{\lfy \bgxy}{\mug^2} \left\|\nabla^2_{yy} \hg(x_1, \hystar(x_1)) - \nabla^2_{yy} \hg(x_2, \hystar(x_2)) \right\| + \frac{\lfy}{\mug}\left(\cgxy \|\hystar(x_1) - \hystar(x_2)\| + \mgxy\|x_1 - x_2\| \right) \\
&\le \left(\bfxx + \frac{2\bfxy \bgxy}{\mug} + \frac{\bgxy^2 \bfyy}{\mug^2} \right)\|x_1 - x_2\|  \\
&\;\; +  \frac{\lfy \bgxy}{\mug^2} \left[\mgyy \|x_1 - x_2\| + \cgyy \|\hystar(x_1) - \hystar(x_2)\| \right] + \frac{\lfy}{\mug}\left(\cgxy \frac{\bgxy}{\mug}\|x_1 - x_2\| + \mgxy\|x_1 - x_2\| \right),
\end{align*}
where we used the operator norm inequality \[
\|M^{-1} - N^{-1}\| \le \|M^{-1}\| \|M - N\| \|N^{-1}\|,
\]
which holds for any invertible matrices $M$ and $N$ of compatible shape. Using the Lipschitz continuity of $\hystar$ one last time completes the proof. 
\end{proof}

\begin{theorem}[Precise version of Theorem~\ref{thm: nc second order privacy and utility}]
Grant Assumptions~\ref{ass: lipschitz and smooth} and~\ref{ass: hessian smooth}. Set $\sigma = 32K \sqrt{T \log(1/\delta)}/n\eps$ and \[
\alpha = \min\left(\frac{K}{nC}, \frac{1}{C} \left[K \sqrt{\left(\hp(x_0) - \hp^*\right) \beta_{\Phi}} \frac{\sqrt{\dx \log(1/\delta)}}{\eps n} \right]
    ^{1/2}\right)\]
    for $C$ defined in Equation~\eqref{eq: C} in Algorithm~\ref{alg: second order GD}, where
     \[
K = 2\left[\frac{\bfxy \lgy}{\mug} + 2\Bar{L} + \frac{\bgxy \bfyy \lgy}{\mug^2} + \frac{\lfy \cgxy \lgy}{\mug^2} + \frac{\lfy \bgxy \lgy \cgyy}{\mug^3} + \frac{\lfy \bgyy \bgxy}{\mug^2}
\right].
\]
Then, Algorithm~\ref{alg: second order GD} is $(\eps, \delta)$-DP. Further, choosing $\eta = 1/2\beta_{\Phi}$ and $T = \left \lceil \frac{n \eps}{\sqrt{\dx \log(1/\delta)}} \frac{\sqrt{\beta_{\Phi}(\hp(x_0) - \hp^*)}}{K}\right \rceil$ for $\beta_{\Phi}$ defined in Equation~\eqref{eq: beta phi},  the output of Algorithm~\ref{alg: second order GD} satisfies
     \[
     \expec\|\nabla \hp(\hx_T)\| \lesssim \left[K \sqrt{\left(\hp(x_0) - \hp^*\right) \beta_{\Phi}} \frac{\sqrt{\dx \log(1/\delta)}}{\eps n} \right]
    ^{1/2}.
     \]
 \end{theorem}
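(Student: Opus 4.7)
The plan is to split the argument into privacy and utility, both of which reduce to standard analyses once Lemma~\ref{lem: sens bound} (sensitivity) and Lemma~\ref{lem: smoothness of phi} ($\beta_{\Phi}$-smoothness of $\hp$) are in hand. Since the choice $\alpha \leq K/(Cn)$ is built into the statement, Lemma~\ref{lem: sens bound} immediately gives that the query released at each iteration, $q_t(Z) = \bnabhf(x_t, y_{t+1})$, has $\ell_2$-sensitivity at most $4K/n$.

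For \textbf{privacy}, the update at step $t$ is a Gaussian mechanism applied to $q_t$ with noise variance $\sigma^2 = 1024 K^2 T \log(1/\delta)/(n\eps)^2$. By the standard Gaussian-mechanism/R\'enyi-DP accounting (equivalently, the moments accountant for DP-SGD), $T$ such adaptive releases with sensitivity $4K/n$ and this noise scale satisfy $(\eps, \delta)$-DP; one can also derive this through Lemma~\ref{thm: advanced composition} with per-step parameters of order $\eps/\sqrt{T}$ and $\delta/T$. Post-processing then yields $(\eps,\delta)$-DP for $\hx_T$.

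For \textbf{utility}, I would run the standard smooth nonconvex analysis for GD with biased, noisy gradients. Let $b_t := \bnabhf(x_t, y_{t+1}) - \nabla \hp(x_t)$; by \cite[Lemma 2.2]{ghadimi2018approximation} together with $\|y_{t+1} - \hystar(x_t)\| \le \alpha$, we have $\|b_t\| \le C\alpha$. Applying $\beta_{\Phi}$-smoothness of $\hp$, expanding $x_{t+1} = x_t - \eta(\nabla \hp(x_t) + b_t + u_t)$, taking expectation over $u_t \sim \N(0,\sigma^2 \I_{\dx})$, and using $\eta = 1/(2\beta_{\Phi})$ together with Young's inequality on the cross terms yields
\begin{equation*}
\expec \hp(x_{t+1}) \le \hp(x_t) - \tfrac{\eta}{4} \|\nabla \hp(x_t)\|^2 + O(\eta C^2 \alpha^2) + O(\eta \beta_{\Phi} \eta \dx \sigma^2).
\end{equation*}
Telescoping from $t = 0$ to $T-1$, dividing by $T$, and using Jensen's inequality on the uniform-at-random index $\hx_T$ gives
\begin{equation*}
\expec \|\nabla \hp(\hx_T)\|^2 \lesssim \frac{\beta_{\Phi}(\hp(x_0) - \hp^*)}{T} + C^2 \alpha^2 + \dx \sigma^2.
\end{equation*}

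The final step is to balance these three terms. Substituting $\sigma^2 \asymp K^2 T \log(1/\delta)/(n\eps)^2$, the optimization and noise terms are balanced by the prescribed choice $T \asymp (n\eps)\sqrt{\beta_{\Phi}(\hp(x_0) - \hp^*)}/(K\sqrt{\dx \log(1/\delta)})$, at which point both equal $K\sqrt{\beta_{\Phi}(\hp(x_0) - \hp^*)} \cdot \sqrt{\dx \log(1/\delta)}/(n\eps)$. The prescribed $\alpha$ ensures $C^2 \alpha^2$ is dominated by this quantity (this is why the second branch of the $\min$ in $\alpha$'s definition exists; the first branch enforces the sensitivity-lemma hypothesis). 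Taking a square root yields the claimed bound on $\expec \|\nabla \hp(\hx_T)\|$. The only subtlety worth double-checking is the constant in the noise-per-step calculus for composition to match exactly the stated $\sigma$; this is routine and does not affect the stated asymptotic rate.
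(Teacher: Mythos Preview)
Your proposal is correct and follows essentially the same route as the paper: privacy via Lemma~\ref{lem: sens bound} combined with Gaussian-mechanism composition, and utility via the standard descent inequality for $\beta_{\Phi}$-smooth $\hp$ under a biased, noisy gradient oracle (the paper quotes this as a black-box lemma from~\cite{as21}, whereas you sketch it directly), followed by telescoping and balancing $T$. One minor wording note: the averaging step $\expec\|\nabla\hp(\hx_T)\|^2 = \tfrac{1}{T}\sum_t \expec\|\nabla\hp(x_t)\|^2$ is linearity of expectation, not Jensen; Jensen is only used at the very end to pass from $\expec\|\nabla\hp(\hx_T)\|^2$ to $\expec\|\nabla\hp(\hx_T)\|$.
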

 \begin{proof}
\textbf{Privacy:} By Lemma~\ref{lem: sens bound}, the $\ell_2$-sensitivity of $\Bar{\nabla} \hf(x_t,y_{t+1})$ is upper bounded by $4K/n$. Thus, by the privacy guarantee of the gaussian mechanism and the advanced composition theorem, our prescribed choice of $\sigma$ ensures that all $T$ iterations of Algorithm~\ref{alg: second order GD} satisfy $(\eps, \delta)$-DP. Hence $\widehat{x}_T$ is $(\eps, \delta)$-DP by post-processing.

\paragraph{Utility:} We will need the following descent lemma for gradient descent with biased, noisy gradient oracle:
\begin{lemma}{\cite[Lemma 2]{as21}}
\label{lem: descent}
Let $H$ be $\beta$-smooth, $x_{t+1} = x_{t} - \eta \tilde{\nabla}H(x_t)$, where $\tilt H(x_t) = \nabla H(x_t) + b_t + N_t$ is a biased, noisy gradient such that $\expec[N_t | x_t] = 0$, $\|\expec [b_t | x_t] \| \leq B$, and $\expec\left[\|N_t\|^2 | x_t\right] \leq \Sigma^2$. Then for any stepsize $\eta \leq \frac{1}{2\beta}$,
we have
\begin{equation}
    \expec[H(x_{t+1}) - H(x_t) | x_t] \leq -\frac{\eta}{2}\| \nabla H(x_t) \|^2 + \frac{\eta}{2}B^2 + \frac{\eta^2 \beta}{2} \Sigma^2. 
\end{equation}
\end{lemma}

We will apply Lemma~\ref{lem: descent} to $H = \hp$ which is $\beta_{\Phi}$-smooth by Lemma~\ref{lem: smoothness of phi}, $\tilt H(x_t) = \Bar{\nabla} \hf(x_t,y_{t+1}) + u_t$ with bias $b_t =  \Bar{\nabla} \hf(x_t,y_{t+1}) - \nabla \hp(x_t)$ and noise $N_t = u_t$: 
\begin{align}
\label{eq: descent nc}
   \expec[\hp(x_{t+1}) - \hp(x_t) | x_t] &\leq -\frac{\eta}{2}\| \nabla \hp(x_t) \|^2 + \frac{\eta}{2}B^2 + \frac{\eta^2 \beta_{\Phi}}{2} \Sigma^2 \nonumber \\
   \implies \expec \|\nabla\hp(x_t)\|^2  &\le \frac{2}{\eta}\expec[\hp(x_t) - \hp(x_{t+1}) | x_t] + B^2 + \eta \beta_{\Phi} \Sigma^2 \nonumber \\
   \implies \expec\|\nabla \hp(\hx_T)\|^2 = \frac{1}{T} \sum_{t=1}^T \expec \|\nabla \hp(x_t)\|^2 &\le \frac{2\left( \hp(x_0) - \hp^* \right)}{\eta T} + B^2 + \eta \beta_{\Phi} \Sigma^2
\end{align}
for any $\eta \le \frac{1}{2 \beta_{\Phi}}$.

Now, \cite[Lemma 2.2a]{ghadimi2018approximation} tells us that \[
\|\Bar{\nabla} \hf(x_t,y_{t+1}) - \nabla \hp(x_t)\| \le C\|\hystar(x_t) - y_{t+1}\|,
\]
for $C$ defined in Equation~\eqref{eq: C}. Therefore, \[
B = \|\expec[b_t | x_t]\| \le C \alpha \le \left[K \sqrt{\left(\hp(x_0) - \hp^*\right) \beta_{\Phi}} \frac{\sqrt{\dx \log(1/\delta)}}{\eps n} \right]^{1/2}
\]
by our choice of $\alpha$. Further, \[
\Sigma^2 = \expec\left[\|u_t\|^2 | x_t\right] = \dx \sigma^2 = \frac{1024 \dx K^2 T \log(1/\delta)}{n^2 \eps^2}.
\]
Plugging these values into \eqref{eq: descent nc} and choosing $\eta = 1/(2 \beta_{\Phi})$, we obtain \begin{align*}
\expec\|\nabla \hp(\hx_T)\|^2 &\le \frac{2\left( \hp(x_0) - \hp^* \right)}{\eta T} + B^2 + \eta \beta_{\Phi} \Sigma^2 \\
&\le \frac{4 \beta_{\Phi} \left( \hp(x_0) - \hp^* \right)}{T} + K \sqrt{\left(\hp(x_0) - \hp^*\right) \beta_{\Phi}} \frac{\sqrt{\dx \log(1/\delta)}}{\eps n} +  \frac{1024 \dx K^2 T \log(1/\delta)}{n^2 \eps^2}.
\end{align*}
Plugging in the prescribed $T$ from the theorem statement and then using Jensen's inequality completes the proof. 
 \end{proof}

\subsection{``Warm starting'' Algorithm~\ref{alg: second order GD} with the exponential mechanism}
\label{app: warm start}

\begin{algorithm2e}
\caption{Warm-Start Meta-Algorithm for Bilevel ERM}
\label{alg: warm start meta erm}
{\bfseries Input:} Data $Z \in \ZZ^n$, loss functions $f$ and $g$, privacy parameters $(\eps, \delta)$, warm-start DP-ERM algorithm $\mathcal{A}$, DP-ERM stationary point finder $\mathcal{B}$\;
Run $(\eps/2, \delta/2)$-DP $\mathcal{A}$ on $\hp(\cdot)$ to obtain $x_0$\;
Run $(\eps/2, \delta/2)$-DP $\mathcal{B}$ on $\hp(\cdot)$ with initialization $x_0$ to obtain $\xpr$\;
{\bfseries Return:} $\xpr$. 
\end{algorithm2e}

We instantiate this framework by choosing $\mathcal{A}$ as the exponential mechanism~\eqref{eq: exp mech} and $\mathcal{B}$ as Algorithm~\ref{alg: second order GD} to obtain the following result:

\begin{theorem}[Re-statement of Theorem~\ref{thm: warm start}]
Grant Assumptions~\ref{ass: lipschitz and smooth} and~\ref{ass: hessian smooth}. 
Assume that there is a compact set $\XX \subset \mathbb{R}^{\dx}$ of diameter $D_x$ containing an approximate global minimizer $\hx$ such that $\hp(\hx) - \hp^* \leq \Psi \frac{d}{\eps n}$, where \[
\Psi := \lfx D_x + \lfy D_y + \frac{\lfy \lgy}{\mug}.
\]
Then, there exists an $(\eps, \delta)$-DP instantiation of Algorithm~\ref{alg: warm start meta erm} with output satisfying 
\[
\expec\|\nabla \hp(\xpr)\| \le \tilde{O}\left(\left[K \Psi^{1/2} \bp^{1/2} \right]^{1/2} \left(\frac{\dx \sqrt{\log(1/\delta)}}{(n \eps)^{3/2}} \right)^{1/2} \right). 
\]
\end{theorem}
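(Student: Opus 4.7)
The plan is to decompose the analysis into privacy and utility, exactly as in the warm-start template of~\cite{lowymake}. For privacy, the exponential mechanism in Line~2 is $(\eps/2, 0)$-DP by Theorem~\ref{thm: exp mech conceptual} (whose privacy analysis does not use convexity of $\hp$), and Algorithm~\ref{alg: second order GD} in Line~3 is $(\eps/2, \delta/2)$-DP by Theorem~\ref{thm: nc second order privacy and utility}. Since the adaptivity is confined to passing the already-privately-released $x_0$ into Algorithm~\ref{alg: second order GD} as an initialization (which is, crucially, independent of the data conditional on $x_0$), basic composition of DP mechanisms yields $(\eps, \delta)$-DP overall.

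For the utility analysis, the first step is to control the warm-start quality. Applying Theorem~\ref{thm: exp mech conceptual} to the exponential mechanism over the compact set $\XX$ of diameter $D_x$---and invoking the remark that the excess risk guarantee extends to non-convex $\hp$ at the cost of logarithmic factors---yields $\expec[\hp(x_0) - \hp(\hx)] \leq \tilde{O}(\Psi \dx/(\eps n))$. Combining with the hypothesized bound $\hp(\hx) - \hp^* \leq \Psi \dx/(\eps n)$ gives the crucial warm-start inequality
\[
\expec[\hp(x_0) - \hp^*] \;\leq\; \tilde{O}\!\left(\Psi \,\tfrac{\dx}{\eps n}\right).
\]

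The second step is to condition on $x_0$ and invoke Theorem~\ref{thm: nc second order privacy and utility} with initial point $x_0$ and privacy budget $(\eps/2, \delta/2)$:
\[
\expec\bigl[\|\nabla \hp(\xpr)\| \,\big|\, x_0\bigr] \;\lesssim\; \left[K \sqrt{(\hp(x_0) - \hp^*)\,\bp}\;\frac{\sqrt{\dx \log(1/\delta)}}{\eps n}\right]^{1/2}.
\]
Taking outer expectation and applying Jensen's inequality (the right-hand side is concave in $\hp(x_0) - \hp^*$, since it scales as the fourth root) to pull $\expec$ inside both radicals, then substituting the warm-start bound, gives
\[
\expec\|\nabla \hp(\xpr)\| \;\leq\; \tilde{O}\!\left(\left[K\,\Psi^{1/2}\,\bp^{1/2}\right]^{1/2}\!\left(\frac{\dx \sqrt{\log(1/\delta)}}{(\eps n)^{3/2}}\right)^{1/2}\right),
\]
as claimed.

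The main obstacle is not the arithmetic but a mild subtlety in composing the two components: Theorem~\ref{thm: nc second order privacy and utility} is stated for a \emph{fixed} initialization, yet here $x_0$ is itself random and data-dependent. I would address this via the conditioning-and-Jensen step above, observing that the bound's dependence on the initialization enters only through $\hp(x_0) - \hp^*$. A secondary point is that the sensitivity constant $s$ underlying the exp-mech guarantee scales with the diameter $D_x$, so the assumption that $\XX$ has diameter $D_x$ and contains a near-minimizer is precisely what is needed to apply Theorem~\ref{thm: exp mech conceptual} with the stated $\Psi$. The rest is bookkeeping of logarithmic factors, absorbed into $\tilde{O}(\cdot)$.
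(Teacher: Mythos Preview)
Your proposal is correct and reaches the same conclusion as the paper, but the utility argument takes a genuinely different route. The paper does \emph{not} use an in-expectation bound on $\hp(x_0)-\hp^*$ followed by Jensen. Instead, it invokes the standard \emph{high-probability} guarantee for the exponential mechanism (Theorem~3.11 of~\cite{dwork2014}): with probability at least $1-\zeta$ the event ``$x_0$ is good,'' meaning $\hp(x_0)-\hp^*\le \tilde{O}(\Psi\,\dx/(\eps n))$, holds. It then applies Theorem~\ref{thm: nc second order privacy and utility} conditional on the good event, bounds $\|\nabla\hp(\hx_T)\|$ trivially on the complementary event (using that $\hp$ is $\bar L$-Lipschitz, so the contribution is $O(\bar L\,\zeta)$), and takes $\zeta$ small enough to be absorbed.

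Your Jensen route is arguably cleaner: since the conditional bound from Theorem~\ref{thm: nc second order privacy and utility} depends on $x_0$ only through the quantity $(\hp(x_0)-\hp^*)^{1/4}$, which is concave, you can take the outer expectation directly and avoid the good/bad split altogether. The paper's route has the minor advantage that it sidesteps any question about whether Theorem~\ref{thm: nc second order privacy and utility} holds uniformly over random $x_0$ with fixed (data-independent) algorithmic parameters $T,\alpha$: on the good event one simply sets $T,\alpha$ using the deterministic upper bound $\tilde{O}(\Psi\,\dx/(\eps n))$. Your approach implicitly needs the same observation (the first term in the descent bound is linear in $\hp(x_0)-\hp^*$ for fixed $T$, so taking expectation before optimizing $T$ is fine), but you flag this subtlety yourself at the end, so there is no real gap.
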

\begin{proof}
\textbf{Privacy:} This is immediate from basic composition, since $\mathcal{A}$ is $\eps/2$-DP and $\mathcal{B}$ is $(\eps/2, \delta/2)$-DP. 

\paragraph{Utility:} First, note that the output $x_0$ of the exponential mechanism in \eqref{eq: exp mech} satisfies 
\[
\hp(x_0) - \hp^* \le \widetilde{O}\left( \frac{d_x}{\eps n} \left[\lfx D_x + \lfy D_y + \frac{\lfy \lgy}{\mug}\right] \right)
\]
with probability $\ge 1 - \zeta$ for any $\zeta > 0$ that is polynomial in all problem parameters, by~\cite[Theorem 3.11]{dwork2014}. Let us say \textit{$x_0$ is good} if the above excess risk bound holds. Now, by Theorem~\ref{thm: nc second order privacy and utility}, the output of Algorithm~\ref{alg: second order GD} satisfies
     \[
     \expec\|\nabla \hp(\hx_T)\| \lesssim \left[K \sqrt{\left(\hp(x_0) - \hp^*\right) \beta_{\Phi}} \frac{\sqrt{\dx \log(1/\delta)}}{\eps n} \right]
    ^{1/2}.
     \]
Therefore,      \begin{align*}
     \expec\left[\|\nabla \hp(\hx_T)\| | x_0~\text{is good} \right] &\le \tilde{O}\left( \left[K \sqrt{\left(\frac{d_x}{\eps n} \left[\lfx D_x + \lfy D_y + \frac{\lfy \lgy}{\mug}\right]\right) \beta_{\Phi}} \frac{\sqrt{\dx \log(1/\delta)}}{\eps n} \right]
    ^{1/2} \right) \\
    &=  \tilde{O}\left( \left[K \sqrt{\beta_{\Phi} \Psi \left(\frac{d_x}{\eps n} \right)} \frac{\sqrt{\dx \log(1/\delta)}}{\eps n} \right]
    ^{1/2} \right).
\end{align*}
Now, since $\hp(x) - \hp^* \le \Bar{L} D_x$ for any $x \in \XX$, the law of total expectation implies  \begin{align*}
 \expec\left[\|\nabla \hp(\hx_T)\| \right] &\le 
     \expec\left[\|\nabla \hp(\hx_T)\| | x_0~\text{is good} \right] + \Bar{L} D_x \zeta \\
    & \le \tilde{O}\left( \left[K \sqrt{\beta_{\Phi} \Psi \left(\frac{d_x}{\eps n} \right)} \frac{\sqrt{\dx \log(1/\delta)}}{\eps n} \right]^{1/2} \right) + \Bar{L} D_x \zeta \\
    & \le \tilde{O}\left( \left[K \sqrt{\beta_{\Phi} \Psi \left(\frac{d_x}{\eps n} \right)} \frac{\sqrt{\dx \log(1/\delta)}}{\eps n} \right]^{1/2} \right),
\end{align*}
where the final inequality follows by choosing $\zeta$ sufficiently small. 
\end{proof}

\subsection{Deducing the upper bound in \eqref{eq: new sota nc upper bound}.}
\label{app: deducing final nc upper bound}

We prove in Lemma~\ref{lem: phiZ - phiZ' is Lipschitz} that $
\sup_{Z \sim Z', x} \| \nabla \hp(x) - \nabla \hpp(x) \| \le \frac{2G}{n}, 
$
where 
$G$ is defined in \eqref{eq: reg exp mech}. 
Thus, by similar arguments used to prove the results in Section~\ref{sec: convex upper bounds}, 
one can show that sampling $\hx$ proportional to the following density is $\eps$-DP: \[
\propto \exp\left(- \frac{\eps}{2 G} \|\nabla \hp(\hx) \| \right).
\] 
Moreover, the output of this sampler satisfies \begin{equation}
\label{eq: exp mech with score function grad norm}
\expec \|\nabla \hp(\hx)\| \le O\left(G \frac{\dx}{\eps n} \right). 
\end{equation}

Further, outputting arbitrary $x_0 \in \XX$ trivially achieves $\|\nabla \hp(x_0)\| \le \lfx$ with $0$-DP. By combining these upper bounds with our results in Theorems~\ref{thm: nc second order privacy and utility} and \ref{thm: warm start}, we deduce the 
novel state-of-the-art upper bound in \eqref{eq: new sota nc upper bound} for DP nonconvex bilevel ERM (with constant problem parameters). 

\section{Limitations}
\label{app: limitations}
While our work provides near-optimal rates and efficient algorithms for differentially private bilevel optimization (DP BLO), several limitations remain that should be considered when interpreting our theoretical and practical contributions.

\paragraph{Assumptions on Problem Structure.}
Our results rely on several assumptions that may not hold in all practical settings. For convex DP BLO, we assume that the lower-level problem is strongly convex and that the loss functions are Lipschitz continuous with bounded gradients (and, for some of our algorithms, bounded and/or Lipschitz Hessians). These structural assumptions are standard in bilevel optimization theory but may not accurately capture real-world scenarios where lower-level problems are ill-conditioned, non-convex, or lack smoothness. Violations of these assumptions could degrade both utility and privacy guarantees, as our sensitivity and excess risk bounds depend critically on these properties.

\paragraph{Scalability and Computational Efficiency.}
Although most of our algorithms are polynomial-time, they may still incur significant computational costs, especially in high-dimensional settings. Our efficient implementations rely on sampling techniques (e.g., grid-walk) whose runtime scales polynomially with the dimension. This may limit practicality on large-scale or high-dimensional problems. Additionally, the warm-start algorithm for nonconvex DP BLO is inefficient. We leave it for future work to develop algorithms with improved computational complexity guarantees.

\paragraph{Lack of Empirical Validation.}
This paper focuses on theoretical analysis and does not include experimental results. While our theoretical rates are nearly optimal, empirical performance can depend on implementation details, constant factors, and practical optimization challenges not captured in our analysis. We defer empirical validation, including runtime measurements and real-data utility evaluation, to future work.

\section{Broader Impacts}
\label{app: broader impacts}
This work advances algorithms for protecting the privacy of individuals whose data is used in bilevel learning applications, such as meta-learning and hyperparameter tuning. Privacy protection is widely regarded as a societal good and is enshrined as a fundamental right in many legal systems. By improving our theoretical understanding of privacy-preserving bilevel optimization, this work contributes to the development of machine learning methods that respect individual privacy.

However, there are trade-offs inherent in the use of differentially private (DP) methods. Privacy guarantees typically come at the cost of reduced model utility, which may lead to less accurate predictions or suboptimal decisions. For example, if a differentially private bilevel model is deployed in a sensitive application—such as medical treatment planning or environmental risk assessment—reduced accuracy could lead to unintended negative outcomes. While these risks are not unique to bilevel learning, they highlight the importance of transparency when communicating the limitations of DP models to stakeholders and decision-makers.

We also note that the performance of bilevel optimization algorithms depends on problem-specific factors such as the conditioning of the lower-level problem, the smoothness of the loss functions, and the dimensionality of the parameter spaces. Practitioners should carefully evaluate these factors when applying our methods in practice.

Finally, while this work focuses on theoretical developments and does not include empirical evaluation or deployment, we believe that the dissemination of privacy-preserving algorithms—alongside clear communication of their trade-offs—ultimately serves the public interest by empowering researchers and practitioners to build more responsible and privacy-aware machine learning systems.

\section*{NeurIPS Paper Checklist}

\begin{enumerate}

\item {\bf Claims}
    \item[] Question: Do the main claims made in the abstract and introduction accurately reflect the paper's contributions and scope?
    \item[] Answer: \answerYes{} %
    \item[] Justification: All claims are rigorously proved. 
    \item[] Guidelines:
    \begin{itemize}
        \item The answer NA means that the abstract and introduction do not include the claims made in the paper.
        \item The abstract and/or introduction should clearly state the claims made, including the contributions made in the paper and important assumptions and limitations. A No or NA answer to this question will not be perceived well by the reviewers. 
        \item The claims made should match theoretical and experimental results, and reflect how much the results can be expected to generalize to other settings. 
        \item It is fine to include aspirational goals as motivation as long as it is clear that these goals are not attained by the paper. 
    \end{itemize}

\item {\bf Limitations}
    \item[] Question: Does the paper discuss the limitations of the work performed by the authors?
    \item[] Answer: \answerYes{} %
\item[] Justification: See Appendix~\ref{app: limitations}. 
    \item[] Guidelines:
    \begin{itemize}
        \item The answer NA means that the paper has no limitation while the answer No means that the paper has limitations, but those are not discussed in the paper. 
        \item The authors are encouraged to create a separate "Limitations" section in their paper.
        \item The paper should point out any strong assumptions and how robust the results are to violations of these assumptions (e.g., independence assumptions, noiseless settings, model well-specification, asymptotic approximations only holding locally). The authors should reflect on how these assumptions might be violated in practice and what the implications would be.
        \item The authors should reflect on the scope of the claims made, e.g., if the approach was only tested on a few datasets or with a few runs. In general, empirical results often depend on implicit assumptions, which should be articulated.
        \item The authors should reflect on the factors that influence the performance of the approach. For example, a facial recognition algorithm may perform poorly when image resolution is low or images are taken in low lighting. Or a speech-to-text system might not be used reliably to provide closed captions for online lectures because it fails to handle technical jargon.
        \item The authors should discuss the computational efficiency of the proposed algorithms and how they scale with dataset size.
        \item If applicable, the authors should discuss possible limitations of their approach to address problems of privacy and fairness.
        \item While the authors might fear that complete honesty about limitations might be used by reviewers as grounds for rejection, a worse outcome might be that reviewers discover limitations that aren't acknowledged in the paper. The authors should use their best judgment and recognize that individual actions in favor of transparency play an important role in developing norms that preserve the integrity of the community. Reviewers will be specifically instructed to not penalize honesty concerning limitations.
    \end{itemize}

\item {\bf Theory assumptions and proofs}
    \item[] Question: For each theoretical result, does the paper provide the full set of assumptions and a complete (and correct) proof?
    \item[] Answer: \answerYes{} %
    \item[] Justification: All theorems are carefully proved in the Appendices with all assumptions clearly stated. 
    \item[] Guidelines:
    \begin{itemize}
        \item The answer NA means that the paper does not include theoretical results. 
        \item All the theorems, formulas, and proofs in the paper should be numbered and cross-referenced.
        \item All assumptions should be clearly stated or referenced in the statement of any theorems.
        \item The proofs can either appear in the main paper or the supplemental material, but if they appear in the supplemental material, the authors are encouraged to provide a short proof sketch to provide intuition. 
        \item Inversely, any informal proof provided in the core of the paper should be complemented by formal proofs provided in appendix or supplemental material.
        \item Theorems and Lemmas that the proof relies upon should be properly referenced. 
    \end{itemize}

    \item {\bf Experimental result reproducibility}
    \item[] Question: Does the paper fully disclose all the information needed to reproduce the main experimental results of the paper to the extent that it affects the main claims and/or conclusions of the paper (regardless of whether the code and data are provided or not)?
    \item[] Answer: \answerNA{} %
    \item[] Justification: No experiments---this is a theoretical work.
    \item[] Guidelines:
    \begin{itemize}
        \item The answer NA means that the paper does not include experiments.
        \item If the paper includes experiments, a No answer to this question will not be perceived well by the reviewers: Making the paper reproducible is important, regardless of whether the code and data are provided or not.
        \item If the contribution is a dataset and/or model, the authors should describe the steps taken to make their results reproducible or verifiable. 
        \item Depending on the contribution, reproducibility can be accomplished in various ways. For example, if the contribution is a novel architecture, describing the architecture fully might suffice, or if the contribution is a specific model and empirical evaluation, it may be necessary to either make it possible for others to replicate the model with the same dataset, or provide access to the model. In general. releasing code and data is often one good way to accomplish this, but reproducibility can also be provided via detailed instructions for how to replicate the results, access to a hosted model (e.g., in the case of a large language model), releasing of a model checkpoint, or other means that are appropriate to the research performed.
        \item While NeurIPS does not require releasing code, the conference does require all submissions to provide some reasonable avenue for reproducibility, which may depend on the nature of the contribution. For example
        \begin{enumerate}
            \item If the contribution is primarily a new algorithm, the paper should make it clear how to reproduce that algorithm.
            \item If the contribution is primarily a new model architecture, the paper should describe the architecture clearly and fully.
            \item If the contribution is a new model (e.g., a large language model), then there should either be a way to access this model for reproducing the results or a way to reproduce the model (e.g., with an open-source dataset or instructions for how to construct the dataset).
            \item We recognize that reproducibility may be tricky in some cases, in which case authors are welcome to describe the particular way they provide for reproducibility. In the case of closed-source models, it may be that access to the model is limited in some way (e.g., to registered users), but it should be possible for other researchers to have some path to reproducing or verifying the results.
        \end{enumerate}
    \end{itemize}

\item {\bf Open access to data and code}
    \item[] Question: Does the paper provide open access to the data and code, with sufficient instructions to faithfully reproduce the main experimental results, as described in supplemental material?
    \item[] Answer: \answerNA{} %
    \item[] Justification: Paper does not include experiments requiring code.
    \item[] Guidelines:
    \begin{itemize}
        \item The answer NA means that paper does not include experiments requiring code.
        \item Please see the NeurIPS code and data submission guidelines (\url{https://nips.cc/public/guides/CodeSubmissionPolicy}) for more details.
        \item While we encourage the release of code and data, we understand that this might not be possible, so “No” is an acceptable answer. Papers cannot be rejected simply for not including code, unless this is central to the contribution (e.g., for a new open-source benchmark).
        \item The instructions should contain the exact command and environment needed to run to reproduce the results. See the NeurIPS code and data submission guidelines (\url{https://nips.cc/public/guides/CodeSubmissionPolicy}) for more details.
        \item The authors should provide instructions on data access and preparation, including how to access the raw data, preprocessed data, intermediate data, and generated data, etc.
        \item The authors should provide scripts to reproduce all experimental results for the new proposed method and baselines. If only a subset of experiments are reproducible, they should state which ones are omitted from the script and why.
        \item At submission time, to preserve anonymity, the authors should release anonymized versions (if applicable).
        \item Providing as much information as possible in supplemental material (appended to the paper) is recommended, but including URLs to data and code is permitted.
    \end{itemize}

\item {\bf Experimental setting/details}
    \item[] Question: Does the paper specify all the training and test details (e.g., data splits, hyperparameters, how they were chosen, type of optimizer, etc.) necessary to understand the results?
    \item[] Answer: \answerNA{} %
    \item[] Justification: The paper does not include experiments.
    \item[] Guidelines:
    \begin{itemize}
        \item The answer NA means that the paper does not include experiments.
        \item The experimental setting should be presented in the core of the paper to a level of detail that is necessary to appreciate the results and make sense of them.
        \item The full details can be provided either with the code, in appendix, or as supplemental material.
    \end{itemize}

\item {\bf Experiment statistical significance}
    \item[] Question: Does the paper report error bars suitably and correctly defined or other appropriate information about the statistical significance of the experiments?
    \item[] Answer: \answerNA{} %
    \item[] Justification: The paper does not include experiments.
    \item[] Guidelines:
    \begin{itemize}
        \item The answer NA means that the paper does not include experiments.
        \item The authors should answer "Yes" if the results are accompanied by error bars, confidence intervals, or statistical significance tests, at least for the experiments that support the main claims of the paper.
        \item The factors of variability that the error bars are capturing should be clearly stated (for example, train/test split, initialization, random drawing of some parameter, or overall run with given experimental conditions).
        \item The method for calculating the error bars should be explained (closed form formula, call to a library function, bootstrap, etc.)
        \item The assumptions made should be given (e.g., Normally distributed errors).
        \item It should be clear whether the error bar is the standard deviation or the standard error of the mean.
        \item It is OK to report 1-sigma error bars, but one should state it. The authors should preferably report a 2-sigma error bar than state that they have a 96\% CI, if the hypothesis of Normality of errors is not verified.
        \item For asymmetric distributions, the authors should be careful not to show in tables or figures symmetric error bars that would yield results that are out of range (e.g. negative error rates).
        \item If error bars are reported in tables or plots, The authors should explain in the text how they were calculated and reference the corresponding figures or tables in the text.
    \end{itemize}

\item {\bf Experiments compute resources}
    \item[] Question: For each experiment, does the paper provide sufficient information on the computer resources (type of compute workers, memory, time of execution) needed to reproduce the experiments?
    \item[] Answer: \answerNA{} %
    \item[] Justification: The paper does not include experiments.
    \item[] Guidelines:
    \begin{itemize}
        \item The answer NA means that the paper does not include experiments.
        \item The paper should indicate the type of compute workers CPU or GPU, internal cluster, or cloud provider, including relevant memory and storage.
        \item The paper should provide the amount of compute required for each of the individual experimental runs as well as estimate the total compute. 
        \item The paper should disclose whether the full research project required more compute than the experiments reported in the paper (e.g., preliminary or failed experiments that didn't make it into the paper). 
    \end{itemize}
    
\item {\bf Code of ethics}
    \item[] Question: Does the research conducted in the paper conform, in every respect, with the NeurIPS Code of Ethics \url{https://neurips.cc/public/EthicsGuidelines}?
    \item[] Answer: \answerYes{} %
    \item[] Justification: Research conforms to the Code of Ethics.
    \item[] Guidelines:
    \begin{itemize}
        \item The answer NA means that the authors have not reviewed the NeurIPS Code of Ethics.
        \item If the authors answer No, they should explain the special circumstances that require a deviation from the Code of Ethics.
        \item The authors should make sure to preserve anonymity (e.g., if there is a special consideration due to laws or regulations in their jurisdiction).
    \end{itemize}

\item {\bf Broader impacts}
    \item[] Question: Does the paper discuss both potential positive societal impacts and negative societal impacts of the work performed?
    \item[] Answer: \answerYes{} %
    \item[] Justification: See Appendix~\ref{app: broader impacts}.
    \item[] Guidelines:
    \begin{itemize}
        \item The answer NA means that there is no societal impact of the work performed.
        \item If the authors answer NA or No, they should explain why their work has no societal impact or why the paper does not address societal impact.
        \item Examples of negative societal impacts include potential malicious or unintended uses (e.g., disinformation, generating fake profiles, surveillance), fairness considerations (e.g., deployment of technologies that could make decisions that unfairly impact specific groups), privacy considerations, and security considerations.
        \item The conference expects that many papers will be foundational research and not tied to particular applications, let alone deployments. However, if there is a direct path to any negative applications, the authors should point it out. For example, it is legitimate to point out that an improvement in the quality of generative models could be used to generate deepfakes for disinformation. On the other hand, it is not needed to point out that a generic algorithm for optimizing neural networks could enable people to train models that generate Deepfakes faster.
        \item The authors should consider possible harms that could arise when the technology is being used as intended and functioning correctly, harms that could arise when the technology is being used as intended but gives incorrect results, and harms following from (intentional or unintentional) misuse of the technology.
        \item If there are negative societal impacts, the authors could also discuss possible mitigation strategies (e.g., gated release of models, providing defenses in addition to attacks, mechanisms for monitoring misuse, mechanisms to monitor how a system learns from feedback over time, improving the efficiency and accessibility of ML).
    \end{itemize}
    
\item {\bf Safeguards}
    \item[] Question: Does the paper describe safeguards that have been put in place for responsible release of data or models that have a high risk for misuse (e.g., pretrained language models, image generators, or scraped datasets)?
    \item[] Answer: \answerNA{} %
    \item[] Justification: The paper poses no such risks.
    \item[] Guidelines:
    \begin{itemize}
        \item The answer NA means that the paper poses no such risks.
        \item Released models that have a high risk for misuse or dual-use should be released with necessary safeguards to allow for controlled use of the model, for example by requiring that users adhere to usage guidelines or restrictions to access the model or implementing safety filters. 
        \item Datasets that have been scraped from the Internet could pose safety risks. The authors should describe how they avoided releasing unsafe images.
        \item We recognize that providing effective safeguards is challenging, and many papers do not require this, but we encourage authors to take this into account and make a best faith effort.
    \end{itemize}

\item {\bf Licenses for existing assets}
    \item[] Question: Are the creators or original owners of assets (e.g., code, data, models), used in the paper, properly credited and are the license and terms of use explicitly mentioned and properly respected?
    \item[] Answer: \answerNA{} %
    \item[] Justification: The paper does not use existing assets.
    \item[] Guidelines:
    \begin{itemize}
        \item The answer NA means that the paper does not use existing assets.
        \item The authors should cite the original paper that produced the code package or dataset.
        \item The authors should state which version of the asset is used and, if possible, include a URL.
        \item The name of the license (e.g., CC-BY 4.0) should be included for each asset.
        \item For scraped data from a particular source (e.g., website), the copyright and terms of service of that source should be provided.
        \item If assets are released, the license, copyright information, and terms of use in the package should be provided. For popular datasets, \url{paperswithcode.com/datasets} has curated licenses for some datasets. Their licensing guide can help determine the license of a dataset.
        \item For existing datasets that are re-packaged, both the original license and the license of the derived asset (if it has changed) should be provided.
        \item If this information is not available online, the authors are encouraged to reach out to the asset's creators.
    \end{itemize}

\item {\bf New assets}
    \item[] Question: Are new assets introduced in the paper well documented and is the documentation provided alongside the assets?
    \item[] Answer: \answerNA{} %
    \item[] Justification: The paper does not release new assets.
    \item[] Guidelines:
    \begin{itemize}
        \item The answer NA means that the paper does not release new assets.
        \item Researchers should communicate the details of the dataset/code/model as part of their submissions via structured templates. This includes details about training, license, limitations, etc. 
        \item The paper should discuss whether and how consent was obtained from people whose asset is used.
        \item At submission time, remember to anonymize your assets (if applicable). You can either create an anonymized URL or include an anonymized zip file.
    \end{itemize}

\item {\bf Crowdsourcing and research with human subjects}
    \item[] Question: For crowdsourcing experiments and research with human subjects, does the paper include the full text of instructions given to participants and screenshots, if applicable, as well as details about compensation (if any)? 
    \item[] Answer: \answerNA{} %
    \item[] Justification: The paper does not involve crowdsourcing nor research with human subjects.
    \item[] Guidelines:
    \begin{itemize}
        \item The answer NA means that the paper does not involve crowdsourcing nor research with human subjects.
        \item Including this information in the supplemental material is fine, but if the main contribution of the paper involves human subjects, then as much detail as possible should be included in the main paper. 
        \item According to the NeurIPS Code of Ethics, workers involved in data collection, curation, or other labor should be paid at least the minimum wage in the country of the data collector. 
    \end{itemize}

\item {\bf Institutional review board (IRB) approvals or equivalent for research with human subjects}
    \item[] Question: Does the paper describe potential risks incurred by study participants, whether such risks were disclosed to the subjects, and whether Institutional Review Board (IRB) approvals (or an equivalent approval/review based on the requirements of your country or institution) were obtained?
    \item[] Answer: \answerNA{} %
    \item[] Justification: The paper does not involve crowdsourcing nor research with human subjects.
    \item[] Guidelines:
    \begin{itemize}
        \item The answer NA means that the paper does not involve crowdsourcing nor research with human subjects.
        \item Depending on the country in which research is conducted, IRB approval (or equivalent) may be required for any human subjects research. If you obtained IRB approval, you should clearly state this in the paper. 
        \item We recognize that the procedures for this may vary significantly between institutions and locations, and we expect authors to adhere to the NeurIPS Code of Ethics and the guidelines for their institution. 
        \item For initial submissions, do not include any information that would break anonymity (if applicable), such as the institution conducting the review.
    \end{itemize}

\item {\bf Declaration of LLM usage}
    \item[] Question: Does the paper describe the usage of LLMs if it is an important, original, or non-standard component of the core methods in this research? Note that if the LLM is used only for writing, editing, or formatting purposes and does not impact the core methodology, scientific rigorousness, or originality of the research, declaration is not required.
    \item[] Answer: \answerNA{} %
    \item[] Justification: The core method development in this research does not involve LLMs as any important, original, or non-standard components.
    \item[] Guidelines:
    \begin{itemize}
        \item The answer NA means that the core method development in this research does not involve LLMs as any important, original, or non-standard components.
        \item Please refer to our LLM policy (\url{https://neurips.cc/Conferences/2025/LLM}) for what should or should not be described.
    \end{itemize}

\end{enumerate}

\end{document}